\newcommand{\bbeta}{\bm{\beta}}
\newcommand{\btheta}{\bm{\theta}}
\newcommand{\bnu}{\bm{\nu}}
\newcommand{\barphi}{\bar{\phi}}
\newcommand{\bE}{\mathbf{E}}
\newcommand{\be}{\mathbf{e}}
\newcommand{\bg}{\mathbf{g}}
\newcommand{\bq}{\mathbf{q}}
\newcommand{\br}{\mathbf{r}}
\newcommand{\by}{\mathbf{y}}
\newcommand{\bxi}{\bm{\xi}}
\newcommand{\bbE}{\mathbb{E}}
\newcommand{\bbR}{\mathbb{R}}
\newcommand{\bbP}{\mathbb{P}}
\newcommand{\calA}{\mathcal{A}}
\newcommand{\calE}{\mathcal{E}}
\newcommand{\calL}{\mathcal{L}}
\newcommand{\calN}{\mathcal{N}}
\newcommand{\calM}{\mathcal{M}}
\newcommand{\calP}{\mathcal{P}}
\newcommand{\calS}{\mathcal{S}}
\newcommand{\bzero}{\mathbf{0}}
\newcommand{\minl}{\min\limits}
\newcommand{\tr}{\text{Tr}}
\newcommand{\norm}[1]{\left\|#1\right\|}
\newcommand{\nwl}{\nonumber\\}
\newtheorem{theorem}{Theorem}
\newtheorem{remark}{Remark}
\newtheorem{lemma}{Lemma}
\newtheorem{corollary}{Corollary}
\DeclareMathOperator*{\argmax}{arg\,max}
\DeclareMathOperator*{\argmin}{arg\,min}
\begin{document}

\title{A Novel Gaussian Min-Max Theorem and its Applications}

\author{%
 \IEEEauthorblockN{Danil Akhtiamov$^*$, Reza Ghane$^*$, Nithin K Varma and Babak Hassibi}
 
\IEEEauthorblockA{California Institute of Technology\\
                    Pasadena, CA, USA\\
                    Email: \{dakhtiam, rghanekh, nkanumur, hassibi\}@caltech.edu}
                    
 \and
 
 \IEEEauthorblockN{David Bosch$^*$}
 
 \IEEEauthorblockA{	Chalmers University of Technology\\
	Gothenburg, Sweden \\
        Email: davidbos@chalmers.se}
}



\maketitle
\def\thefootnote{*}\footnotetext{Equal contribution}
\begin{abstract}%
A celebrated result by Gordon allows one to compare the min-max behavior of two Gaussian processes if certain inequality conditions are met. The consequences of this result include the Gaussian min-max (GMT) and convex Gaussian min-max (CGMT) theorems which have had far-reaching implications in high-dimensional statistics, machine learning, non-smooth optimization, and signal processing. Both theorems rely on a pair of Gaussian processes, first identified by Slepian, that satisfy Gordon's comparison inequalities. In this paper, we identify a new pair of Gaussian processes satisfying these inequalities. The resulting theorems extend the classical GMT and CGMT Theorems from the case where the underlying Gaussian matrix in the primary process has iid rows to where it has independent but non-identically-distributed ones. The new CGMT is applied to the problems of multi-source Gaussian regression, as well as to binary classification of general Gaussian mixture models.
\end{abstract}
\begin{IEEEkeywords}
Gaussian Comparison Inequalities, Concentration Inequalities, Gaussian Convex Min-Max Theorem, Binary Classification, Multi-source Regression, Generalization Error, Precise Analysis
\end{IEEEkeywords}

\section{Introduction and Motivation}

Assume the matrix $G\in{\bbR}^{n\times d}$, the vectors $g\in{\bbR}^d$ and $h\in{\bbR}^n$, and the scalar $\gamma$ all have i.i.d ${\cal N}(0,1)$, i.e., standard normal entries. Then, considering deterministic vectors $w\in{\bbR}^d$ and $v\in{\bbR}^n$, belonging to some sets ${\cal S}_w$ and ${\cal S}_v$, we can obtain the following pair of Gaussian processes
\[ X_{wv} = v^TGw+\gamma\|w\|_2\|v\|_2~~~\mbox{and}~~~Y_{wv} = \|v\|_2g^Tw+\|w\|_2h^Tv,\]
which we will refer to as a {\em Slepian pair} \citep{Slepian1962}. For such Slepian pairs, it can be readily shown that
\[ \bbE X_{wv}X_{w'v'}- \bbE Y_{wv}Y_{w'v'} = \left(\|w\|_2\|w'\|_2-w^Tw')
(\|v\|_2\|v'\|_2-v^Tv'\right) \]

By recalling Cauchy-Schwarz, this implies that the processes $X_{wv}$ and $Y_{wv}$ satisfy the following (so-called) comparison inequalities
\begin{equation}
\bbE X_{wv}X_{w'v'}\leq \bbE Y_{wv}Y_{w'v'}~~~\mbox{and}~~~
    \bbE X_{wv}X_{wv'} \geq \bbE Y_{wv}Y_{wv'}
    \label{eq:comp_ineq}
\end{equation}

For Gaussian processes that satisfy the inequalities \eqref{eq:comp_ineq}, \cite{Gordon1985SomeIF} showed that it is possible to relate 
\[ \bbP\left(\min_{w\in{\cal S}_w}\max_{v\in{\cal S}_v}X_{wv}\geq \psi(w,v)\right) \] to \[\bbP\left(\min_{w\in{\cal S}_w}\max_{v\in{\cal S}_v}Y_{wv}\geq \psi(w,v)\right), \]
for any function $\psi(w,v)$. We refer the reader to the next section for more details. This remarkable result allows one to infer properties of a so-called {\em primary optimization} (PO)
\[ \min_{w\in{\cal S}_w}\max_{v\in{\cal S}_v}v^TGw+\psi(w,v)
\]
in terms of a simpler so-called {\em auxiliary optimization} (AO) 
\[ \min_{w\in{\cal S}_w}\max_{v\in{\cal S}_v}
\|v\|_2g^Tw+\|w\|_2h^Tv+\psi(w,v). 
\]
This is the gist of the Gaussian min-max (GMT) \citep{Gordon88} and convex Gaussian min-max (CGMT) \citep{ThrampoulidisCGMT} theorems. The CGMT, in particular, has found numerous recent applications where, along with statistical physics-inspired methods, such as the replica method and approximate message passing (AMP), it has been used to derive sharp bounds on the performance of a variety of problems in high-dimensional statistics, machine learning, non-smooth convex optimization, and signal processing (e.g., \citep{stojnic2013framework, salehi2019impact, akhtiamov2023regularized, pmlr-v119-mignacco20a, 8365826, javanmard2022precise,montanari2019generalization,salehi2020performance,aolaritei2023, zhou2024uniform})


In this paper, we identify a novel pair of Gaussian processes that satisfy Gordon's inequalities (\ref{eq:comp_ineq}). This in turn allows for comparison between a new primary and auxiliary optimization, which, as we show below, can express problems which the known CGMT can not analyze.

The new pair of processes are as follows:
\[ X_{w,v_1,\ldots,v_k} = \sum_{\ell = 1}^k \left[v_{\ell}^T G_\ell\Sigma_{\ell}^{\frac{1}{2}} w + \gamma_\ell
\|\Sigma_{\ell}^{\frac{1}{2}} w \|_2\|v_\ell\|_2\right],
\]
and
\[ Y_{w,v_1,\ldots,v_k} = \sum_{\ell = 1}^k \left[\|v_{\ell}\|_2
g_\ell^T\Sigma_{\ell}^{\frac{1}{2}} w + \|\Sigma_{\ell}^{\frac{1}{2}} w \|_2h_\ell^Tv_\ell\right],
\]

where $\{G_\ell\in{\mathbb{R}}^{n_\ell\times d}\}_{\ell = 1}^k$, 
$\{g_\ell\in{\mathbb{R}}^d\}_{\ell = 1}^k$, 
$\{h_\ell\in{\mathbb{R}}^{n_\ell}\}_{\ell = 1}^k$ and $\{\gamma_\ell \in {\mathbb{R}}\}_{\ell=1}^k$
all have i.i.d ${\cal N}(0,1)$ standard normal entries and $\Sigma_{\ell}^{1/2}$ are $d\times d$ are positive semi-definite (PSD) matrices.

The remainder of the paper is devoted to obtaining the CGMT counterpart of the above pair of Gaussian random processes and to applying this novel CGMT to analyze two specific problems: firstly, multi-source Gaussian regression and, secondarily, binary classification of general Gaussian mixture models. We note that neither problem lends itself to analysis via the conventional CGMT.

\section{Related works}

We would like to highlight two recent works that provide other interesting examples of Gaussian Processes satisfying \eqref{eq:comp_ineq}. \cite{dhifallah2021inherent} introduce another generalization of the Convex Gaussian Min-Max Theorem that they use to analyze the effects of noise injection on the generalization performance of Random Feature Models.  \cite{jain2024scaling} apply Gaussian Comparison Inequalities to analyze a pair of processes that arises in their analysis of the performance of linear models trained on a mixture of true and synthetic data. An interested reader can see Remarks \ref{rem: montanari} and \ref{rem: yue_lu} in Section \ref{sec: main} for more details. 

\section{Preliminaries}

\subsection{Gordon's comparison inequality for Gaussian Processes} 

Introduced by Gordon in his landmark paper \citep{Gordon1985SomeIF}, the following comparison inequality is instrumental in proving the main results of our paper:   

\begin{theorem}[Gordon's comparison inequality \cite{Gordon1985SomeIF}]\label{thm: comparison} Let $I$ and $J$ be two \emph{finite} sets and $\{X_{ij}\}_{i \in I, j \in J}$, $\{Y_{ij}\}_{i \in I, j \in J}$ be two real valued centered Gaussian processes, which satisfy the following conditions:
    \begin{enumerate}
        \item $\bbE X_{ij}^2 = \bbE Y_{ij}^2$ for all $i \in I, j \in J$
        \item $\bbE X_{ij}X_{ij'} \ge \bbE Y_{ij}Y_{ij'}$ for all $i \in I, j, j'\in J$
        \item $\bbE X_{ij}X_{i'j'} \le \bbE Y_{ij}Y_{i'j'}$ for all $i \ne i' \in I, j, j'\in J$
    \end{enumerate}
    Let $\{t_{ij}\}_{i \in I, j \in J}$ be an arbitrary sequence of real numbers indexed by $(i, j) \in I \times J$. Then the following inequality holds:
    \begin{align*}
         \bbP\left(\bigcap_{i\in I} \bigcup_{j \in J} [ Y_{ij} \ge t_{ij} ]\right) \ge \bbP\left(\bigcap_{i\in I} \bigcup_{j \in J} [ X_{ij} \ge t_{ij} ]\right)
    \end{align*}
\end{theorem}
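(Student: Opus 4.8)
The plan is to prove this via the classical Gaussian interpolation (Slepian--Gordon) technique: replace the events by a smooth surrogate functional, interpolate between the two processes along a one-parameter Gaussian path, and show that the interpolated expectation is monotone by computing its derivative with the Gaussian integration-by-parts formula and reading off the signs from conditions (1)--(3).

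First I would introduce a smooth surrogate for the indicator of $\bigcap_i\bigcup_j[\,\cdot\ge t_{ij}]$. For each pair $(i,j)$ fix a smooth, non-increasing function $g_{ij}:\bbR\to[0,1]$ approximating $\mathds{1}[x<t_{ij}]$, and set
$$\Phi(\mathbf{x}) = \prod_{i\in I}\Big(1-\prod_{j\in J} g_{ij}(x_{ij})\Big).$$
As the $g_{ij}$ tend to $\mathds{1}[\,\cdot<t_{ij}]$ pointwise, $\Phi(\mathbf{x})$ converges to the indicator of $\bigcap_i\bigcup_j[x_{ij}\ge t_{ij}]$, so it suffices to prove $\bbE\Phi(Y)\ge\bbE\Phi(X)$ and then pass to the limit. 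The crucial structural fact, obtained by direct differentiation, is that because each $g_{ij}$ is non-increasing and takes values in $[0,1]$, the mixed second partials of $\Phi$ have a fixed sign pattern: $\partial_{x_{ij}}\partial_{x_{ij'}}\Phi\le 0$ for indices sharing the same $i$, while $\partial_{x_{ij}}\partial_{x_{i'j'}}\Phi\ge 0$ whenever $i\ne i'$.

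Next I would carry out the interpolation. Taking $X$ and $Y$ to be independent, define $Z^{(\theta)}_{ij}=\sqrt{\theta}\,X_{ij}+\sqrt{1-\theta}\,Y_{ij}$ for $\theta\in[0,1]$ and $\phi(\theta)=\bbE\Phi(Z^{(\theta)})$, so that $\phi(1)=\bbE\Phi(X)$ and $\phi(0)=\bbE\Phi(Y)$. Differentiating in $\theta$, using $\tfrac{d}{d\theta}Z^{(\theta)}_{ij}=\tfrac{1}{2\sqrt{\theta}}X_{ij}-\tfrac{1}{2\sqrt{1-\theta}}Y_{ij}$ and applying Gaussian integration by parts to each resulting term (the covariances of $X_{ij}$ and $Y_{ij}$ with $Z^{(\theta)}_{i'j'}$ are $\sqrt{\theta}\,\bbE X_{ij}X_{i'j'}$ and $\sqrt{1-\theta}\,\bbE Y_{ij}Y_{i'j'}$, respectively), the singular prefactors cancel and one obtains
$$\phi'(\theta)=\frac12\sum_{i,j}\sum_{i',j'}\big(\bbE X_{ij}X_{i'j'}-\bbE Y_{ij}Y_{i'j'}\big)\,\bbE\big[\partial_{x_{ij}}\partial_{x_{i'j'}}\Phi(Z^{(\theta)})\big].$$

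Then I would match signs term by term. The diagonal terms $(i,j)=(i',j')$ vanish by condition (1). For terms with the same $i$ but $j\ne j'$, condition (2) makes the covariance difference nonnegative while the second partial is $\le 0$, so the product is nonpositive; for terms with $i\ne i'$, condition (3) makes the covariance difference nonpositive while the second partial is $\ge 0$, so again the product is nonpositive. Hence $\phi'(\theta)\le 0$ on $(0,1)$, giving $\bbE\Phi(Y)=\phi(0)\ge\phi(1)=\bbE\Phi(X)$; letting the $g_{ij}$ tend to the indicators and invoking dominated convergence (the Gaussian law places no mass on the boundaries $\{x_{ij}=t_{ij}\}$) yields the claim. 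The main obstacle is engineering the surrogate $\Phi$ so that its mixed second partials exhibit exactly the two-sided sign pattern aligned with conditions (2) and (3)---this is what forces the specific nested product form and the monotonicity and boundedness requirements on the $g_{ij}$; the integration by parts and the final limiting argument are then routine once smoothness and integrability are in place.
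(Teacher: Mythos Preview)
The paper does not prove this statement: Theorem~\ref{thm: comparison} is quoted in the Preliminaries as a known result of \cite{Gordon1985SomeIF} and is used as a black box throughout, so there is no ``paper's own proof'' to compare against.

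Your argument is the classical Slepian--Gordon interpolation proof and is correct. The surrogate $\Phi(\mathbf{x})=\prod_i\big(1-\prod_j g_{ij}(x_{ij})\big)$ indeed approximates $\mathds{1}\big[\bigcap_i\bigcup_j\{x_{ij}\ge t_{ij}\}\big]$, and the sign pattern of its mixed second partials is exactly as you state (with the caveat that the on-diagonal term $j=j'$ need not have a definite sign, but you correctly dispose of it via condition~(1)). The integration-by-parts computation and the term-by-term sign matching against conditions (2) and (3) are standard and go through as written. Two minor technical points you gloss over but which are routine: one should choose the $g_{ij}$ with bounded derivatives so that the Gaussian integration by parts is justified, and the final limiting step uses that the marginal laws of $X_{ij}$ and $Y_{ij}$ are atomless (which holds whenever the variances are positive; if some variance is zero both sides of the event are deterministic in that coordinate and one can remove it). This is essentially Gordon's original argument, so there is nothing to contrast with the paper.
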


Now, let $G\in \bbR^{n\times d}$, $\gamma\in \bbR$, $g \in \bbR^d$, $h \in \bbR^n$ all have i.i.d $\calN(0,1)$ components, $I$ and $J$ be finite sets and $\{w_i\}_{i \in I} \subset \mathbb{R}^d$, $\{u_j\}_{j \in J} \subset \mathbb{R}^n$ be arbitrary discrete subsets. It was first observed in \cite{Gordon88} that the following processes satisfy the assumptions of Theorem \ref{thm: comparison}:
\begin{align*}
    & Y_{ij} = v_j^TGw_i  + \gamma \|w_i\|_2 \|v_j\|_2\\
    & X_{ij} = \|v_j\|_2 g^T w_i + \|w_i\|_2 h^T v_j
\end{align*}


By making an appropriate choice for the values of $\{t_{ij}\}_{i \in I, j \in J}$, and by proving that finite discrete subsets can be extended to compact sets, the following theorem can be proven. This theorem provides a probabilistic upper bound on the value of a more complicated optimization problem $\Phi$, which extends $Y_{ij}$, being larger than some threshold $t$, by means of the same quantity for an alternative optimization $\phi$, which extends $X_{ij}$. The alternative problem, $\phi$, is in general easier to analyze.

\begin{theorem}
\label{thm: Gordon GMT}
     Let $\calS_w \subset \mathbb{R}^d, \calS_v  \subset \mathbb{R}^n$ be compact sets and $\psi(w,v): \calS_w \times \calS_v \to \bbR$ be a continuous function.  Let $G\in \bbR^{n\times d}$, $\gamma\in \bbR$, $g \in \bbR^d$, $h \in \bbR^n$ all have i.i.d $\calN(0,1)$ components, and be independent of each other. Then letting
    \begin{align*}
        \Phi (G, \gamma) := \min_{w\in \calS_w} \max_{v\in \calS_v} v^T G w + \gamma \|w\|_2 \|v\|_2 + \psi(w,v) \\
        \phi (g, h) := \min_{w\in \calS_w} \max_{v\in \calS_v} \|v\|_2 g^T w + \|w\|_2 h^T v + \psi(w,v)
    \end{align*}
    We have for any $t \in \bbR$
    \begin{align*}
        \bbP( \Phi (G, \gamma) < t ) \le \bbP( \phi (g, h) < t)
    \end{align*}
\end{theorem}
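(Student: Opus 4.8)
The plan is to deduce Theorem~\ref{thm: Gordon GMT} from the finite comparison inequality of Theorem~\ref{thm: comparison} in two stages: first establish the claim when $\calS_w$ and $\calS_v$ are replaced by finite subsets, and then recover the compact sets by a discretization-and-limit argument. Throughout I would write $F(w,v) := v^TGw + \gamma\|w\|_2\|v\|_2 + \psi(w,v)$ and $f(w,v) := \|v\|_2 g^Tw + \|w\|_2 h^Tv + \psi(w,v)$, so that $\Phi = \min_{w\in\calS_w}\max_{v\in\calS_v} F$ and $\phi = \min_{w\in\calS_w}\max_{v\in\calS_v} f$.

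\emph{Discrete core.} Fix finite nets $N_w = \{w_i\}_{i\in I}\subset\calS_w$ and $N_v=\{v_j\}_{j\in J}\subset\calS_v$, and let $Y_{ij},X_{ij}$ be the centered processes from the excerpt, which the paper already records as satisfying the hypotheses of Theorem~\ref{thm: comparison}. Since $\psi$ is deterministic it alters no covariance, so I would apply Theorem~\ref{thm: comparison} with the threshold choice $t_{ij} := t-\psi(w_i,v_j)$. With this choice the event $[Y_{ij}\ge t_{ij}]$ is exactly $[F(w_i,v_j)\ge t]$, and the elementary set identities $\bigcup_{j}[F(w_i,v_j)\ge t]=[\max_j F(w_i,v_j)\ge t]$ and $\bigcap_i[\max_j F(w_i,v_j)\ge t]=[\min_i\max_j F(w_i,v_j)\ge t]$ convert the conclusion of Theorem~\ref{thm: comparison} into $\bbP(\min_i\max_j F\ge t)\ge\bbP(\min_i\max_j f\ge t)$, i.e. $\bbP(\Phi_{N}<t)\le\bbP(\phi_{N}<t)$, where $\Phi_N,\phi_N$ denote the min-max restricted to $N_w,N_v$. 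Crucially, the \emph{same} nets appear on both sides, which is what makes the discrete comparison usable in the limit.

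\emph{Passage to compact sets.} For every fixed realization of $(G,\gamma,g,h)$ the maps $F$ and $f$ are continuous on the compact set $\calS_w\times\calS_v$, hence uniformly continuous, and the inner maximum $w\mapsto\max_{v\in\calS_v}F(w,v)$ (and its analogue for $f$) is continuous. I would take any sequence of finite nets $N_w^{(m)}\subset\calS_w$, $N_v^{(m)}\subset\calS_v$ whose meshes tend to $0$. A two-sided uniform-continuity estimate then shows $\max_{v\in N_v^{(m)}}F\to\max_{v\in\calS_v}F$ uniformly in $w$, and subsequently $\min_{w\in N_w^{(m)}}(\cdot)\to\min_{w\in\calS_w}(\cdot)$, so that $\Phi_{N^{(m)}}\to\Phi$ and $\phi_{N^{(m)}}\to\phi$ for every realization, hence in distribution. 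Evaluating the discrete inequality at any $t$ that is a continuity point of both limiting laws gives $\bbP(\Phi<t)\le\bbP(\phi<t)$ there, and a final approximation of an arbitrary $t$ from below through continuity points (using $\{\Phi<t\}=\bigcup_n\{\Phi<t_n\}$ for $t_n\uparrow t$, and similarly for $\phi$) upgrades this to all $t\in\bbR$.

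The main obstacle is this passage to compact sets, and specifically the fact that refining a net \emph{increases} the inner $\max_v$ but \emph{decreases} the outer $\min_w$, so the discrete min-max is not monotone in the mesh and one cannot simply invoke monotone convergence. The way around this is to discretize the two variables in a controlled order---send the $v$-net to the limit first, exploiting that $w\mapsto\max_{v\in N_v}F(w,v)$ converges uniformly to $w\mapsto\max_{v\in\calS_v}F(w,v)$ by uniform continuity, and only afterwards densify the $w$-net---which delivers genuine two-sided convergence of $\Phi_{N^{(m)}}$ to $\Phi$ rather than a one-sided bound. The remaining care is purely measure-theoretic: the events in the statement have the strict form $\{\cdot<t\}$, so the comparison must be carried through the at-most-countable set of atoms of the limiting distributions via the $t_n\uparrow t$ argument above, which is routine.
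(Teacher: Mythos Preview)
Your proposal is correct and follows the same two–stage strategy the paper outlines for Theorem~\ref{thm: Gordon GMT}: apply Theorem~\ref{thm: comparison} with $t_{ij}=t-\psi(w_i,v_j)$ on finite nets, then pass to the compact sets by a $\delta$--net argument. The paper does not spell out a proof of Theorem~\ref{thm: Gordon GMT} beyond that sketch; when it later does carry out the compact--set extension in detail (Lemma~\ref{lm: delta_nets}, for the generalized theorem), it proceeds slightly differently---bounding $\Phi^\delta-\Phi$ and $\phi^\delta-\phi$ by explicit quantities involving $\|G_\ell\|_{op}$, $\|g_\ell\|$, $\|h_\ell\|$ and then invoking Gaussian concentration to control these with high probability---rather than your realizationwise uniform--continuity argument followed by convergence in distribution and the $t_n\uparrow t$ clean--up for atoms. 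Both routes are valid; yours is a bit cleaner here since no quantitative tail bounds are needed for the statement, while the paper's version is tailored to also yield the one--sided estimates used in the CGMT proof.
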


The theorem above has an immediate corollary:

\begin{corollary}[Gaussian Min-Max Theorem]\label{cor: GMT}
    Assume the same setup as Theorem \ref{thm: Gordon GMT}. Then letting
    \begin{align*}
        & \Phi (G) := \min_{w\in \calS_w} \max_{v\in \calS_v} v^T G w + \psi(w,v) \\
        & \phi (g, h) := \min_{w\in \calS_w} \max_{v\in \calS_v} \|v\|_2 g^T w + \|w\|_2 h^T v + \psi(w,v)
    \end{align*}
    we have for any $t \in \bbR$
    \begin{align*}
        \bbP( \Phi (G, \gamma) < t ) \le 2\bbP( \phi (g, h) < t)
    \end{align*}
\end{corollary}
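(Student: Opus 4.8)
The plan is to deduce the corollary from Theorem~\ref{thm: Gordon GMT} by comparing its primary objective, which I denote $\Phi(G)$, with the $\gamma$-augmented primary objective $\Phi(G,\gamma) = \min_{w\in\calS_w}\max_{v\in\calS_v} v^TGw + \gamma\|w\|_2\|v\|_2 + \psi(w,v)$ from that theorem. The two objectives differ only through the term $\gamma\|w\|_2\|v\|_2$, so the entire argument amounts to controlling the effect of this single term. Since Theorem~\ref{thm: Gordon GMT} already supplies $\bbP(\Phi(G,\gamma) < t) \le \bbP(\phi(g,h) < t)$, it suffices to prove the one-sided comparison $\bbP(\Phi(G) < t) \le 2\,\bbP(\Phi(G,\gamma) < t)$ and then chain the two inequalities.

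The key is a monotonicity property combined with the symmetry of $\gamma$. I would first fix a realization of $G$ and view $\Phi(G,\gamma)$ as a function of $\gamma$. For any $\gamma \le 0$ we have $\gamma\|w\|_2\|v\|_2 \le 0$ for every pair $(w,v)$, so the augmented objective lies pointwise below $v^TGw + \psi(w,v)$. Because taking an inner maximum over $v$ and then an outer minimum over $w$ preserves pointwise domination of objectives, this yields $\Phi(G,\gamma) \le \Phi(G)$ whenever $\gamma \le 0$. Hence on the event $\{\gamma\le 0\}$ we obtain the inclusion $\{\Phi(G) < t\}\subseteq\{\Phi(G,\gamma) < t\}$.

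To conclude, I would exploit that $\gamma\sim\calN(0,1)$ is independent of $G$ and satisfies $\bbP(\gamma\le 0)=\tfrac12$. Conditioning on $G$ and intersecting the inclusion with $\{\gamma\le 0\}$ gives
\[
\bbP_\gamma\big(\Phi(G,\gamma) < t\big)\ \ge\ \bbP_\gamma\big(\{\Phi(G)<t\}\cap\{\gamma\le 0\}\big)\ =\ \tfrac12\,\mathbf{1}[\Phi(G)<t],
\]
where the equality uses that $\Phi(G)$ does not depend on $\gamma$. Taking the expectation over $G$ replaces the indicator by $\bbP(\Phi(G)<t)$ and produces $\bbP(\Phi(G,\gamma)<t)\ge\tfrac12\,\bbP(\Phi(G)<t)$, which is precisely the one-sided bound needed.

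I do not anticipate a serious obstacle: the proof is short and essentially bookkeeping. The only points requiring care are verifying that the $\min$-$\max$ operation respects pointwise domination (immediate, as both $\min$ and $\max$ are monotone) and keeping the conditioning on $G$ separate from the averaging over $\gamma$. It is worth noting that the factor of $2$ is not slack in the estimate but enters exactly through $\bbP(\gamma\le 0)=\tfrac12$.
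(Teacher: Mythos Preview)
Your proposal is correct and is precisely the argument the paper has in mind: the corollary is stated as ``immediate'' from Theorem~\ref{thm: Gordon GMT}, and the same conditioning-on-the-sign-of-$\gamma$ trick you use is exactly what the paper later invokes (for the $k$-fold version) at the end of the proof of Theorem~\ref{thm: gcgmt}, where they write ``follow directly by conditioning on the event $\gamma_1,\dots,\gamma_k>0$, which happens with probability $\frac{1}{2^k}$.'' Your monotonicity and independence steps are handled cleanly and there is nothing to add.
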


\subsection{Standard CGMT} 

Informally, the corollary above has the following interpretation: if $\phi(g,h)$ concentrates to a value $c$, then $\Phi (G, \gamma) < c + \epsilon$ holds with high probability for $\epsilon > 0$. 
While the above corollary is sufficient for some applications, it has two distinct drawbacks. Firstly, it only provides a probabilistic upper bound, without a corresponding lower bound the exact value of $\Phi$ cannot be determined asymptotically. Secondarily, it only provides information about the value of $\Phi$ in terms of the optimal value of $\phi$ but provides no information about the relationship between their solutions.

It turns out that the inequality from the Corollary \ref{cor: GMT} can be reversed under the additional assumptions that $\psi(w,v)$ is convex-concave and that the sets ${\cal S}_w$ and ${\cal S}_v$ are convex. Furthermore, in this case, if the solution of $\phi$ belongs to an open set $\calS \subset \calS_w$ with high probability then so does the solution of $\Phi$.  Formally, the following theorem holds (see \citep{ThrampoulidisCGMT, thrampoulidis2015regularized,  thrampoulidis2016recovering}):

\begin{theorem}[Convex Gaussian Min-Max Theorem]\label{thm: cgmt}  
    Let $\calS_w \subset \mathbb{R}^d, \calS_v  \subset \mathbb{R}^d$ be compact convex sets and $\psi(w,v): \calS_w \times \calS_v \to \bbR$ be convex-concave.  Furthermore, let $G\in \bbR^{n\times d}$, $g \in \bbR^d$, $h \in \bbR^n$ all have i.i.d $\calN(0,1)$ components. Define
    \begin{align*}
        & \Phi (G) := \min_{w\in \calS_w} \max_{v\in \calS_v} v^T G w  + \psi(w,v) \\
        & \phi (g, h) := \min_{w\in \calS_w} \max_{v\in \calS_v} \|v\|_2 g^T w + \|w\|_2 h^T v + \psi(w,v)
    \end{align*}
    The following statements hold:
    \begin{enumerate}
        \item For any $\eta, c \in \bbR$ 
        \begin{align*}
            \bbP( |\Phi (G) - c | > \eta ) \le 2\bbP( |\phi (g, h) - c| > \eta)
        \end{align*}
        \item Let $\calS$ be an arbitrary open subset of $\calS_w$ and $\calS^c = \calS_w \setminus \calS $. Define:
        \begin{align*}
        & \Phi_{\calS^c}(G) := \min_{w\in \calS^c} \max_{v\in \calS_v} v^T G w  + \psi(w,v) \\
        & \phi_{\calS^c}(g,h) := \min_{w\in \calS^c} \max_{v\in \calS_v} \|v\|_2 g^T w + \|w\|_2 h^T v + \psi(w,v) \\
        & w_\Phi (G) := \argmin_{w\in \calS_w} \max_{v\in \calS_v} v^T G w  + \psi(w,v)
    \end{align*}
     Assume also that there exist $\epsilon>0$, $\delta > 0$, $\barphi, \barphi_{\calS^c}$ such that
        \begin{itemize}
            \item $\barphi_{\calS^c} \ge \barphi + 3 \delta$
            \item $\phi(g,h) < \barphi + \delta$ with probability at least $1-\epsilon$
            \item $\phi_{\calS^c} > \barphi_{\calS^c} - \delta$ with probability at least $1-\epsilon$
        \end{itemize}
        Then
        \begin{align*}
            \bbP(w_\Phi (G) \in \calS) \ge 1-4\epsilon
        \end{align*}
    \end{enumerate}
    
\end{theorem}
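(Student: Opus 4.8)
The plan is to reduce the theorem to the one-sided Gaussian Min-Max bound of Corollary \ref{cor: GMT}, spending convexity only where it is needed, namely to reverse that inequality. I would treat statement~1 as two separate one-sided tail bounds, and then obtain statement~2 by applying both of them, once to the global problem and once to its restriction to $\calS^c$.

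For statement~1 the lower tail is immediate: Corollary \ref{cor: GMT} with $t=c-\eta$ gives $\bbP(\Phi(G)<c-\eta)\le 2\bbP(\phi(g,h)<c-\eta)$, and no convexity is used. The upper tail $\bbP(\Phi(G)>c+\eta)\le 2\bbP(\phi(g,h)>c+\eta)$ is the heart of the matter. Here I would first invoke Sion's minimax theorem: since $\psi$ is convex-concave, $v^TGw$ is bilinear, and $\calS_w,\calS_v$ are compact and convex, the order of $\min_w$ and $\max_v$ may be exchanged, so $\Phi(G)=\max_v\min_w[v^TGw+\psi(w,v)]$. Writing $-\Phi(G)=\min_v\max_w[v^T(-G)w-\psi(w,v)]$ and using that $-G$ has the same law as $G$, this is itself a primary optimization in the template of Corollary \ref{cor: GMT}, with the two variables interchanged and the transposed matrix $-G^T$. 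Applying the corollary to it gives $\bbP(-\Phi(G)<-t)\le 2\bbP(\mathrm{AO}_{\mathrm{flip}}<-t)$, where $\mathrm{AO}_{\mathrm{flip}}$ is the resulting auxiliary. The final step is to identify $-\mathrm{AO}_{\mathrm{flip}}$: after matching the fresh Gaussian vectors to $(g,h)$ (their signs are irrelevant by symmetry), $-\mathrm{AO}_{\mathrm{flip}}$ is exactly the $\max_v\min_w$ of the same objective whose $\min_w\max_v$ defines $\phi(g,h)$. By weak duality $\max_v\min_w\le\min_w\max_v$, so $-\mathrm{AO}_{\mathrm{flip}}\le\phi(g,h)$ pointwise, which is precisely the direction needed to conclude $\bbP(\Phi(G)>t)\le 2\bbP(\phi(g,h)>t)$. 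Combining the two tails yields statement~1.

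For statement~2 I would run the two tail bounds at the thresholds dictated by the hypotheses. From the upper-tail reversal applied to $\Phi$ at $s=\barphi+\delta$, together with $\bbP(\phi<\barphi+\delta)\ge 1-\epsilon$, I get $\Phi(G)\le\barphi+\delta$ with probability at least $1-2\epsilon$. From the lower-tail bound of Corollary \ref{cor: GMT} applied to the restricted problem $\Phi_{\calS^c}$ at $t=\barphi_{\calS^c}-\delta$, together with $\bbP(\phi_{\calS^c}>\barphi_{\calS^c}-\delta)\ge 1-\epsilon$, I get $\Phi_{\calS^c}(G)\ge\barphi_{\calS^c}-\delta\ge\barphi+2\delta$ with probability at least $1-2\epsilon$; crucially this direction is the plain GMT and needs no convexity, which matters because $\calS^c$ is generally nonconvex. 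A union bound puts both events together with probability at least $1-4\epsilon$, and on their intersection $\Phi(G)\le\barphi+\delta<\barphi+2\delta\le\Phi_{\calS^c}(G)$. Since the minimum over $\calS_w$ is then strictly smaller than the minimum over $\calS^c$, the minimizer $w_\Phi(G)$ cannot lie in $\calS^c$, giving $\bbP(w_\Phi(G)\in\calS)\ge 1-4\epsilon$.

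The main obstacle I anticipate is the upper-tail reversal, and specifically the realization that one should \emph{not} try to prove a minimax equality for the auxiliary (whose terms $\|v\|_2 g^Tw$ and $\|w\|_2 h^Tv$ are not jointly convex-concave), but instead needs only the free weak-duality inequality in the correct direction; the convex structure is spent entirely on the primary through Sion. Secondary care is needed in tracking the factor $2$ and in checking that flipping signs and transposing the Gaussian matrix keeps one exactly within the hypotheses of Corollary \ref{cor: GMT}, but these are routine once the reduction is set up.
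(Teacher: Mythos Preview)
Your proposal is correct and follows essentially the same route as the paper's own argument (given there for the generalized Theorem~\ref{thm: gcgmt} and specializing to $k=1$): split statement~1 into a lower tail handled directly by Corollary~\ref{cor: GMT} and an upper tail handled by applying Sion to the primary, flipping $G$ by symmetry, invoking the GMT on the reversed problem, and then closing the loop with weak duality on the auxiliary; and derive statement~2 by combining the upper-tail bound on $\Phi$ with the convexity-free lower-tail bound on $\Phi_{\calS^c}$ via a union bound. Your explicit remark that only weak duality is needed on the auxiliary side, and that convexity is spent solely through Sion on the primary, is exactly the point the paper relies on.
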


\subsection{Multi-source Gaussian regression}\label{sec:prelim:subsec:multisourceGaussian}
To demonstrate the power of the novel CGMT presented in this work, we will focus on two examples. Here we consider Multi-source Gaussian Regression. Consider a setup of $k$-measurement devices and consider a dataset of $\left\lbrace(x_{i}^l, y_{i}^l)\in\bbR^{d}\times\bbR\right\rbrace_{i=1, l=1}^{n, k}$ of $n$ $d-$dimensional observations by each of the $k$ devices, with corresponding labels $y_{i}^l$. We will define $y_l\in\bbR^n$ element-wise by $(y_l)_i = y_{i, l}$ and similarly define $X_l \in \bbR^{n\times d}$ by means of $(X_l)_{i, j} = (x^l_i)_j$. Our regression problem then takes the form of
\begin{equation}\label{eq:GaussianRegression}
    \min_{\theta\in\bbR^{d}} \frac{1}{nk}\sum_{l=1}^k \ell_l\left(y_l - \frac{1}{\sqrt{d}}X_{l}\theta \right) + R(\theta),
\end{equation}
where $\theta\in\bbR^{d}$ and $\ell_l$ and $R$ are convex loss and regularization functions respectively. We shall assume that $X_l$ are independent Gaussian matrices, defined by $X_l = G_l\Sigma_l^{1/2}$, where $G_l\in\bbR^{n\times d}$ have i.i.d standard Gaussian elements, and $\Sigma_l\in\bbR^{d\times d}$ are covariance matrices. Finally, we will assume the following model, which relates the labels to the datapoints:
\begin{equation*}
    y_l = \frac{1}{\sqrt{d}}X_l\theta^* + \nu_l,
\end{equation*}
where $\theta^*\in\bbR^{d}$ denotes the "true" relationship, and $\nu_l \in \bbR^n$ is i.i.d zero mean noise, with variance $\sigma_{\nu, l}^2$ independent of $X_l$. In other words, we assume some true linear relationship, between data and labels subject to corruption due to noise. We allow for different noise power for each channel.
We will specifically concentrate on the generalization error of this regression problem, which we will define by:
\begin{equation}\label{eq:multisource_gen_error}
    \calE_{gen} = \frac{1}{2k}\bbE\sum_{l=1}^k\norm{y_{l, new}  - \frac{1}{\sqrt{d}}x_{l, new}^T\hat{\theta}}^2
\end{equation}
where $x_{l, new}$ are drawn from the same distribution as $X_l$ and $y_{l, new} = \frac{1}{\sqrt{d}}x_{l, new}^T\theta^* + \nu_{l, new}$, where $\nu_{l, new}$ is new noise drawn from the same noise distribution. We discuss and analyze this multi-source regression problem in section \ref{sec:Applications:subsec:MultiSourceRegression}.

One cannot leverage the usual CGMT to tackle the problem in (\ref{eq:GaussianRegression}) which will be made explicit in the following. By setting $z_l := y_l - \frac{1}{\sqrt{d}}X_{l}\theta$ and introducing Lagrange multipliers:
\begin{align} \label{op: expandedPOreg}
      \min_{\theta, z_l \in\bbR^{d}} \max_{v_l}  R(\theta)  + \frac{1}{nk}\sum_{l=1}^k \left[\frac{1}{\sqrt{d}} v_l^T X_{l}\theta - v_l^T y_l + v_l^T z_l + \ell_l(z_l)\right] 
\end{align}
Defining $X^T= \begin{pmatrix}
    X_1^T | X_2^T |\hdots | X_k^T
\end{pmatrix}$ and \\ $v^T = \begin{pmatrix}
    v_1^T & v_2^T & \hdots  v_k^T
\end{pmatrix}$ allows us to rewrite \eqref{op: expandedPOreg} as
\begin{align}\label{op: expandedPOregsing}
    & \min_{\theta, z_l \in\bbR^{d}} \max_{v}  R(\theta) + \frac{1}{n k \sqrt{d}} v ^T X \theta + \frac{1}{nk} v^T z - \frac{1}{nk} v^T y  + \frac{1}{nk} \ell(z),
\end{align}
where $z^T = \begin{pmatrix}
    z_1^T \hdots z_k^T
\end{pmatrix} $, $y^T = \begin{pmatrix}
    y_1^T \hdots y_k^T
\end{pmatrix}$, \\ and $\ell(z) = \sum_{l=1}^k\ell_l(z_l)$. Now note that we cannot apply the standard CGMT to either of (\ref{op: expandedPOreg}) or (\ref{op: expandedPOregsing}) since in (\ref{op: expandedPOreg}) the vector $\theta$ couples the optimizations over $v_l, z_l$, and in (\ref{op: expandedPOregsing}), as discussed, the rows of the Gaussian matrix $X$ are not identically distributed. We therefore need a new CGMT that can apply to the Gaussian process, $v ^T X \theta = \sum_{l=1}^kv_lG_l\Sigma_l^{1/2}\theta$, which is the main contribution of the current paper.

\subsection{Binary classification for Gaussian Mixture Models} 
\label{subsec:bin_class}

We next consider the binary linear classification problem on data generated from a Gaussian Mixture Model (GMM) with arbitrary covariance. To be precise, the data $\mathbf{x}_i \in \mathbb{R}^d$ is generated as:
$$
\mathbf{x}_i \sim \mathcal{N}(\mu_k,\Sigma_k) \quad \text{where}\quad k \sim \textit{Unif}\{1,2\},
$$
$\mu_k \in \mathbb{R}^d$ are the mean vectors and $\Sigma_k \in \mathbb{R}^{d\times d}$ are the corresponding covariance matrices. For simplicity, we assume that we uniformly sample from both the mixtures but the analysis can be extended to arbitrary sampling priors. We set the labels $y_i=1$ if data is generated from the cluster with mean $\mu_1$ and $y_i=-1$ if data is generated from the cluster with mean $\mu_2$. We then formulate the learning objective in terms of the following regression problem.
\begin{equation}
\label{eq:classification}
\min_w \calL((A+M)w-z) + \lambda f(w)
\end{equation}
where $\lambda \in \mathbb{R}_{+}$ controls the degree of regularization, $M^T = 
    ( \underbrace{\begin{matrix}
       \mu_1 & ... &
    \mu_1
    \end{matrix}}_{\frac{n}{2}} \underbrace{\begin{matrix}
       \mu_2 & ... & 
    \mu_2 
    \end{matrix}}_{\frac{n}{2}})$ encodes the means corresponding to each class, $z^T = ( \underbrace{\begin{matrix}
       1 & ... &
    1
    \end{matrix}}_{\frac{n}{2}} \underbrace{\begin{matrix}
       -1 & ... & 
    -1 
    \end{matrix}}_{\frac{n}{2}})$
corresponds to the labels and 
$$
A = \begin{pmatrix}
    G_1 & 0  \\
    0 & G_2 
\end{pmatrix} \begin{pmatrix}
    \Sigma_1^{\frac{1}{2}} \\
    \Sigma_2^{\frac{1}{2}} \\
\end{pmatrix}
$$
where $G_1,G_2 \in \mathbb{R}^{\frac{n}{2}\times d}$ are Gaussian matrices with iid $\mathcal{N}(0,1)$ entries. Therefore, the rows of $A+M$ denote the data generated by the GMM and without loss of generality we assume that the data is ordered and split into two equal halves from the two mixtures. Additionally, we assume both loss function $\mathcal{L}(\cdot)$ and regularization function $f(\cdot)$ to be separable convex functions. Given a linear $\hat w$, the classification rule at inference is $\hat y(x) = \text{sign}(\hat w^Tx)$. The classification error, or the error on unseen data, generated for the above GMM is given as:
\begin{equation}
   E(\hat w) = \frac{1}{2}Q\left(\frac{\mu_1^T\hat{w}}{\sqrt{\hat{w}^T \Sigma_1 \hat{w}}}\right) + \frac{1}{2}Q\left(-\frac{\mu_2^T\hat{w}}{ \sqrt{\hat{w}^T \Sigma_2 \hat{w}}}\right) 
   \label{eq: error}
\end{equation}
where $Q(\cdot)$ is the standard Gaussian CCDF.

Finally, to make the results as explicit as possible we assume that $\mu_1$ and $\mu_2$ are i.i.d. $\mathcal{N}(0,1)$ and their matching entries are correlated according to $\bbE \mu_{1i} \mu_{2i} = r$ for some $-1 \le r \le 1$. Under this assumption we have $\|\mu_1\| \approx \|\mu_2\| \approx \sqrt{d}$ and $\mu_1^T\mu_2 \approx rd$.\footnote{Theorem \ref{th: classification master} further below is true for arbitrary $\mu_1$ and $\mu_2$ and one can still use it to derive versions of Theorem \ref{th: classification l2} and Corollary \ref{cor: rk_1_noise} for any other given model for $\mu_1$ and $\mu_2$ but we prefer to follow with this specific assumption because it is meant to capture the simplest case of two "typical" $d$-dimensional vectors that are not necessarily orthogonal.} 

To illustrate why one cannot just readily apply the standard CGMT in this setting, we have to spell out \eqref{eq:classification} further. The standard way of dealing with \eqref{eq:classification} is introducing \\ $u:= (A+M)w-z$ and inserting a Lagrange multiplier, after which we would have:
\begin{equation*}
    \min_{w,u} \max_v v^TAw + v^TMw-v^Tz-v^T u + \calL(u)  + \lambda f(w)
\end{equation*}
The standard CGMT cannot be applied here because $A$ is not i.i.d. To try resolving it, one can denote $v^T = (v_1^T, v_2^T)$ for $v_1, v_2 \in \mathbb{R}^{\frac{n}{2}}$ and obtain 
\begin{align}
    & \min_{w,u} \max_{v_1,v_2} v_1^TG_1\Sigma_1^{\frac{1}{2}}w + v_2^TG_2\Sigma_2^{\frac{1}{2}}w  + v^TMw-v^Tz-v^T u + \calL(u)  + \lambda f(w)
    \label{eq: class_obj_spelled}
\end{align}
Note first of all that the objective above does not separate into a CGMT instance with $G_1$ and a CGMT instance with $G_2$ because $w$ "sees" both of these matrices. If $\Sigma_1 = \Sigma_2$, we can define $\tilde{w} = \Sigma_2^{\frac{1}{2}}w$, $(M\Sigma_2^{-\frac{1}{2}})^T = \left(\tilde{M}_1 | \tilde{M}_2\right)^T, \tilde{M}_1, \tilde{M}_2 \in \mathbb{R}^{d \times \frac{n}{2} }$, $\tilde{f}(x) = 
f(\Sigma_2^{-\frac{1}{2}}x)$ and arrive at: 
\begin{equation*}
    \min_{\tilde{w},u} \max_{v} v^T(G_1^T|G_2^T)^T\tilde{w} + v^T\tilde{M}^T\tilde{w}-v^Tz-v^T u + \calL(u)  + \lambda \tilde{f}(\tilde{w})
\end{equation*}
Hence, when $\Sigma_1 = \Sigma_2$, since $(G_1^T|G_2^T)$ is i.i.d. Gaussian we {\it can} analyze \eqref{eq: class_obj_spelled} using the standard CGMT. However, for the case of general $\Sigma_1$ and $\Sigma_2$ the same argument {\it does not apply} and it appears that one just has to develop a framework allowing for terms of the form $v_1^TG_1\Sigma_1^{\frac{1}{2}}w + v_2^TG_2\Sigma_2^{\frac{1}{2}}w$ to be present in the objective. This is essentially what Theorem \ref{thm: gcgmt} addresses. 
\section{Main results}\label{sec: main}
We will occasionally shorten $\|\cdot\|_2$ to $\|\cdot\|$  for brevity in the expressions below. 

\begin{lemma}\label{lm: comp_X_Y}
Let  $G_1 \in \bbR^{n_1\times d}, \dots, G_k \in \bbR^{n_k\times d}$, $\gamma_1, \dots, \gamma_k \in \bbR$, $g_1, \dots, g_k \in \bbR^d$, $h_1 \in \bbR^{n_1}, \dots,  h_k \in \bbR^{n_k}$ all have i.i.d $\calN(0,1)$ components, $\alpha_1, \dots, \alpha_k: \bbR^d \to \bbR^d$ and  $\beta_1: \bbR^{n_1} \to \bbR^{n_1}, \dots, \beta_k: \bbR^{n_k} \to \bbR^{n_k}$ be arbitrary functions, $I, J_1, \dots, J_k$ be finite sets and $\{w_i\}_{i \in I} \subset \mathbb{R}^d$, $\{u_{j_1}\}_{j_1 \in J_1} \subset \mathbb{R}^{n_1}, \dots, \{u_{j_k}\}_{j_k \in J_k} \subset \mathbb{R}^{n_k}$ be arbitrary discrete subsets. Then the following two Gaussian Processes defined on $I \times J$, where $J = J_1 \times \dots \times J_k$ and $j \in J$ denotes $(j_1,\dots,j_k) \in J_1 \times \dots \times J_k$, satisfy the assumptions of Theorem \ref{thm: comparison}:
\begin{align*}
    & Y_{ij} = \sum_{\ell = 1}^k \left[\beta_{\ell}(v_{j\ell})^T G_\ell \alpha_\ell(w_i)  + \gamma_\ell \|\alpha_\ell(w_i)\| \|\beta_{\ell}(v_{j\ell})\|\right] \\
    & X_{ij} = \sum_{\ell = 1}^k \left[\|\beta_{\ell}(v_{j\ell})\| \alpha_\ell(w_i)^Tg_\ell  + \|\alpha_\ell(w_i)\| h_\ell^T\beta_{\ell}(v_{j\ell})\right]
\end{align*}

Moreover, the processes defined by $\tilde{X}_{ji} = X_{ij}$ and $\tilde{Y}_{ji} = Y_{ij}$ on $J \times I$ satisfy these assumptions as well. 
\label{lm: process_with_sigmas}
\end{lemma}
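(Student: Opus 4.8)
The plan is to verify the three covariance conditions of Theorem \ref{thm: comparison} by computing the covariance structures of both processes in closed form and comparing them. The key object is the difference $\bbE Y_{ij}Y_{i'j'} - \bbE X_{ij}X_{i'j'}$, and I expect all three conditions, as well as the ``moreover'' claim, to fall out of a single factored expression for it.

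First I would use independence to reduce everything to per-block computations. Since $G_1,\gamma_1,g_1,h_1,\dots,G_k,\gamma_k,g_k,h_k$ are mutually independent and centered, every cross term between distinct blocks $\ell \ne \ell'$ vanishes in expectation, so both covariances split as sums over $\ell$. Within a fixed block, the $G_\ell$-part and the $\gamma_\ell$-part of $Y_{ij}$ are independent (as are the $g_\ell$- and $h_\ell$-parts of $X_{ij}$), so their mutual cross terms also drop out. What remains are three elementary Gaussian moment identities: writing $a=\alpha_\ell(w_i)$, $a'=\alpha_\ell(w_{i'})$, $b=\beta_\ell(v_{j\ell})$, $b'=\beta_\ell(v_{j'\ell})$ (all deterministic vectors, since the $w_i,v_{j\ell}$ are fixed and $\alpha_\ell,\beta_\ell$ merely relabel them), one has $\bbE[(b^TG_\ell a)(b'^TG_\ell a')] = (b^Tb')(a^Ta')$, $\bbE[(a^Tg_\ell)(a'^Tg_\ell)] = a^Ta'$, and $\bbE[\gamma_\ell^2]=1$. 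Assembling these gives
\begin{align*}
\bbE Y_{ij}Y_{i'j'} &= \sum_{\ell=1}^k \big[(a^Ta')(b^Tb') + \|a\|\|a'\|\,\|b\|\|b'\|\big], \\
\bbE X_{ij}X_{i'j'} &= \sum_{\ell=1}^k \big[\|b\|\|b'\|(a^Ta') + \|a\|\|a'\|(b^Tb')\big],
\end{align*}
where $a,a',b,b'$ depend on $\ell$.

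Subtracting and factoring is the crux: the $\ell$-th summand of the difference equals $(\|a\|\|a'\| - a^Ta')(\|b\|\|b'\| - b^Tb')$, so
\begin{align*}
\bbE Y_{ij}Y_{i'j'} - \bbE X_{ij}X_{i'j'} = \sum_{\ell=1}^k \big(\|a\|\|a'\| - a^Ta'\big)\big(\|b\|\|b'\| - b^Tb'\big).
\end{align*}
This identity delivers everything. By Cauchy--Schwarz each factor is non-negative, so the whole sum is $\ge 0$, which is exactly the third condition for $i \ne i'$. Setting $i=i'$ makes the first factor in every summand vanish (it becomes $\|a\|^2 - \|a\|^2$), forcing the difference to $0$; this gives the second condition with equality, and specializing to $j=j'$ gives the first. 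For the processes $\tilde X_{ji}=X_{ij}$, $\tilde Y_{ji}=Y_{ij}$ on $J\times I$, the same formula applies with the roles of the two index families swapped: the sum remains $\ge 0$ (the third condition in the new ordering), while fixing $j=j'$ now annihilates the second factor and yields the second and first conditions.

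I do not anticipate a genuine obstacle here; the only care required is bookkeeping --- correctly tracking which cross terms vanish by independence, and verifying the identity $\bbE[(b^TG a)(b'^TG a')]=(b^Tb')(a^Ta')$ entrywise from $\bbE[G_{pq}G_{rs}] = \delta_{pr}\delta_{qs}$ --- together with the observation that the arbitrariness of $\alpha_\ell,\beta_\ell$ is harmless since they act on deterministic inputs, so no randomness enters through them.
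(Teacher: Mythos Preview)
Your proposal is correct and follows essentially the same route as the paper: compute the two covariances block-by-block using independence, factor the difference as $\sum_\ell(\|a\|\|a'\|-a^Ta')(\|b\|\|b'\|-b^Tb')$, invoke Cauchy--Schwarz for condition~3, and note the difference vanishes when $i=i'$ (resp.\ $j=j'$) to obtain conditions~1 and~2 (resp.\ the ``moreover'' claim). The only cosmetic distinction is that the paper writes out the argument in the special case $\alpha_\ell(w)=\Sigma_\ell^{1/2}w$, $\beta_\ell=\mathrm{id}$, whereas you work directly with the general $\alpha_\ell,\beta_\ell$; the computation is identical.
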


Consider arbitrary convex functions $\alpha_1, \dots, \alpha_k: \bbR^d \to \bbR^d$ and  $\beta_1: \bbR^{n_1} \to \bbR^{n_1}, \dots, \beta_k: \bbR^{n_k} \to \bbR^{n_k}$, and define: 
    \begin{align}
        \label{eq: phis_general}
         \Phi (G) :=& \min_{w\in \calS_w} \max_{v \in \calS_{v}} \sum_{\ell = 1}^k \beta_{\ell}(v_{\ell})^T G_\ell \alpha_\ell(w)  + \psi(w,v) \\
         \phi (g, h) :=& \min_{w\in \calS_w} \max_{v\in \calS_v} \sum_{\ell = 1}^k \left[ \|\beta_{\ell}(v_{\ell})\|_2 g_{\ell}^T \alpha_\ell(w) + \|\alpha_\ell(w)\|_2 h_{\ell}^T v_\ell\right] + \psi(w,v) \nonumber
    \end{align}

After making several additional technical steps, Lemma \ref{lm: comp_X_Y} combined with Theorem \ref{thm: comparison} can be used to deduce the following generalization of Theorem \ref{thm: cgmt}:

\begin{theorem}[Generalized CGMT]\label{thm: gcgmt}
    Let $\calS_w \subset \mathbb{R}^d, \calS_{v_1} \subset \mathbb{R}^{n_1} , \dots, \calS_{v_k}  \subset \mathbb{R}^{n_k}$ be compact convex sets. Denote $\calS_v := \calS_{v_1} \times \dots \times \calS_{v_k}$, let $v \in \calS_v$ stand for $(v_1, \dots, v_k) \in \calS_{v_1} \times \dots \times \calS_{v_k}$ and  $\psi(w,v): \calS_w \times \calS_{v} \to \bbR$ be continuous, $\alpha_1, \dots, \alpha_k: \bbR^d \to \bbR^d$ and  $\beta_1: \bbR^{n_1} \to \bbR^{n_1}, \dots, \beta_k: \bbR^{n_k} \to \bbR^{n_k}$ be also arbitrary continuous functions. Furthermore, let $G_1 \in \bbR^{n_1\times d}, \dots, G_k \in \bbR^{n_k\times d}$, $g_1, \dots, g_k \in \bbR^d$, $h_1 \in \bbR^{n_1}, \dots, h_k \in \bbR^{n_k}$ all have i.i.d $\calN(0,1)$ components and $G = (G_1, \dots, G_k)$, $g = \left(g_1, \dots, g_k\right)$, $h = \left(h_1, \dots, h_k\right)$ be the corresponding $k$-tuples. Define  $\Phi(G)$ and $\phi (g, h)$ as in \eqref{eq: phis_general}. Then the following statements hold:
    \begin{enumerate}
        \item    For any $t \in \bbR$ we have 
        \begin{align*}
        \bbP( \Phi (G, \gamma) < t ) \le 2^k\bbP( \phi (g, h) < t)
         \end{align*}
        \item Assume, moreover, that $\psi$ is convex-concave and $\alpha_\ell$, $\beta_\ell$ are affine functions of their inputs, that is they take the form of  $\alpha_\ell(w) = A_\ell w+ a_\ell, \beta_\ell(u) = B_\ell u + b_\ell$ for some $A_\ell \in \mathbb{R}^{d \times d}, B_\ell \in \mathbb{R}^{n_\ell \times n_\ell}, a_\ell \in \bbR^{d}, b_\ell \in \bbR^{n_\ell}$. Then for all $\eta, c \in \bbR$ such that $\eta > 0$ the following statement holds: 
        $$\bbP( |\Phi (G) - c | > \eta ) \le 2^k\bbP( |\phi (g, h) - c| > \eta)$$
        \item Assume the same setting as in (2). Let $\calS$ be an arbitrary open subset of $\calS_w$ and $\calS^c = \calS_w \setminus \calS $. Define:
        \begin{align*}
        & \Phi_{\calS^c}(G) := \min_{w\in \calS^c} \max_{v \in \calS_{v}}  \sum_{\ell = 1}^k \beta_{\ell}(v_{\ell})^T G_\ell \alpha_\ell(w)  + \psi(w,v)\\
        & \phi_{\calS^c}(g,h) := \min_{w\in \calS^c} \max_{v\in \calS_v} \sum_{\ell = 1}^k \sum_{\ell = 1}^k \left[ \|\beta_{\ell}(v_{\ell})\|_2 g_{\ell}^T \alpha_\ell(w) + \|\alpha_\ell(w)\|_2 h_{\ell}^T v_\ell\right] + \psi(w,v) \\
        & w_\Phi (G) := \argmin_{w\in \calS_w} \max_{v \in \calS_{v}} \sum_{\ell = 1}^k \beta_{\ell}(v_{\ell})^T G_\ell \alpha_\ell(w)  + \psi(w,v)
    \end{align*}
     Assume also that there exist $\epsilon, \delta >0$, $\barphi, \barphi_{\calS^c}$ such that
        \begin{itemize}
            \item $\barphi_{\calS^c} \ge \barphi + 3 \delta$
            \item $\phi(g,h) < \barphi + \delta$ with probability at least $1-\epsilon$
            \item $\phi_{\calS^c} > \barphi_{\calS^c} - \delta$ with probability at least $1-\epsilon$
        \end{itemize}
        Then $\bbP(w_\Phi (G) \in \calS) \ge 1-2^{k+1}\epsilon$
    \end{enumerate}
\end{theorem}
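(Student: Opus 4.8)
The plan is to follow the proof architecture of the standard CGMT (Theorem \ref{thm: cgmt}), substituting the classical Slepian pair by the new one furnished by Lemma \ref{lm: comp_X_Y} and carefully accounting for the $k$ independent comparison scalars $\gamma_1,\dots,\gamma_k$, which will be the source of the factor $2^k$. For statement (1), I would first introduce the comparison-augmented primary $\Phi(G,\gamma):=\min_w\max_v\sum_\ell[\beta_\ell(v_\ell)^TG_\ell\alpha_\ell(w)+\gamma_\ell\|\alpha_\ell(w)\|\,\|\beta_\ell(v_\ell)\|]+\psi$, with the $\gamma_\ell$ i.i.d.\ $\calN(0,1)$ and independent of $G$. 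Replacing $\calS_w,\calS_v$ by finite nets and choosing thresholds $t_{ij}=t-\psi(w_i,v_j)$, the events $\bigcap_i\bigcup_j[Y_{ij}\ge t_{ij}]$ and $\bigcap_i\bigcup_j[X_{ij}\ge t_{ij}]$ of Lemma \ref{lm: comp_X_Y} become exactly $\{\Phi(G,\gamma)\ge t\}$ and $\{\phi(g,h)\ge t\}$ on the nets, so Theorem \ref{thm: comparison} yields $\bbP(\Phi(G,\gamma)<t)\le\bbP(\phi(g,h)<t)$; a net-refinement argument using compactness and continuity of $\psi,\alpha_\ell,\beta_\ell$ passes this to the compact sets. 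To drop the $\gamma_\ell$ and recover the $\Phi(G)$ of \eqref{eq: phis_general}, I would condition on $E:=\{\gamma_1\le0,\dots,\gamma_k\le0\}$, which has probability $2^{-k}$ and is independent of $G$; on $E$ each added term is nonpositive, so $\Phi(G,\gamma)\le\Phi(G)$ and hence $\{\Phi(G)<t\}\cap E\subseteq\{\Phi(G,\gamma)<t\}$. Thus $\bbP(\Phi(G)<t)=2^k\bbP(\Phi(G)<t,E)\le2^k\bbP(\Phi(G,\gamma)<t)\le2^k\bbP(\phi(g,h)<t)$, one factor of $2$ per removed $\gamma_\ell$.

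For statement (2) I would establish the two tails separately. The lower tail $\bbP(\Phi(G)<c-\eta)\le2^k\bbP(\phi(g,h)<c-\eta)$ is immediate from (1) and needs no convexity. For the upper tail, affineness of $\alpha_\ell,\beta_\ell$ makes $\sum_\ell\beta_\ell(v_\ell)^TG_\ell\alpha_\ell(w)$ bi-affine, hence convex in $w$ and concave in $v$; together with convex-concave $\psi$ on the convex compact sets, Sion's theorem gives $\Phi(G)=\max_v\min_w(\cdots)$, so $-\Phi(G)$ is a genuine $\min_v\max_w$ problem driven by $-G_\ell\stackrel{d}{=}G_\ell$. Applying (1) in this reversed order — legitimate precisely because of the \emph{Moreover} clause of Lemma \ref{lm: comp_X_Y}, which certifies the transposed processes $\tilde X_{ji},\tilde Y_{ji}$ — and using $g_\ell\stackrel{d}{=}-g_\ell$, $h_\ell\stackrel{d}{=}-h_\ell$ to match signs, bounds $\bbP(\Phi(G)>c+\eta)$ by $2^k\bbP(\phi^{\mathrm{rev}}>c+\eta)$, where $\phi^{\mathrm{rev}}:=\max_v\min_w a(w,v)$ and $a$ is the auxiliary objective. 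The crucial point is that $a$ is \emph{not} convex-concave, so $\phi^{\mathrm{rev}}\neq\phi$ in general; but weak duality gives $\phi^{\mathrm{rev}}\le\phi$, so $\{\phi^{\mathrm{rev}}>c+\eta\}\subseteq\{\phi>c+\eta\}$ and the bound becomes $2^k\bbP(\phi(g,h)>c+\eta)$. A union bound over the two tails gives statement (2).

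For statement (3) the mechanism is deterministic once the one-sided bounds are in hand. Since the global minimizer lying in $\calS^c$ would force $\Phi(G)=\Phi_{\calS^c}(G)$, we have $\{w_\Phi(G)\in\calS\}\supseteq\{\Phi(G)<\Phi_{\calS^c}(G)\}$. Using the upper-tail inequality established in (2) at $t=\barphi+\delta$ together with $\phi<\barphi+\delta$ w.p.\ $\ge1-\epsilon$ gives $\Phi(G)\le\barphi+\delta$ w.p.\ $\ge1-2^k\epsilon$; applying the GMT direction of (1) to the restricted problem on the compact set $\calS^c$ (convexity of $\calS^c$ is not needed, since only (1) is invoked) together with $\phi_{\calS^c}>\barphi_{\calS^c}-\delta$ w.p.\ $\ge1-\epsilon$ gives $\Phi_{\calS^c}(G)\ge\barphi_{\calS^c}-\delta$ w.p.\ $\ge1-2^k\epsilon$. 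On the intersection of these two events, of probability $\ge1-2^{k+1}\epsilon$, the hypothesis $\barphi_{\calS^c}\ge\barphi+3\delta$ yields $\Phi_{\calS^c}(G)\ge\barphi+2\delta>\barphi+\delta\ge\Phi(G)$, so $w_\Phi(G)\in\calS$.

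The step I expect to be the \textbf{main obstacle} is the upper tail of (2): setting up the reversed min-max correctly and recognizing that, because the auxiliary objective is not convex-concave, one cannot identify $\phi^{\mathrm{rev}}$ with $\phi$ and must instead route through weak duality $\phi^{\mathrm{rev}}\le\phi$ to land on a bound stated in terms of $\phi$ itself. The discretization-to-compact extension in (1) is also technically involved but follows the established CGMT template, after which the factor-$2^k$ bookkeeping and the localization argument in (3) are routine.
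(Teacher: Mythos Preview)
Your proposal is correct and follows essentially the same architecture as the paper's proof: discretize via $\delta$-nets, invoke Gordon's comparison through Lemma~\ref{lm: comp_X_Y}, remove the $\gamma_\ell$'s by conditioning on a sign event of probability $2^{-k}$, handle the upper tail in (2) by swapping the min--max in $\Phi(G)$ via Sion (legitimate because the bilinear term is affine in each argument), applying the reversed-order comparison from the ``Moreover'' clause of Lemma~\ref{lm: comp_X_Y}, and then routing through weak duality on the auxiliary side; part (3) is the standard localization argument combining the upper tail on $\Phi$ with the GMT lower tail on $\Phi_{\calS^c}$. Your bookkeeping of the weak-duality direction ($\phi^{\mathrm{rev}}\le\phi$, hence $\{\phi^{\mathrm{rev}}>c+\eta\}\subseteq\{\phi>c+\eta\}$) and of the sign of the $\gamma$-conditioning event is in fact cleaner than the paper's write-up, but the underlying argument is the same.
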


Part 2 of Theorem \ref{thm: gcgmt} says that if $\phi(g,h)$ concentrates to a value $c$, then so does $\Phi(G)$, albeit with a possibly slightly worse tail bound. Moreover, it turns out that $\phi(g,h)$ concentrates to $\bbE \phi(g,h)$ exponentially fast, which we prove in Appendix B. As such, Theorem \ref{thm: gcgmt} implies that $\Phi(G)$ concentrates exponentially fast to the same value. Part $3$ of the theorem says that if the solution of $\phi(g,h)$ satisfies a certain property with probability very close to $1$, such as for example belonging to a ball of a fixed radius, then the same is true for the solution of $\Phi(G)$. 

\begin{remark}\label{rem: yue_lu}
We would also like to point out Theorem 3 from \cite{dhifallah2021inherent} that considers similar objectives, namely ones of the following form, where the entries of $G_\ell$ are again i.i.d. $\mathcal{N}(0,1)$:
\begin{equation}\label{eq: theirs}
    \Phi (G) := \min_{w\in \calS_w} \max_{v \in \calS_{v}} \sum_{\ell = 1}^k v_{\ell}^T G_\ell w_{\ell}  + \psi(w,v)
\end{equation}

After establishing the corresponding comparison inequality, \cite{dhifallah2021inherent} proceed to apply it to study the effects of noise injection on the generalization performance of Random Feature Models.
Our work is concerned with characterizing the generalization performance for binary classification with arbitrary covariance matrices and multi-source Gaussian regression. These problems lead to analyzing objectives of the following form:
\begin{equation}\label{eq: ours}
    \Phi (G) = \min_{w\in \calS_w} \max_{v \in \calS_{v}} \sum_{\ell = 1}^k v_{\ell}^T G_\ell\Sigma_{\ell}^{\frac{1}{2}} w  + \psi(w,v)
\end{equation}

Note that \eqref{eq: ours} differs from \eqref{eq: theirs} in two substantial ways. First, \eqref{eq: ours} explicitly contains the covariance matrices, which is clearly crucial for the applications we mentioned. Second, note that in \eqref{eq: ours} the same $w$ is shared between each term, while in \eqref{eq: theirs} each $w_\ell$ is taken to be different. If, however, one were to still take each $w_\ell$ to be the same in \eqref{eq: theirs}, analyzing the resulting problem would reduce to the standard CGMT with a larger Gaussian matrix $G = [G_1, \dots, G_k]^T$.  

\end{remark}
\begin{remark}\label{rem: montanari}
The first arxiv version of the present manuscript considered only the less general case of $A_\ell = \Sigma_\ell^{\frac{1}{2}}$  being an arbitrary PSD matrix, $B_\ell = I_{n_k}$, $a_\ell = 0$ and $b_\ell = 0$ as this was enough to address the applications mentioned in Subsections \ref{sec:prelim:subsec:multisourceGaussian} and \ref{subsec:bin_class} . Concurrent to the posting of this paper on arxiv, the authors noticed the posting of \cite{jain2024scaling} which considers a different pair of processes to which CGMT can be applied. Inspired by combining this work and theirs, we note that our main Theorem \ref{thm: gcgmt} can be generalized even further.
    In terms of the terminology above, the processes designed in \cite{jain2024scaling} correspond to $A_\ell = I_d$ and $B_\ell = I_{n_\ell}$ and arbitrary $a_\ell, b_\ell$.
\end{remark} 

\section{Applications}\label{sec:Applications}

\subsection{Multi-source regression}\label{sec:Applications:subsec:MultiSourceRegression}

We recall the problem given in equation~\eqref{eq:GaussianRegression}. Also recall the definition of the Moreau envelope, for a function $f$ and matrix $T$:
\begin{equation} \label{def: scalMorenv}
    \calM_{T\ f(\cdot)}(x) = \min_{z} f(z) + \frac{1}{2}(x - z)^TT^{-1}(x - z),
\end{equation}
where $T^{-1}$ is a positive definite matrix. We also note that the optimal point of the optimization in \eqref{def: scalMorenv} is the proximal operator, denoted $\mathrm{prox}_{T\ f(\cdot)}(x)$.
The application of the generalized CGMT to the multi-source regression problem gives us the following theorem:

\begin{theorem}[Multi-source Generalization Error]\label{thm:multi-source-gen-error}
Consider the following scalarized optimization problem:
\begin{eqnarray}\label{eq:mult_source_ao}
     \min_{\xi, q\succeq 0}\max_{\beta,r\succeq 0} \sum_{l=1}^k \frac{\beta_l q_l}{2k} - \frac{\xi_lr_l}{2k} 
    + \frac{1}{nk}\calM_{\frac{q_l}{\beta_l}I\ \ell_l}\left(\nu_l - \xi_lh_l\right) \nwl
    -  \frac{1}{k^2}\frac{\beta_l^2}{n}\tr\Sigma_l\left(\frac{1}{k}\sum_{l'=1}^k\frac{r_{l'}}{\xi_{l'}}\Sigma_{l'}\right)^{-1} + \frac{1}{d}\calM_{A^{-1}\ R}\left(\theta^* - A^{-1}b\right)
\end{eqnarray}
Here $\xi, q, \beta, r\in \bbR^k$ are element-wise non-negative, and $h_l\in\bbR^n$ for $l=1,\ldots, k$ are i.i.d standard Gaussian vectors independent of each other. Furthermore, $A$ and $b$ are defined as follows:
\begin{align*}
    & A(r, \xi)\in\bbR^{d\times d} = \left(\frac{1}{k}\sum_{l=1}^k\frac{r_l}{\xi_k}\Sigma_l\right) \\
    & b(\beta)\in\bbR^d = \left(\frac{1}{k}\sqrt{\frac{d}{n}}\sum_{l=1}^k\beta_l\Sigma_l^{1/2} g_l\right).
\end{align*}
We denote by $\hat{r}_i, \hat{\xi}_i, \hat{\beta}_i, \hat{q}_i$ the optimal values of the scalarized optimization problem, and define $\hat{A} = A(\hat{r}, \hat{\xi})$ and $\hat{b} = b(\hat{\beta})$.
Then the generalization error (as defined in equation \ref{eq:multisource_gen_error}) for the multi-source Gaussian Regression problem is given by:
\begin{equation*}
     \calE_{gen} = \frac{1}{2k}\bbE\sum_{l=1}^k\norm{y_{l, new} - \frac{1}{\sqrt{d}}x_{l, new}^T\mathrm{prox}_{\hat{A}^{-1}\ R}(\theta^* - \hat A^{-1}\hat b)}
\end{equation*}
\end{theorem}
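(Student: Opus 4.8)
The plan is to begin from the primary optimization already recorded in \eqref{op: expandedPOreg}, apply the generalized CGMT of Theorem \ref{thm: gcgmt} to pass to an auxiliary optimization (AO), scalarize the AO down to the finite-dimensional program \eqref{eq:mult_source_ao}, and finally read off the minimizer to evaluate \eqref{eq:multisource_gen_error}. The first step is to rewrite the PO so that the Gaussian matrices enter \emph{only} through the residual: substituting $y_l=\frac{1}{\sqrt d}G_l\Sigma_l^{\frac12}\theta^*+\nu_l$ into \eqref{op: expandedPOreg} collapses the two $G_l$-dependent pieces $\frac{1}{\sqrt d}v_l^TX_l\theta$ and $-v_l^Ty_l$ into the single bilinear form $\frac{1}{\sqrt d}v_l^TG_l\Sigma_l^{\frac12}(\theta-\theta^*)$. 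This is exactly the form handled by Theorem \ref{thm: gcgmt} with the \emph{affine} maps $\alpha_\ell(\theta)=\Sigma_\ell^{\frac12}\theta-\Sigma_\ell^{\frac12}\theta^*$ (so $A_\ell=\Sigma_\ell^{\frac12}$, $a_\ell=-\Sigma_\ell^{\frac12}\theta^*$) and $\beta_\ell(v_\ell)=v_\ell$; this is precisely the affine generalization emphasized in Remark \ref{rem: montanari}, and is why the ordinary CGMT is insufficient. After restricting to (large) compact convex sets so that the hypotheses hold, the associated AO reads
\begin{align*}
\min_{\theta,z_l}\max_{v_l}\ R(\theta)+\frac{1}{nk}\sum_{l=1}^k\Big[&\frac{\|v_l\|}{\sqrt d}g_l^T\Sigma_l^{\frac12}(\theta-\theta^*)+\frac{\|\Sigma_l^{\frac12}(\theta-\theta^*)\|}{\sqrt d}h_l^Tv_l\\
&-v_l^T\nu_l+v_l^Tz_l+\ell_l(z_l)\Big].
\end{align*}

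Next I would scalarize. Maximizing over the \emph{direction} of each $v_l$ at fixed magnitude $\beta_l=\|v_l\|$ (up to the standard rescaling $\|v_l\|=\sqrt n\,\beta_l$ that fixes the normalization) turns the $v_l$-linear terms into $\beta_l\big\|z_l-\nu_l+\tfrac{\|\Sigma_l^{\frac12}(\theta-\theta^*)\|}{\sqrt d}h_l\big\|$. The variational identity $\beta_l\|c\|=\min_{q_l>0}\big(\tfrac{\beta_l}{2q_l}\|c\|^2+\tfrac{\beta_l q_l}{2}\big)$ introduces the scalar $q_l$, renders the dependence on $z_l$ quadratic, and minimizing over $z_l$ produces exactly the loss Moreau envelope $\calM_{\frac{q_l}{\beta_l}I\ \ell_l}(\nu_l-\xi_l h_l)$ with $\xi_l:=\|\Sigma_l^{\frac12}(\theta-\theta^*)\|/\sqrt d$, together with the term $\tfrac{\beta_l q_l}{2k}$. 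Promoting $\xi_l$ to an independent variable and enforcing $\sqrt d\,\xi_l\ge\|\Sigma_l^{\frac12}(\theta-\theta^*)\|$ through the multiplier term $\max_{r_l\ge0}\tfrac{r_l}{2\xi_l}\big(\tfrac1d\|\Sigma_l^{\frac12}(\theta-\theta^*)\|^2-\xi_l^2\big)$ simultaneously supplies the penalty $-\tfrac{\xi_l r_l}{2k}$ and replaces that norm by the quadratic form $\tfrac{r_l}{\xi_l}\tfrac1d(\theta-\theta^*)^T\Sigma_l(\theta-\theta^*)$. The decisive point, unreachable by the ordinary CGMT, is that these quadratic forms are all built from the \emph{single shared} $\theta$, so summing over $l$ couples the sources through $A=\tfrac1k\sum_l\tfrac{r_l}{\xi_l}\Sigma_l$, while the $g_l$-linear terms assemble into $b=\tfrac1k\sqrt{\tfrac dn}\sum_l\beta_l\Sigma_l^{\frac12}g_l$.

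Minimizing over $\theta$ the sum of $R$ with the assembled quadratic-plus-linear form then yields, after completing the square, the regularization Moreau envelope $\tfrac1d\calM_{A^{-1}\ R}(\theta^*-A^{-1}b)$; the residual $b^TA^{-1}b$ piece, upon replacing the Gaussian quadratic form by its mean (its off-diagonal $l\neq l'$ contributions vanish by independence of $g_l,g_{l'}$, and each diagonal term concentrates via $\bbE\,g_l^T\Sigma_l^{\frac12}A^{-1}\Sigma_l^{\frac12}g_l=\tr(\Sigma_l A^{-1})$), produces the trace term $\tfrac{1}{k^2}\tfrac{\beta_l^2}{n}\tr\Sigma_l A^{-1}$ of \eqref{eq:mult_source_ao}. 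This establishes that the AO concentrates to the scalar program \eqref{eq:mult_source_ao}. Its minimizer in $\theta$ is, by the defining relation of the proximal operator, $\hat\theta=\mathrm{prox}_{\hat A^{-1}\ R}(\theta^*-\hat A^{-1}\hat b)$ evaluated at the optimal scalars $\hat r,\hat\xi,\hat\beta,\hat q$. Invoking Part 3 of Theorem \ref{thm: gcgmt} (choosing $\calS$ to be an arbitrary open neighborhood of this point and verifying the gap conditions through uniqueness/strong convexity of the scalar program) transfers this characterization to the PO minimizer $w_\Phi$. Substituting $\hat\theta$ into \eqref{eq:multisource_gen_error} and using $y_{l,new}-\tfrac{1}{\sqrt d}x_{l,new}^T\hat\theta=\tfrac{1}{\sqrt d}x_{l,new}^T(\theta^*-\hat\theta)+\nu_{l,new}$, together with continuity of the error in $\hat\theta$, then gives the stated formula.

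I expect the main obstacle to be the scalarization: correctly introducing the four families $\{\xi_l,q_l,\beta_l,r_l\}$ through the nested variational identities while preserving the convex--concave structure required to apply Theorem \ref{thm: gcgmt}, and justifying the concentration of the Gaussian quadratic forms to their traces in the regime where the coupling matrix $A$ itself depends on the optimization variables (so the concentration must be made uniform over the relevant compact parameter ranges). A secondary technical point is the reduction to compact convex sets and the verification that the min--max orders may be legitimately exchanged at each step.
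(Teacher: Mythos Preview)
Your plan is correct and tracks the paper's argument closely: pass to the error variable $e=\theta-\theta^*$, apply Theorem \ref{thm: gcgmt}, scalarize $v_l$ by direction and norm $\beta_l$, introduce $q_l$ via the square-root trick to extract the loss Moreau envelope, decouple $\theta$ from the norms $\xi_l=\|\Sigma_l^{1/2}e\|/\sqrt d$, complete the square in $\theta$ to produce the regularizer envelope, and concentrate $b^TA^{-1}b$ to the trace term. The paper also establishes compactness of the feasible sets via a separate lemma before applying the CGMT, just as you anticipate.

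The one mechanical difference is in the decoupling step. Rather than relaxing the scalar constraint $\sqrt d\,\xi_l\geq\|\Sigma_l^{1/2}e\|$ with a nonnegative multiplier $r_l$ as you do, the paper introduces the \emph{vector} variable $p_l=\Sigma_l^{1/2}e$ with a vector Lagrange multiplier $\mu_l$, optimizes over the direction of $p_l$ to obtain $-\frac{\xi_l}{k\sqrt d}\|\sqrt{d/n}\,\beta_l g_l+\mu_l\|$, applies the square-root trick to introduce $r_l$, and only then eliminates $\mu_l$ in closed form. This has the advantage that every intermediate min--max stays manifestly convex--concave in the active variables, so the order exchanges are immediate; your inequality relaxation reaches the same endpoint but needs an extra monotonicity/tightness argument to justify that the optimal $\xi_l$ sits on the constraint boundary. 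Either way one lands on \eqref{eq:mult_source_ao}, and the identification $\hat\theta=\mathrm{prox}_{\hat A^{-1}\,R}(\theta^*-\hat A^{-1}\hat b)$ together with Part~3 of Theorem \ref{thm: gcgmt} finishes exactly as you describe.
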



\subsection{Binary classification for general GMMs}

\label{sec: class}

In this section, we provide the precise asymptotic analysis of binary linear classification for data generated from a GMM with arbitrary covariances.
\begin{theorem}[Binary Classification Error]\label{th: classification master}
The classification error obtained from the solution of \eqref{eq:classification} is given as
$$
\frac{1}{2} Q\left(\frac{\gamma_1}{\theta_1}\right) + \frac{1}{2} Q\left(-\frac{\gamma_2}{\theta_2}\right)
$$
where $\gamma_i$ and $\theta_i$ are solutions of the following scalar optimization problem:
\begin{align*}
    &\min_{ \tau_i,\theta_i, \zeta_i\ge 0,\gamma_i} \max_{ \beta_i \geq 0, \eta_i}  \lambda \calM_{\lambda \ f(\Sigma^{-\frac{1}{2}})}\left(\Sigma^{-\frac{1}{2}}x\right)- \frac{x^T\Sigma^{-1}x}{2} \\
    & + \sum^2_{i=1} \left[\beta_i \frac{\tau_i}{2} +   \calM_{\frac{\tau_i}{\beta_i} \ \calL} \left(\gamma_i \mathds{1} -\theta_i h_i -z_i\right)  -  \frac{\theta_i\zeta_i}{2} -\eta_i\gamma_i\right]
\end{align*}
where $\Sigma = \sum_{i=1}^{2} \frac{\zeta_i}{\theta_i}\Sigma_i$ and $x = \sum_{i=1}^{2}\beta_i\Sigma_i^{\frac{1}{2}}g_i - \eta_i\mu_i$
\end{theorem}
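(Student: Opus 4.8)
The plan is to read \eqref{eq:classification} as a primary optimization, convert it to an auxiliary optimization via the Generalized CGMT (Theorem~\ref{thm: gcgmt}), scalarize that auxiliary problem down to the claimed finite-dimensional saddle point, and finally transfer the two relevant order parameters per channel back to $\hat w$ so they can be inserted into \eqref{eq: error}. Concretely, I would start from the expanded form \eqref{eq: class_obj_spelled}, i.e. $\min_{w,u}\max_{v_1,v_2}\sum_{\ell=1}^2 v_\ell^T G_\ell\Sigma_\ell^{1/2}w + v^TMw - v^Tz - v^Tu + \calL(u) + \lambda f(w)$, and recognize the bilinear Gaussian part as an instance of \eqref{eq: phis_general} under the affine choices $\alpha_\ell(w)=\Sigma_\ell^{1/2}w$ and $\beta_\ell(v_\ell)=v_\ell$, absorbing the rest into $\psi\big((w,u),v\big)=v^TMw-v^Tz-v^Tu+\calL(u)+\lambda f(w)$. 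Since $\psi$ is bilinear in $(w,u)$ versus $v$ plus a convex term in $(w,u)$, it is convex--concave, so Parts 2 and 3 of Theorem~\ref{thm: gcgmt} apply and the auxiliary problem replaces $\sum_\ell v_\ell^TG_\ell\Sigma_\ell^{1/2}w$ by $\sum_\ell[\|v_\ell\|\,g_\ell^T\Sigma_\ell^{1/2}w+\|\Sigma_\ell^{1/2}w\|\,h_\ell^Tv_\ell]$.

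The core is the scalarization of this auxiliary problem, which I would carry out channel-by-channel using the block structure of $M$ and $z$. First set $\beta_\ell:=\|v_\ell\|\ge0$ and maximize over the direction of $v_\ell$; the direction-dependent part collapses to $\beta_\ell\,\big\|\,\|\Sigma_\ell^{1/2}w\|\,h_\ell+(\mu_\ell^Tw)\mathds{1}-z_\ell-u_\ell\,\big\|$, leaving the direction-free piece $\beta_\ell g_\ell^T\Sigma_\ell^{1/2}w$. Next introduce the scalar order parameters $\theta_\ell:=\|\Sigma_\ell^{1/2}w\|$ and $\gamma_\ell:=\mu_\ell^Tw$, linearize the Euclidean norm via $\beta_\ell\|a\|=\min_{\tau_\ell\ge0}\tfrac{\beta_\ell\tau_\ell}{2}+\tfrac{\beta_\ell}{2\tau_\ell}\|a\|^2$, and minimize out $u_\ell$; by definition \eqref{def: scalMorenv} this produces exactly $\tfrac{\beta_\ell\tau_\ell}{2}+\calM_{\frac{\tau_\ell}{\beta_\ell}\calL}(\gamma_\ell\mathds{1}-\theta_\ell h_\ell-z_\ell)$, the sign of $\theta_\ell h_\ell$ being immaterial by symmetry of $h_\ell$. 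It remains to optimize over $w$: I would enforce $\gamma_\ell=\mu_\ell^Tw$ with a multiplier $\eta_\ell$ and $\theta_\ell=\|\Sigma_\ell^{1/2}w\|$ through the quadratic-form Lagrangian with multiplier $\zeta_\ell/\theta_\ell$, which contributes the constant $-\tfrac{\theta_\ell\zeta_\ell}{2}$, the linear term $-\eta_\ell\gamma_\ell$, and assembles the $w$-objective into $x^Tw+\tfrac12 w^T\Sigma w+\lambda f(w)$ with $\Sigma=\sum_\ell\tfrac{\zeta_\ell}{\theta_\ell}\Sigma_\ell$ and $x=\sum_\ell\beta_\ell\Sigma_\ell^{1/2}g_\ell-\eta_\ell\mu_\ell$. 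Completing the square and substituting $s=\Sigma^{1/2}w$ then turns $\min_w x^Tw+\tfrac12 w^T\Sigma w+\lambda f(w)$ into $\lambda\calM_{\lambda\, f(\Sigma^{-1/2})}(\Sigma^{-1/2}x)-\tfrac{x^T\Sigma^{-1}x}{2}$, which is precisely the remaining terms of the claimed scalar problem.

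Finally, the optimal $\gamma_\ell,\theta_\ell$ of the scalar problem are, by their very construction, the values to which $\mu_\ell^Tw$ and $\sqrt{w^T\Sigma_\ell w}$ are pinned at the auxiliary optimizer. Substituting $\gamma_\ell=\mu_\ell^T\hat w$ and $\theta_\ell=\sqrt{\hat w^T\Sigma_\ell\hat w}$ into \eqref{eq: error} yields the asserted $\tfrac12 Q(\gamma_1/\theta_1)+\tfrac12 Q(-\gamma_2/\theta_2)$. To make this rigorous for the genuine minimizer $\hat w$ of the primary problem, I would invoke Part 3 of Theorem~\ref{thm: gcgmt} with the open sets $\calS=\{w:\ |\mu_\ell^Tw/\sqrt{w^T\Sigma_\ell w}-\gamma_\ell/\theta_\ell|<\epsilon,\ \ell=1,2\}$, ruling out any fixed deviation of these two functionals with high probability and hence pinning the error.

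I expect two steps to be the main obstacles. The first is the chain of min--max interchanges needed to place $\min_{\tau_\ell}$, $\max_{\beta_\ell}$, $\min_w$ and $\min_{u_\ell}$ in the order used above: each swap must be justified (via Sion's minimax theorem together with compactness or the coercivity supplied by $\lambda f$), and the sign of the norm-constraint multiplier must be checked so that $\zeta_\ell\ge0$ truly acts as a minimization variable. The second, more delicate obstacle is transferring concentration to the \emph{nonlinear} functionals $\mu_\ell^T\hat w$ and $\hat w^T\Sigma_\ell\hat w$ that enter the error: Part 3 only controls membership of $\hat w$ in convex open sets, so one must verify that the super-level sets defining $\calS^c$ meet the strict-separation hypothesis $\barphi_{\calS^c}\ge\barphi+3\delta$, which ultimately rests on strict convexity and uniqueness of the scalar saddle point.
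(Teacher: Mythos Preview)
Your proposal is correct and follows essentially the same route as the paper: pass to the auxiliary problem via Theorem~\ref{thm: gcgmt}, optimize out the direction of $v_\ell$ (introducing $\beta_\ell=\|v_\ell\|$), apply the square-root trick in $\tau_\ell$, recognize the Moreau envelope of $\calL$ after minimizing over $u_\ell$, decouple the $w$-dependence through the order parameters $\gamma_\ell=\mu_\ell^Tw$ and $\theta_\ell=\|\Sigma_\ell^{1/2}w\|$, and finally collapse the $w$-minimization into the Moreau envelope of $f(\Sigma^{-1/2}\cdot)$.

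The one methodological difference worth noting is how $\theta_\ell$ and $\zeta_\ell$ are introduced. The paper does \emph{not} impose the nonlinear constraint $\theta_\ell^2=w^T\Sigma_\ell w$ directly. Instead it lifts to a vector variable $p_i=\Sigma_i^{1/2}w$ with a \emph{vector} Lagrange multiplier $\nu_i$ (a linear constraint), optimizes over the direction of $p_i$ to obtain $-\theta_i\|\beta_ig_i-\nu_i\|$ with $\theta_i=\|p_i\|$, applies a second square-root trick to linearize this norm (this is where $\zeta_i$ enters, as a genuine minimization variable), and only then maximizes out $\nu_i$ in closed form. Your shortcut---attaching the scalar multiplier $\zeta_\ell/\theta_\ell$ to $w^T\Sigma_\ell w-\theta_\ell^2$---lands on the same quadratic form $\tfrac12 w^T\Sigma w$ but enforces an equality that is not jointly convex in $(w,\theta_\ell)$, so the min--max swap you worry about is indeed more delicate there. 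The paper's detour through $p_i$ and $\nu_i$ buys exactly this: every intermediate constraint stays linear, so each interchange is a plain application of Sion's theorem. If you want your version to be airtight, the cleanest fix is to mimic that detour rather than to argue directly about the quadratic Lagrangian.
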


In the special case of the $\ell_2$-norm square loss and regularization, we have the following simplified result:
\begin{theorem}\label{th: classification l2}
The classification error obtained from the solution of \eqref{eq:classification} when $\mathcal{L}(\cdot) = \|\cdot\|^2$ and $f(\cdot)=\|\cdot\|^2$  is given as
\begin{align*}
    \frac{1}{2} Q\left(\frac{\gamma_1+2}{\sqrt{\frac{8}{n} \tau_1^2 - \gamma_1^2}}\right) + \frac{1}{2} Q\left(\frac{-\gamma_2+2}{\sqrt{\frac{8}{n} \tau_2^2 - \gamma_2^2}}\right)
\end{align*}
where $\beta_i, \gamma_i$ and $\tau_i$ are solutions of the following scalar optimization problem:
\begin{align*}
     & \min_{\tau_i \ge 0} \max_{\beta_i \ge 0, \gamma_i} -\frac{1}{4}  \tr \Bigl[(\lambda I + \frac{n\beta_1}{4\tau_1} \Sigma_1 + \frac{n\beta_2}{4\tau_2} \Sigma_2)^{-1}  (\frac{n^2 \beta_1 \beta_2 \gamma_1 \gamma_2}{8 \tau_1 \tau_2}rI+\sum_{i=1}^2 \beta_i^2 \Sigma_i + \frac{n^2 \beta_i^2 \gamma_i^2}{16 \tau_i^2} I)\Bigr]  \\
     &+\sum_{i=1}^2 \left[\frac{\beta_i \tau_i}{2} - \frac{n\beta_i \gamma_i^2}{16\tau_i} - \frac{n \beta_i \gamma_i (-1)^{i+1}}{4\tau_i} - \frac{\beta_i^2}{4}\right]
\end{align*}
\label{thm: classif_l2}
\end{theorem}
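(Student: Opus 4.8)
The plan is to obtain Theorem \ref{th: classification l2} as a specialization of the master result, Theorem \ref{th: classification master}, by inserting $\calL(\cdot)=\|\cdot\|^2$ and $f(\cdot)=\|\cdot\|^2$ and then carrying out in closed form the auxiliary optimizations over the variables $\theta_i,\zeta_i,\eta_i$ that do not survive into the final statement. The first step is to evaluate the two Moreau envelopes, both explicit for a quadratic. For the loss I would use the scalar identity $\calM_{tI\,\|\cdot\|^2}(a)=\|a\|^2/(2t+1)$ with $t=\tau_i/\beta_i$, so the $i$-th envelope becomes $\beta_i\|\gamma_i\mathds{1}-\theta_i h_i-z_i\|^2/(2\tau_i+\beta_i)$. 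For the regularizer the key simplification is that the combination $\lambda\calM_{\lambda\,f(\Sigma^{-1/2})}(\Sigma^{-1/2}x)-\tfrac12 x^T\Sigma^{-1}x$ telescopes: using $\Sigma^{-1/2}(\Sigma^{-1}+\tfrac1{2\lambda}I)^{-1}\Sigma^{-1/2}=(I+\tfrac1{2\lambda}\Sigma)^{-1}$ one finds it equals the clean quadratic $-\tfrac12\,x^T(2\lambda I+\Sigma)^{-1}x$. After this the objective is an explicit function of the scalar variables plus a handful of Gaussian quadratic forms in $g_i,h_i,\mu_i$.

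The second step is to pass these Gaussian quadratic forms to their deterministic limits (the regime in which Theorem \ref{thm: gcgmt} is applied). Writing $M:=(2\lambda I+\Sigma)^{-1}$ with $\Sigma=\sum_i(\zeta_i/\theta_i)\Sigma_i$ and $x=\sum_i(\beta_i\Sigma_i^{1/2}g_i-\eta_i\mu_i)$, the independent cross terms vanish, the diagonal Gaussian terms give $g_i^T\Sigma_i^{1/2}M\Sigma_i^{1/2}g_i\to\tr(M\Sigma_i)$, and $\mu_i^TM\mu_i\to\tr(M)$ while the correlated cross term $\mu_1^TM\mu_2\to r\,\tr(M)$ --- this is precisely where the correlation parameter $r$ enters the trace expression in the statement. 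On the loss side $\|h_i\|^2\to n/2$, $\mathds{1}^Th_i\to 0$ and $\|\mathds{1}\|^2=n/2$, together with $z_i=(-1)^{i+1}\mathds{1}$, collapse $\|\gamma_i\mathds{1}-\theta_i h_i-z_i\|^2$ to $\tfrac n2[(\gamma_i-(-1)^{i+1})^2+\theta_i^2]$, which is the origin both of the shift ``$+2$'' and of the overall $1/n$ scaling in the final $Q$-arguments.

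The third step is the elimination of $\eta_i,\theta_i,\zeta_i$. The inner maximization over $\eta=(\eta_1,\eta_2)$ is an unconstrained concave quadratic (since $-\tfrac12 x^TMx$ is concave in $\eta$ and $-\eta_i\gamma_i$ is linear); solving the $2\times2$ stationarity system against the correlation matrix $R=\left(\begin{smallmatrix}1&r\\ r&1\end{smallmatrix}\right)$ and back-substituting produces the $\gamma$-dependent terms, including the cross term proportional to $r\,\gamma_1\gamma_2$. Setting $s_i:=\zeta_i/\theta_i$, the matrix $M$ depends on $\theta_i,\zeta_i$ only through $s_i$, so the residual quadratic in $\theta_i$ can be minimized first; its stationarity pins $s_i$ to a value proportional to $n\beta_i/\tau_i$, which is exactly the coefficient appearing in the target matrix $\lambda I+\tfrac{n\beta_1}{4\tau_1}\Sigma_1+\tfrac{n\beta_2}{4\tau_2}\Sigma_2$. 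The fact that $\gamma_i$ becomes a maximization variable (it is a minimization variable in Theorem \ref{th: classification master}) comes from its bilinear pairing $-\eta_i\gamma_i$ with the eliminated max-variable $\eta_i$, a partial Legendre step whose sign and the convex-concavity of the reduced saddle must be checked to legitimize writing the problem in the stated form. For the error I would then substitute the resulting order parameters into the master formula $\tfrac12 Q(\gamma_1/\theta_1)+\tfrac12 Q(-\gamma_2/\theta_2)$, using the $\ell_2$-specific stationarity relations expressing $\mu_i^T\hat{w}$ and $\hat{w}^T\Sigma_i\hat{w}$ through $\gamma_i$ and $\tau_i$, so that $\gamma_i/\theta_i$ becomes $(\gamma_i+2(-1)^{i+1})/\sqrt{\tfrac 8n\tau_i^2-\gamma_i^2}$.

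I expect the main obstacle to be the third step. The variables $\theta_i,\zeta_i$ are coupled to everything through the matrix inverse $M$, so their elimination requires differentiating $\tr(M\,\cdot)$ through the inverse and verifying that the reduced finite-dimensional problem is a genuine convex--concave saddle, so that the closed-form stationary substitutions are valid and the min--max may be written in the stated order. The remaining difficulty is purely bookkeeping --- reconciling the numerous factors of $2$ and powers of $n$ so that the coefficients $\tfrac{n\beta_i}{4\tau_i}$, $\tfrac{n^2\beta_1\beta_2\gamma_1\gamma_2}{8\tau_1\tau_2}$, $\tfrac{n^2\beta_i^2\gamma_i^2}{16\tau_i^2}$ and $\tfrac{n\beta_i\gamma_i^2}{16\tau_i}$ come out exactly as written.
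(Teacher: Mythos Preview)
Your proposal is sound in principle but takes a genuinely different route from the paper. The paper does \emph{not} specialize Theorem~\ref{th: classification master}; it restarts the CGMT analysis from scratch for the quadratic case. Concretely: it Fenchel-dualizes $\|(A+M)w-z\|^2$ to the bilinear $v^T(A+M)w-\tfrac14\|v\|^2$, applies Theorem~\ref{thm: gcgmt}, optimizes over the direction of each $v_i$ to get $\beta_i$, uses the square-root trick to introduce $\tau_i$, and then --- this is the key divergence --- introduces $\gamma_i$ directly as the Fenchel dual of the scalar quadratic $(\mu_i^Tw-(-1)^{i+1})^2$. Thus $\gamma_i$ is born a max variable with the meaning $\gamma_i=2(\mu_i^Tw-(-1)^{i+1})$, not $\gamma_i=\mu_i^Tw$ as in the master theorem. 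What remains is then an explicit quadratic in $w$, minimized in closed form, and the resulting Gaussian quadratic form is passed to its trace limit.

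Because the paper's route never introduces $\theta_i,\zeta_i,\eta_i$, the third step you correctly flag as the main obstacle --- eliminating three variables coupled through a matrix inverse and checking the reduced saddle --- simply does not arise. Your route should reach the same destination, but it is the harder road: the $\gamma_i$ in the two theorems are different scalars (your closing remark about re-expressing $\gamma_i/\theta_i$ is the right instinct), and you must both solve the $\theta_i,\zeta_i$ stationarity through the inverse and justify the min--max reordering after substitution. The paper's approach buys a short derivation by exploiting the quadratic structure \emph{before} any auxiliary scalarization; your approach would, if carried through, exhibit Theorem~\ref{th: classification l2} as a literal corollary of the master theorem, which is conceptually satisfying but algebraically heavier.
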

The performances of various classification algorithms have been analyzed in depth using the standard CGMT for the case of scalar $\Sigma_1$ and $\Sigma_2$ (see, for example,  \citep{pmlr-v119-mignacco20a, thrampoulidis2020theoretical, akhtiamov2023regularized}). As a baby step towards developing a better understanding of the effects of the structure of $\Sigma_1$ and $\Sigma_2$ on the classification error, we apply Theorem \ref{thm: classif_l2} to the following explicitly defined model: 
\begin{align}\label{eq:SigmaModel}
     & \Sigma_1 = \sigma_1^2I + \nu_1 \nu_1^T ~~~\mbox{and}~~~ \Sigma_2 = \sigma_2^2I + \nu_2 \nu_2^T \\ \nonumber 
     & \text{ where } \sigma_1, \sigma_2, \sigma \in \mathbb{R} ~~~\mbox{and}~~~ \nu_1, \nu_2 \text{ are i.i.d. } \mathcal{N}(0, \sigma^2)
\end{align}
Plugging in these assumptions into Theorem \ref{thm: classif_l2} yields the following Corollary.
\begin{corollary}
The classification error obtained from the solution of \eqref{eq:classification} when $\mathcal{L}(\cdot) = \|\cdot\|^2, f(\cdot)=\|\cdot\|^2$ for $\Sigma_1$ and $\Sigma_2$ defined as above is given as:
\begin{align*}
    \frac{1}{2} Q\left(\frac{\gamma_1+2}{\sqrt{\frac{8}{n} \tau_1^2 - \gamma_1^2}}\right) + \frac{1}{2} Q\left(\frac{-\gamma_2+2}{\sqrt{\frac{8}{n} \tau_2^2 - \gamma_2^2}}\right)
\end{align*}
where $\beta_i, \gamma_i$ and $\tau_i$ are solutions of the following scalar optimization problem:
\begin{align*}
     & \min_{\tau_i \ge 0} \max_{\beta_i \ge 0, \gamma_i} (d-2)(\lambda  + \frac{n\beta_1}{4\tau_1} \sigma^2_1 + \frac{n\beta_2}{4\tau_2} \sigma^2_2 ) ^{-1} \left(\frac{n^2 \beta_1 \beta_2 \gamma_1 \gamma_2}{8 \tau_1 \tau_2}r+\sum_{i=1}^2 \left[\beta_i^2 \sigma^2_i + \frac{n^2 \beta_i^2 \gamma_i^2}{16 \tau_i^2}\right]\right)  \\
     & + (\lambda  + \frac{n\beta_1}{4\tau_1} (\sigma^2_1 + \sigma^2d) + \frac{n\beta_2}{4\tau_2} \sigma^2_2 ) ^{-1}   \left(\frac{n^2 \beta_1 \beta_2 \gamma_1 \gamma_2}{8 \tau_1 \tau_2}r+ \beta_1^2\sigma^2d+ \sum_{i=1}^2 \left[\beta_i^2 \sigma^2_i + \frac{n^2 \beta_i^2 \gamma_i^2}{16 \tau_i^2}\right]\right) \\
     & + (\lambda  + \frac{n\beta_1}{4\tau_1} \sigma^2_1  + \frac{n\beta_2}{4\tau_2} (\sigma^2_2 + \sigma^2d)) ^{-1}
    \left(\frac{n^2 \beta_1 \beta_2 \gamma_1 \gamma_2}{8 \tau_1 \tau_2}r+ \beta_2^2 \sigma^2d + \sum_{i=1}^2 \left[ \beta_i^2 \sigma^2_i + \frac{n^2 \beta_i^2 \gamma_i^2}{16 \tau_i^2}\right]\right) \\
     &+\sum_{i=1}^2 \left[\frac{\beta_i \tau_i}{2} - \frac{n\beta_i \gamma_i^2}{16\tau_i} - \frac{n \beta_i \gamma_i (-1)^{i+1}}{4\tau_i} - \frac{\beta_i^2}{4}\right]
\end{align*}
\label{cor: rk_1_noise}
\end{corollary}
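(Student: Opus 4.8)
The plan is to treat Corollary~\ref{cor: rk_1_noise} as a direct specialization of Theorem~\ref{thm: classif_l2}: under the model \eqref{eq:SigmaModel} the $Q$-function expression for the classification error is unchanged (it depends only on $\gamma_i,\tau_i$), so the entire task is to evaluate the matrix-trace term appearing in the scalar optimization of Theorem~\ref{thm: classif_l2} for the rank-one-plus-scaled-identity covariances. Writing $a_i := \frac{n\beta_i}{4\tau_i}$ and substituting $\Sigma_i = \sigma_i^2 I + \nu_i\nu_i^T$, the ``resolvent'' matrix becomes $M := cI + a_1\nu_1\nu_1^T + a_2\nu_2\nu_2^T$ with $c := \lambda + a_1\sigma_1^2 + a_2\sigma_2^2$, and the matrix contracted against it becomes $N := pI + \beta_1^2\nu_1\nu_1^T + \beta_2^2\nu_2\nu_2^T$, where $p := \frac{n^2\beta_1\beta_2\gamma_1\gamma_2}{8\tau_1\tau_2}r + \sum_{i=1}^2[\beta_i^2\sigma_i^2 + \frac{n^2\beta_i^2\gamma_i^2}{16\tau_i^2}]$ collects the scalar multiples of the identity. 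The target is then $\tr[M^{-1}N]$.

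Next I would exploit the low-rank structure. Since $\nu_1,\nu_2$ are i.i.d. $\mathcal{N}(0,\sigma^2 I_d)$, standard Gaussian concentration gives $\|\nu_i\|^2 = \sigma^2 d\,(1+o(1))$ and $\nu_1^T\nu_2 = o(d)$, so in the high-dimensional limit $\nu_1$ and $\nu_2$ are asymptotically orthogonal. Consequently $M$ and $N$ are (asymptotically) simultaneously diagonalizable: both act as the scalars $c$ and $p$ respectively on the $(d-2)$-dimensional orthogonal complement of $\operatorname{span}(\nu_1,\nu_2)$, while the unit vectors $\hat\nu_i := \nu_i/\|\nu_i\|$ are common eigenvectors. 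Inverting $M$ is then immediate either by this spectral picture or, equivalently, by applying the Sherman--Morrison--Woodbury identity to the rank-two perturbation of $cI$.

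With the eigenstructure in hand the trace decomposes into three contributions. The orthogonal complement contributes $(d-2)\,p/c$; the direction $\hat\nu_1$ contributes $(p+\beta_1^2\|\nu_1\|^2)/(c+a_1\|\nu_1\|^2)$; and $\hat\nu_2$ contributes $(p+\beta_2^2\|\nu_2\|^2)/(c+a_2\|\nu_2\|^2)$. Substituting $\|\nu_i\|^2 = \sigma^2 d$ and unpacking $c$, $p$, and $a_i$ reproduces exactly the three summands displayed in the Corollary: the denominators $\lambda + \frac{n\beta_1}{4\tau_1}\sigma_1^2 + \frac{n\beta_2}{4\tau_2}\sigma_2^2$, then $\lambda + \frac{n\beta_1}{4\tau_1}(\sigma_1^2+\sigma^2 d) + \frac{n\beta_2}{4\tau_2}\sigma_2^2$, and $\lambda + \frac{n\beta_1}{4\tau_1}\sigma_1^2 + \frac{n\beta_2}{4\tau_2}(\sigma_2^2+\sigma^2 d)$, each paired with the numerator obtained by adding the corresponding $\beta_i^2\sigma^2 d$ shift to $p$. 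Feeding this expression back into the objective of Theorem~\ref{thm: classif_l2}, and keeping the remaining $\sum_{i=1}^2[\cdots]$ term verbatim, yields the stated scalar min-max problem and hence the claimed error.

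The main obstacle is not the algebra but justifying the asymptotic-orthogonality step rigorously: one must show that the $o(1)$ errors in $\|\nu_i\|^2/(\sigma^2 d)$ and in the normalized cross term $\nu_1^T\nu_2/d$ propagate to a vanishing perturbation of $\tr[M^{-1}N]$ after the problem's normalization. This requires a stability estimate for the map $(\nu_1,\nu_2)\mapsto \tr[M^{-1}N]$ --- most cleanly obtained by bounding $\|M^{-1}\|_{op}\le 1/c$ and controlling how a rank-two perturbation with small off-diagonal Gram entries shifts the eigenvalues in the $\operatorname{span}(\nu_1,\nu_2)$ block --- together with a careful accounting of the rank-two Woodbury expansion so that the cross terms proportional to $\nu_1^T\nu_2$ are correctly shown to be negligible.
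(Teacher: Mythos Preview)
Your proposal is correct and follows essentially the same route as the paper: decompose $\mathbb{R}^d$ into $\operatorname{span}(\nu_1,\nu_2)$ and its orthogonal complement, use Gaussian concentration to replace $\|\nu_i\|^2$ by $\sigma^2 d$ and $\nu_1^T\nu_2$ by $0$ so that $\nu_1,\nu_2$ become eigenvectors of both $M$ and $N$, and read off the three eigenvalue contributions to $\tr[M^{-1}N]$. The paper does exactly this invariant-subspace computation (without invoking Woodbury explicitly) and is in fact less careful than you are about the $o(1)$ stability step, which it simply asserts via ``$\approx$''.
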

\begin{remark}
        Since the first term in the objective above has $(d-2)$ in front of it, and the other two do not, one might approximate the objective by only the first term for large enough $d$. Indeed, even though the second and the third terms have some occurrences of $d$ as well, in both cases this quantity is present both in the numerator and the denominator of a certain fraction. This tells us that as $d$ grows large, we should not expect the performance to be very different from the case $\tilde{\Sigma}_1 = \sigma_1^2I, \tilde{\Sigma}_2 = \sigma_2^2I$, as this is what the first term corresponds to if we replace $(d-2)$ by $d$, again by the virtue of the fact that $d$ is large. Since this observation is not rigorous, we check it numerically in the next section. 
        \label{rem: rk_1_noise}
\end{remark}
To conclude, note that it is also feasible to study linear classification for GMMs via approaches based on AMP (see \citep{NEURIPS2021_543e8374}). Since the present paper is devoted to studying approaches based on Gaussian comparison inequalities, covering other methods goes beyond its scope. But in any case, the authors believe that developing new tools for tackling a problem is always of value, as it usually leads to new insights, such as Remark \ref{rem: rk_1_noise}, which has not been observed before in the literature to the best of the authors' knowledge.

\section{Numerical experiments}
\subsection{Multi-source regression plots}
To validate Theorem~\ref{thm:multi-source-gen-error} numerically we consider the specific case of all $\ell_l$ being the square loss and $R$ being square regularization parameterized by regularization strength $\lambda$, ie $R = \lambda\norm{\cdot}_2^2$. For this particular choice of loss and regularizer, the expressions are given in the supplement section~\ref{app:sec:mutlisourceGauss:subsec:l2l2}. The results for a variety of values of the regularization strengths $\lambda$ can be seen in Fig.~\ref{fig:lam_vary}, for several choices for the ratio $\frac{d}{n}$. The figure on the left shows the training error and the right shows the generalization error with respect to new data. The data has $k=3$ channels, and $n = 100$, and $d=50, 100, 150$ for each of the three curves. For $\Sigma_l$ we use the model described in equation~\eqref{eq:SigmaModel}, with $\sigma_1 = 0.5, \sigma_2 = 0.7, \sigma_3 = 0.3$ respectively and $\nu_i$ drawn from i.i.d standard Gaussian. For the choice of $\theta^*$ we choose a vector of all $1$s. Finally, the noise strength for each of the channels was chosen to be $\sigma_{\bnu, 1} = 0.1, \sigma_{\bnu, 2} = 0.2, \sigma_{\bnu, 3} = 0.3$.
In Fig.~\ref{fig:lam_vary}, the solid line denotes the values predicted by our alternative optimization, given in \eqref{eq:mult_source_ao} solved numerically using zeroth order gradient ascent/descent. The marks are given by a numerical simulation of the problem \eqref{eq:GaussianRegression} with the same parameters. As can be observed the predicted values match with a high degree of accuracy to their numerical equivalents, even while being in the finite regime. 
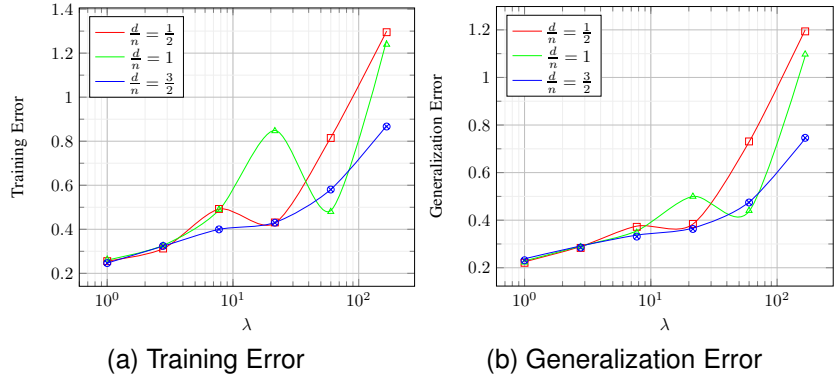
\begin{figure}[!h]
    \centering
    \subfloat [Training Error]{\label{subfig:lam_vary_train}
    \resizebox{0.3\textwidth}{!}{%
        \begin{tikzpicture}
        \begin{axis}[
          xmode=log,
          xlabel={$\lambda$},
          ylabel=Training Error,
          legend pos = north west,
          grid = both,
            minor tick num = 1,
            major grid style = {lightgray},
            minor grid style = {lightgray!25},]
        \addplot[ smooth, thin, red] table[ y=train, x=lam]{Data/lamvary_n=100_m=50_k=3_theory.dat};
        \addlegendentry{$\frac{d}{n} = \frac{1}{2}$}
        \addplot[ smooth, thin, green] table[ y=train, x=lam]{Data/lamvary_n=100_m=100_k=3_theory.dat};
        \addlegendentry{$\frac{d}{n} = 1$}
        \addplot[ smooth, thin, blue] table[ y=train, x=lam]{Data/lamvary_n=100_m=150_k=3_theory.dat};
        \addlegendentry{$\frac{d}{n} = \frac{3}{2}$}
        \addplot[color = red, mark = square, mark size = 2pt, only marks] table[ y=train, x=lam, y error=trainstd]{Data/lamvary_n=100_m=50_k=3_exp.dat};
        \addplot[color = green, mark = triangle, mark size = 2pt, only marks] table[ y=train, x=lam, y error=trainstd]{Data/lamvary_n=100_m=100_k=3_exp.dat};
        \addplot[color = blue, mark = otimes, mark size = 2pt, only marks] table[ y=train, x=lam, y error=trainstd]{Data/lamvary_n=100_m=150_k=3_exp.dat};
        \end{axis}
    \end{tikzpicture}}}
\subfloat [Generalization Error] {
    \label{subfig:lam_vary_gen}
    \resizebox{0.3\textwidth}{!}{%
        \begin{tikzpicture}
        \begin{axis}[
          xmode=log,
          xlabel={$\lambda$},
          ylabel=Generalization Error,
          legend pos = north west,
          grid = both,
            minor tick num = 1,
            major grid style = {lightgray},
            minor grid style = {lightgray!25},]
        \addplot[ smooth, thin, red] table[ y=gen, x=lam]{Data/lamvary_n=100_m=50_k=3_theory.dat};
        \addlegendentry{$\frac{d}{n} = \frac{1}{2}$}
        \addplot[ smooth, thin, green] table[ y=gen, x=lam]{Data/lamvary_n=100_m=100_k=3_theory.dat};
        \addlegendentry{$\frac{d}{n} = 1$}
        \addplot[ smooth, thin, blue] table[ y=gen, x=lam]{Data/lamvary_n=100_m=150_k=3_theory.dat};
        \addlegendentry{$\frac{d}{n} = \frac{3}{2}$}
        \addplot[color = red, mark = square, mark size = 2pt, only marks] table[ y=gen, x=lam, y error=trainstd]{Data/lamvary_n=100_m=50_k=3_exp.dat};
        \addplot[color = green, mark = triangle, mark size = 2pt, only marks] table[ y=gen, x=lam, y error=trainstd]{Data/lamvary_n=100_m=100_k=3_exp.dat};
        \addplot[color = blue, mark = otimes, mark size = 2pt, only marks] table[ y=gen, x=lam, y error=trainstd]{Data/lamvary_n=100_m=150_k=3_exp.dat};
        \end{axis}
    \end{tikzpicture}}}
    \caption{Training and Generalization Error for the Multi-source Gaussian Regression}
    \label{fig:lam_vary}
\end{figure}
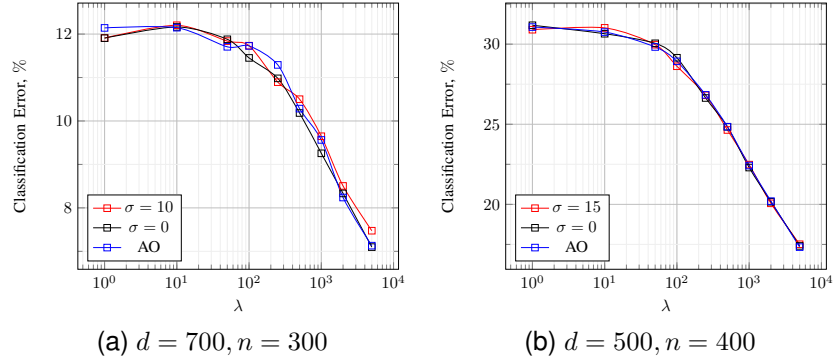
\begin{figure}[!h]
    \centering
   \subfloat [$d = 700, n = 300$] {
    \label{subfig:binary_d=700}
    \resizebox{0.3\textwidth}{!}{%
        \begin{tikzpicture}
        \begin{axis}[
          xmode=log,
          xlabel={$\lambda$},
          ylabel=Classification Error$\text{, }\%$,
          legend pos = south west,
          grid = both,
            minor tick num = 1,
            major grid style = {lightgray},
            minor grid style = {lightgray!25},
          ]
        \addplot[mark = square, smooth, thin, red] 
        table[ y=POR1, x=lam]{Binary_Data/newd700.dat};
        \addlegendentry{$\sigma = 10$}
        \addplot[mark = square, smooth, thin, black] table[ y=PO, x=lam]{Binary_Data/newd700.dat};
        \addlegendentry{ $ \sigma = 0$}
        \addplot[mark=square, smooth, thin, blue] table[ y=AO, x=lam]{Binary_Data/newd700.dat};
        \addlegendentry{AO} 
        \end{axis}
    \end{tikzpicture}   
        } 
    }
\subfloat [$d = 500, n = 400$] {
    \label{subfig:bin_d=500}
    \resizebox{0.3\textwidth}{!}{%
        \begin{tikzpicture}
        \begin{axis}[
          xmode=log,
          xlabel={$\lambda$},
          ylabel=Classification Error$\text{, }\%$,
          legend pos = south west,
          grid = both,
            minor tick num = 1,
            major grid style = {lightgray},
            minor grid style = {lightgray!25},
          ]
        \addplot[mark = square, smooth, thin, red] table[ y=POR1, x=lam]{Binary_Data/newd500.dat};
        \addlegendentry{$\sigma = 15$}
        \addplot[mark = square, smooth, thin, black] table[ y=PO, x=lam]{Binary_Data/newd500.dat};
        \addlegendentry{$\sigma = 0$}
        \addplot[mark = square, smooth, thin, blue] table[ y=AO, x=lam]{Binary_Data/newd500.dat};
        \addlegendentry{AO} 
        \end{axis}
    \end{tikzpicture}
        } 
    }
       \caption{The error for binary classification for GMMs}
    \label{fig:bin}
\end{figure}

\subsection{Binary classification for general GMMs}

We conducted experiments to validate the results of Section \ref{sec: class} numerically as presented in Fig. \ref{fig:bin}. To do so, we fixed $\calL(\cdot) = f(\cdot) = \|\cdot\|_2^2$ and defined $\Sigma_1, \Sigma_2$ following the notation from \eqref{eq:SigmaModel}. 
We took $\sigma_1 = \sigma_2 = 3, \sigma = 10, d = 700 ,n = 300, r = 0.9$ for the plot on the LHS and $\sigma_1 = \sigma_2 = 5, \sigma = 15, d = 500, n = 400, r = 0.8$ for the plot on the RHS, solved \eqref{eq:classification} directly for $\lambda \in \{1,10,50,100,250,500,1000,2000,5000\}$ using CVXPY \citep{diamond2016cvxpy, agrawal2018rewriting}, evaluated the corresponding classification errors via \eqref{eq: error} and plotted them in red with legends "$\sigma = 10$" and "$\sigma = 15$"  respectively. After that, we solved the scalarized objective from Corollary \ref{cor: rk_1_noise} by noticing that $\beta_1 = \beta_2$, $\tau_1 = \tau_2$ and $\gamma_1 = -\gamma_2$ due to the distribution of $(\Sigma_1, \Sigma_2)$ being exchangeable, running a grid search over $\tau: = \tau_1 = \tau_2$ and using scipy.optimize.minimize to perform optimization over the remaining parameters, which we then used to evaluate the corresponding classification error predicted by Corollary \ref{cor: rk_1_noise} for each $\lambda$ and plotted it in blue with a legend "AO". Finally, as an additional sanity check, we decided to plot the classification error corresponding to $\tilde{\Sigma}_1 = \sigma_1^2I$ and $\tilde{\Sigma}_2 = \sigma_2^2I$ because according to Remark \ref{rem: rk_1_noise} we expect it to not deviate too much from the classification error for the GMM with the corresponding $\Sigma_1, \Sigma_2$. The results of the latter procedure are depicted in black with a legend "$\sigma = 0$". All three described lines match very closely in both settings as expected.

\section{Conclusion and future work}

We presented a new pair of Gaussian processes that satisfy Gordon's comparison conditions and thereby developed a generalization of the Convex Gaussian Min-Max Theorem. We further demonstrated its use in two examples, namely multi-source regression and binary classification for Gaussian data with arbitrary covariance matrices.  Future work would be to see if further pairs of comparable processes can be found. Of particular interest are those where $G_\ell$ and $G_{\ell'}$ do not have to be independent for $\ell \ne \ell'$. 

\newpage

\section{Appendix}
\section{Proof of Theorem \ref{thm: gcgmt}}
We will need the following technical lemma:
\begin{lemma}
    Under the notation from Theorem \ref{thm: gcgmt}, let $\delta >0$ and $\calS^{\delta}_w, \calS^{\delta}_v$ be arbitrary finite $\delta$ - nets for $\calS_w$ and $\calS_v$ respectively and define
\begin{align*}
    &\tilde{\Phi}(G) = \min_{v \in \calS_v} \max_{w\in \calS_w} \sum_{\ell = 1}^k v_{\ell}^T G_\ell\Sigma_{\ell}^{\frac{1}{2}} w- \psi(w,v) \\
&\tilde{\Phi}^{\delta}(G) = \min_{v \in \calS^{\delta}_v} \max_{w\in \calS^{\delta}_w} \sum_{\ell = 1}^k v_{\ell}^T G_\ell\Sigma_{\ell}^{\frac{1}{2}} w - \psi(w,v) \\
&\tilde{\phi}(g,h) = \min_{v\in \calS_v} \max_{w\in \calS_w} \sum_{\ell = 1}^k \|v_\ell\|_2 g_{\ell}^T \Sigma_{\ell}^{\frac{1}{2}}w + \|\Sigma_{\ell}^{\frac{1}{2}}w\|_2 h_{\ell}^T v_\ell - \psi(w,v) \\
&\tilde{\phi}^{\delta}(g,h) = \min_{v\in \calS^{\delta}_v} \max_{w\in \calS^{\delta}_w} \sum_{\ell = 1}^k \|v_\ell\|_2 g_{\ell}^T \Sigma_{\ell}^{\frac{1}{2}}w + \|\Sigma_{\ell}^{\frac{1}{2}}w\|_2 h_{\ell}^T v_\ell - \psi(w,v) \\
&\Phi^{\delta} (G) = \min_{w\in \calS^{\delta}_w} \max_{v \in \calS^{\delta}_{v}} \sum_{\ell = 1}^k v_{\ell}^T G_\ell\Sigma_{\ell}^{\frac{1}{2}} w  + \psi(w,v) \\
&\phi^{\delta} (g, h) = \min_{w\in \calS^{\delta}_w} \max_{v\in \calS^{\delta}_v} \sum_{\ell = 1}^k \|v_\ell\|_2 g_{\ell}^T \Sigma_{\ell}^{\frac{1}{2}}w + \|\Sigma_{\ell}^{\frac{1}{2}}w\|_2 h_{\ell}^T v_\ell + \psi(w,v)
\end{align*}
Then the following holds:
\begin{align*}
    & \lim_{\delta \to 0} \bbP(\chi^{\delta} < \chi) = 1 \text{ if } \chi \in \{\Phi(G), \tilde{\Phi}(G)\} ~~~\mbox{and}~~~ \lim_{\delta \to 0} \bbP(\chi < \chi^{\delta}) = 1 \text{ if }  \chi \in\{\phi(g,h), \tilde{\phi}(g,h)\} \\
\end{align*}
\label{lm: delta_nets}
\end{lemma}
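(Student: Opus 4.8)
The plan is to reduce all six functionals to deterministic statements about continuous functions on compact sets, prove a single uniform net-approximation bound, and then upgrade the resulting almost-sure convergence into the stated one-sided probabilistic comparisons. First I would condition on the realization of the Gaussian data: almost surely $\|G_\ell\|,\|g_\ell\|,\|h_\ell\|$ are all finite, and since $\calS_w,\calS_{v_1},\dots,\calS_{v_k}$ are compact and $\psi$ is continuous, each objective integrand—both the bilinear one $\sum_\ell v_\ell^T G_\ell\Sigma_\ell^{1/2}w$ appearing in $\Phi,\tilde\Phi$ and the norm-coupled one $\sum_\ell(\|v_\ell\| g_\ell^T\Sigma_\ell^{1/2}w+\|\Sigma_\ell^{1/2}w\| h_\ell^T v_\ell)$ appearing in $\phi,\tilde\phi$—is jointly continuous, hence uniformly continuous, on the compact product $\calS_w\times\calS_v$. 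Consequently every inner maximum and outer minimum is attained, and for a fixed realization I may speak of a common modulus of continuity $\omega(\cdot)$ with $\omega(\delta)\to0$.

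The heart of the argument is a deterministic approximation bound obtained by discretizing the two variables one at a time. Fix one functional, say $\Phi$. Passing from $\calS_v$ to the $\delta$-net $\calS_v^{\delta}$ in the inner maximization can only lower the value, and by the net property together with uniform continuity it lowers it by at most $\omega(\delta)$; passing from $\calS_w$ to $\calS_w^{\delta}$ in the outer minimization can only raise the value, again by at most $\omega(\delta)$. Hence $|\Phi^{\delta}-\Phi|\le\omega(\delta)$, and the identical bound holds for $\tilde\Phi,\phi,\tilde\phi$, the $\tilde{\phantom{x}}$ functionals being the same objects with the roles of $w$ and $v$ interchanged and $\psi$ replaced by $-\psi$—precisely the symmetry already used in Lemma \ref{lm: comp_X_Y}, where both orderings of the index sets satisfy Gordon's hypotheses. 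Since $\omega(\delta)\to0$ for every fixed realization, each net functional converges to its continuous counterpart almost surely.

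To obtain the one-sided statements I would separate the two discretization effects. For $\Phi$ and $\tilde\Phi$ the decrease coming from restricting the inner maximization is the effect to exploit: I would argue that for the bilinear process the inner maximizer is, almost surely, located where the objective is genuinely sensitive in $v$ (its $v$-gradient carries the Gaussian term $G_\ell^T\Sigma_\ell^{1/2}w$, which is non-degenerate a.s.), so the inner discretization lowers the value by a strictly positive amount that survives the oppositely signed perturbation from the outer discretization; letting $\delta\to0$ then yields $\bbP(\Phi^{\delta}<\Phi)\to1$, and likewise for $\tilde\Phi$. For the auxiliary functionals the roles reverse: the norm-coupling places the governing sensitivity on the outer minimization, whose restriction raises the value, giving $\bbP(\phi<\phi^{\delta})\to1$ and $\bbP(\tilde\phi<\tilde\phi^{\delta})\to1$. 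Throughout I would use that the optimal values are a.s. continuous random variables, so that exact-equality events $\{\chi^{\delta}=\chi\}$ carry vanishing probability in the limit.

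The main obstacle is exactly this last step. The inner-maximization and outer-minimization discretizations move $\chi^{\delta}$ in opposite directions, each by $O(\omega(\delta))$, so the deterministic bound of the second step gives only convergence and not the sign of $\chi^{\delta}-\chi$. Pinning down the correct sign with probability tending to one requires a quantitative, realization-dependent comparison of the two effects localized near the optimizer—matching the first-order sensitivity of the ``active'' subproblem against the flatness of the other—together with the non-degeneracy of the Gaussian gradients and the absence of atoms in the law of $\chi$, all uniformly over arbitrary $\delta$-nets. Once this directional bookkeeping is in place, the passage from the finite nets to the compact sets in Theorem \ref{thm: gcgmt} follows by combining these convergences with Gordon's finite-set inequality (Theorem \ref{thm: comparison}) applied to $\calS_w^{\delta},\calS_v^{\delta}$.
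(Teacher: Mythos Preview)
Your diagnosis of the difficulty is right, but you are chasing a stronger statement than the paper actually proves or needs. If you read the paper's proof of the lemma closely, what is established is \emph{not} the strict inequality $\bbP(\chi^{\delta}<\chi)\to1$ but rather quantitative one--sided bounds of the type
\[
\Phi^{\delta}(G)-\Phi(G)\ \le\ \Bigl(\sum_{\ell}\sqrt{d}+\sqrt{n_\ell}+\tfrac{1}{\sqrt\delta}\Bigr)D\sigma\delta+L\delta,
\qquad
\phi^{\delta}(g,h)-\phi(g,h)\ \ge\ -\Bigl(\cdots\Bigr)D\sigma\delta-L\delta,
\]
each holding with probability at least $(1-2e^{-1/(4\delta)})^k$; the lemma's ``$<$'' should really be read as ``$\le$ up to an $o(1)$ error''. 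These one--sided bounds are exactly what is plugged into the proof of Theorem~\ref{thm: gcgmt}, and your non-degeneracy/sensitivity programme---comparing first-order effects at the optimizer, invoking atom-freeness of the law of $\chi$---is neither required nor robust (nothing rules out flat directions or boundary optimizers).

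The paper avoids the ``two opposing effects'' problem you identify by discretizing \emph{asymmetrically}. For $\Phi^{\delta}-\Phi$ it first replaces only the inner maximization set $\calS_v$ by the net $\calS_v^{\delta}$; since $\calS_v^{\delta}\subset\calS_v$ this can only \emph{decrease} $\Phi$, so the upper bound on the difference is unaffected. What remains is a comparison of $\min_{\calS_w^\delta}$ versus $\min_{\calS_w}$ of the same inner functional, and this is handled by taking the continuous argmin $w_*$, picking $w_*^{\delta}$ within $\delta$ in the net, and invoking a Lipschitz bound in $w$; the randomness enters only through $\|G_\ell\|_{op}$, which is controlled by Gaussian concentration. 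The dual trick (restrict the outer minimization first) gives the lower bound for $\phi^{\delta}-\phi$, and the $\tilde{\ }$-quantities are the same with the roles of $w$ and $v$ exchanged. Thus only one net-approximation error ever enters with the relevant sign, and no competition has to be resolved. Your symmetric two-sided bound $|\chi^\delta-\chi|\le\omega(\delta)$ is correct but throws away precisely the monotonicity that makes the one-sided conclusion immediate.
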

\begin{proof}
    Define 
    \begin{align*}
        &D = \max_{w \in \calS_w, v \in \calS_v}\{\|w\|, \|v\|\} \\
        & L = \max_{(w,v) \ne (w',v')} \frac{|\psi(w,v) - \psi(w',v')|}{\|w - w'\|+ \|v - v'\|} \\
    \end{align*}
    We then have:
    \begin{align*}
        & \Phi^{\delta}(G) -  \Phi(G) = \min_{w\in \calS^{\delta}_w} \max_{v \in \calS^{\delta}_{v}} \sum_{\ell = 1}^k v_{\ell}^T G_\ell\Sigma_{\ell}^{\frac{1}{2}} w  + \psi(w,v) - \min_{w\in \calS_w} \max_{v \in \calS_{v}} \sum_{\ell = 1}^k v_{\ell}^T G_\ell\Sigma_{\ell}^{\frac{1}{2}} w  + \psi(w,v) \le  \\
        & \le \min_{w\in \calS^{\delta}_w} \max_{v \in \calS^{\delta}_{v}} \sum_{\ell = 1}^k v_{\ell}^T G_\ell\Sigma_{\ell}^{\frac{1}{2}} w  + \psi(w,v) - \min_{w\in \calS_w} \max_{v \in \calS^{\delta}_{v}} \sum_{\ell = 1}^k v_{\ell}^T G_\ell\Sigma_{\ell}^{\frac{1}{2}} w  + \psi(w,v) \le \\
        & \le \max_{v \in \calS^{\delta}_{v}} \sum_{\ell = 1}^k v_{\ell}^T G_\ell\Sigma_{\ell}^{\frac{1}{2}} w^{\delta}_*  + \psi(w^{\delta}_*,v) - \max_{v \in \calS^{\delta}_{v}} \sum_{\ell = 1}^k v_{\ell}^T G_\ell\Sigma_{\ell}^{\frac{1}{2}} w_*  + \psi(w_*,v) \le \\
        & \le \max_{v \in \calS^{\delta}_{v}} \left[\sum_{\ell = 1}^k v_{\ell}^T G_\ell\Sigma_{\ell}^{\frac{1}{2}} w^{\delta}_*  + \psi(w^{\delta}_*,v)\right] - \left[\sum_{\ell = 1}^k v_{\ell}^T G_\ell\Sigma_{\ell}^{\frac{1}{2}} w_*  + \psi(w_*,v)\right] = \\
        & = \left[\sum_{\ell = 1}^k {v_{\ell}}_*^T G_\ell\Sigma_{\ell}^{\frac{1}{2}} w^{\delta}_* + \psi(w^{\delta}_*,v_*)\right] - \left[\sum_{\ell = 1}^k {v_{\ell}}_*^T G_\ell\Sigma_{\ell}^{\frac{1}{2}} w_*  + \psi(w_*,v_*)\right]  \le \left(\sum^k_{\ell = 1}\|G_\ell\|_{op}\|\Sigma_{\ell}^{\frac{1}{2}}\|_{op} \right) D\delta + L\delta \le \\
        & \le \left(\sum^k_{\ell = 1}\sqrt{d} + \sqrt{n_\ell} + \frac{1}{\sqrt{\delta}} \right) D\sigma\delta + L\delta \text{ with probability at least } (1 - 2e^{\frac{-1}{4\delta}})^k \text{ due to Gaussian concentration, where }   
        \end{align*}
        \begin{align*}
        & \sigma = \max_\ell \|\Sigma^{\frac{1}{2}}_{\ell}\|_{op}\\
        & w_* = \argmin_{w\in \calS_w} \max_{v \in \calS^{\delta}_{v}} \sum_{\ell = 1}^k v_{\ell}^T G_\ell\Sigma_{\ell}^{\frac{1}{2}} w  + \psi(w,v) \\
        & w^{\delta}_* \in  \calS^{\delta}_w \text{ is such that } \|w^{\delta}_* - w_*\| \le \delta \\
        & v_* = \argmax_{v \in \calS^{\delta}_{v}} \left[\sum_{\ell = 1}^k v_{\ell}^T G_\ell\Sigma_{\ell}^{\frac{1}{2}} w^{\delta}_*  + \psi(w^{\delta}_*,v)\right] - \left[\sum_{\ell = 1}^k v_{\ell}^T G_\ell\Sigma_{\ell}^{\frac{1}{2}} w_*  + \psi(w_*,v)\right] \end{align*}
    We also have:
    \begin{align*}     
    &\phi^{\delta}(g,h) -  \phi(g,h) =   \min_{w\in \calS^{\delta}_w} \max_{v\in \calS^{\delta}_v} \sum_{\ell = 1}^k \|v_\ell\|_2 g_{\ell}^T \Sigma_{\ell}^{\frac{1}{2}}w + \|\Sigma_{\ell}^{\frac{1}{2}}w\|_2 h_{\ell}^T v_\ell + \psi(w,v) - \\
    & - \min_{w\in \calS_w} \max_{v\in \calS_v} \sum_{\ell = 1}^k \|v_\ell\|_2 g_{\ell}^T \Sigma_{\ell}^{\frac{1}{2}}w + \|\Sigma_{\ell}^{\frac{1}{2}}w\|_2 h_{\ell}^T v_\ell + \psi(w,v)  \ge \min_{w\in \calS^{\delta}_w} \max_{v\in \calS^{\delta}_v} \sum_{\ell = 1}^k \|v_\ell\|_2 g_{\ell}^T \Sigma_{\ell}^{\frac{1}{2}}w + \|\Sigma_{\ell}^{\frac{1}{2}}w\|_2 h_{\ell}^T v_\ell + \psi(w,v) - \\
    & - \min_{w\in \calS^{\delta}_w} \max_{v\in \calS_v} \sum_{\ell = 1}^k \|v_\ell\|_2 g_{\ell}^T \Sigma_{\ell}^{\frac{1}{2}}w + \|\Sigma_{\ell}^{\frac{1}{2}}w\|_2 h_{\ell}^T v_\ell + \psi(w,v) \ge \\
    & \ge \min_{w\in \calS^{\delta}_w}  \left[\max_{v\in \calS^{\delta}_v}\sum_{\ell = 1}^k \|v_\ell\|_2 g_{\ell}^T \Sigma_{\ell}^{\frac{1}{2}}w + \|\Sigma_{\ell}^{\frac{1}{2}}w\|_2 h_{\ell}^T v_\ell + \psi(w,v) \right] - \\
    & - \left[\max_{v\in \calS_v}\sum_{\ell = 1}^k \|v_\ell\|_2 g_{\ell}^T \Sigma_{\ell}^{\frac{1}{2}}w + \|\Sigma_{\ell}^{\frac{1}{2}}w\|_2 h_{\ell}^T v_\ell + \psi(w,v)\right] \ge \\
    & \ge \left[\sum_{\ell = 1}^k \|{v^{\delta}_\ell}_*\|_2 g_{\ell}^T \Sigma_{\ell}^{\frac{1}{2}}w_* + \|\Sigma_{\ell}^{\frac{1}{2}}w_*\|_2 h_{\ell}^T {v^{\delta}_\ell}_* + \psi(w_*,v^{\delta}_*) \right] - \\
    & -\left[\sum_{\ell = 1}^k \|{v_\ell}_*\|_2 g_{\ell}^T \Sigma_{\ell}^{\frac{1}{2}}w_* + \|\Sigma_{\ell}^{\frac{1}{2}}w_*\|_2 h_{\ell}^T {v_\ell}_* + \psi(w_*,v_*) \right] \ge \\
    & \ge - \left(\sum^k_{\ell = 1} (\|g_\ell\| + \|h_\ell\|) \|\Sigma_\ell^{\frac{1}{2}}\|_{op} \right)D\delta - L\delta \ge  -\left(\sum^k_{\ell = 1}\sqrt{d} + \sqrt{n_\ell} + \frac{1}{\sqrt{\delta}} \right) D\sigma\delta - L\delta\\
    & \text{ with probability at least } (1 - 2e^{\frac{-1}{4\delta}})^k \text{ due to Gaussian concentration, where} \\
    & w_* = \argmin_{w\in \calS^{\delta}_w}  \left[\max_{v\in \calS^{\delta}_v}\sum_{\ell = 1}^k \|v_\ell\|_2 g_{\ell}^T \Sigma_{\ell}^{\frac{1}{2}}w + \|\Sigma_{\ell}^{\frac{1}{2}}w\|_2 h_{\ell}^T v_\ell + \psi(w,v) \right] - \\
    & - \left[\max_{v\in \calS_v}\sum_{\ell = 1}^k \|v_\ell\|_2 g_{\ell}^T \Sigma_{\ell}^{\frac{1}{2}}w + \|\Sigma_{\ell}^{\frac{1}{2}}w\|_2 h_{\ell}^T v_\ell + \psi(w,v)\right] \\
    & v_* = \argmax_{v\in \calS_v} \sum_{\ell = 1}^k \|{v_\ell}\|_2 g_{\ell}^T \Sigma_{\ell}^{\frac{1}{2}}w_* + \|\Sigma_{\ell}^{\frac{1}{2}}w_*\|_2 h_{\ell}^T {v_\ell} + \psi(w_*,v) \\
    & v^{\delta}_* \in \calS^{\delta}_v \text{ is such that }\|v^{\delta}_* - v_*\| \le \delta
\end{align*}
We obtain in a similar fashion: 
    \begin{align*}
        & \tilde{\Phi}^{\delta}(G) -  \tilde{\Phi}(G) = \min_{v \in \calS^{\delta}_{v}} \max_{w\in \calS^{\delta}_w}  \sum_{\ell = 1}^k v_{\ell}^T G_\ell\Sigma_{\ell}^{\frac{1}{2}} w  - \psi(w,v) - \min_{v \in \calS_{v}} \max_{w\in \calS_w} \sum_{\ell = 1}^k v_{\ell}^T G_\ell\Sigma_{\ell}^{\frac{1}{2}} w  - \psi(w,v) \le  \\
        & \le \min_{v \in \calS^{\delta}_{v}} \max_{w\in \calS^{\delta}_w} \sum_{\ell = 1}^k v_{\ell}^T G_\ell\Sigma_{\ell}^{\frac{1}{2}} w  - \psi(w,v) - \min_{v \in \calS_v} \max_{w\in \calS^{\delta}_w} \sum_{\ell = 1}^k v_{\ell}^T G_\ell\Sigma_{\ell}^{\frac{1}{2}} w  - \psi(w,v) \le \\
        & \le \max_{w \in \calS^{\delta}_w} \sum_{\ell = 1}^k {v^{\delta}_{\ell}}_*^T G_\ell\Sigma_{\ell}^{\frac{1}{2}} w  - \psi(w,v^{\delta}_*) - \max_{w \in \calS^{\delta}_w} \sum_{\ell = 1}^k v_*^T G_\ell\Sigma_{\ell}^{\frac{1}{2}} w  - \psi(w,v_*) \le  \\
        & \le \max_{w \in \calS^{\delta}_w} \left[\sum_{\ell = 1}^k {v^{\delta}_{\ell}}_*^T G_\ell\Sigma_{\ell}^{\frac{1}{2}} w  - \psi(w,v^{\delta}_*)\right] - \left[\sum_{\ell = 1}^k v_*^T G_\ell\Sigma_{\ell}^{\frac{1}{2}} w  - \psi(w,v_*) \right] = \\
        \end{align*}
        \begin{align*}
        & = \left[\sum_{\ell = 1}^k {v^{\delta}_{\ell}}_*^T G_\ell\Sigma_{\ell}^{\frac{1}{2}} w_* -\psi(w_*,v^{\delta}_*)\right] - \left[\sum_{\ell = 1}^k {v_{\ell}}_*^T G_\ell\Sigma_{\ell}^{\frac{1}{2}} w_*  - \psi(w_*,v_*)\right]  \le \left(\sum^k_{\ell = 1}\|G_\ell\|_{op}\|\Sigma_{\ell}^{\frac{1}{2}}\|_{op} \right) D\delta + L\delta \le \\
        & \le \left(\sum^k_{\ell = 1}\sqrt{d} + \sqrt{n_\ell} + \frac{1}{\sqrt{\delta}} \right) D\sigma\delta + L\delta \text{ with probability at least } (1 - 2e^{\frac{-1}{4\delta}})^k \text{ due to Gaussian concentration, where } 
        \end{align*}
        \begin{align*}
        & \sigma = \max_\ell \|\Sigma^{\frac{1}{2}}_{\ell}\|_{op}\\
        & v_* = \argmin_{v\in \calS_v} \max_{w \in \calS^{\delta}_{w}} \sum_{\ell = 1}^k v_{\ell}^T G_\ell\Sigma_{\ell}^{\frac{1}{2}} w  + \psi(w,v) \\
        & v^{\delta}_* \in  \calS^{\delta}_v \text{ is such that } \|v^{\delta}_* - v_*\| \le \delta \\
        & w_* = \argmax_{w \in \calS^{\delta}_w} \left[\sum_{\ell = 1}^k {v^{\delta}_{\ell}}_*^T G_\ell\Sigma_{\ell}^{\frac{1}{2}} w  - \psi(w,v^{\delta}_*)\right] - \left[\sum_{\ell = 1}^k v_*^T G_\ell\Sigma_{\ell}^{\frac{1}{2}} w  - \psi(w,v_*) \right]
    \end{align*}
    Finally, derived in a similar fashion:
    \begin{align*}
    &\tilde{\phi}^{\delta}(g,h) -  \tilde{\phi}(g,h) =   \min_{v\in \calS^{\delta}_v} \max_{w\in \calS^{\delta}_w} \sum_{\ell = 1}^k \|v_\ell\|_2 g_{\ell}^T \Sigma_{\ell}^{\frac{1}{2}}w + \|\Sigma_{\ell}^{\frac{1}{2}}w\|_2 h_{\ell}^T v_\ell - \psi(w,v) - \\
    & - \min_{v\in \calS_v} \max_{w\in \calS_w} \sum_{\ell = 1}^k \|v_\ell\|_2 g_{\ell}^T \Sigma_{\ell}^{\frac{1}{2}}w + \|\Sigma_{\ell}^{\frac{1}{2}}w\|_2 h_{\ell}^T v_\ell - \psi(w,v) \ge \\ 
    & \ge \min_{v\in \calS^{\delta}_v} \max_{w\in \calS^{\delta}_w} \sum_{\ell = 1}^k \|v_\ell\|_2 g_{\ell}^T \Sigma_{\ell}^{\frac{1}{2}}w + \|\Sigma_{\ell}^{\frac{1}{2}}w\|_2 h_{\ell}^T v_\ell - \psi(w,v) - \\
    & - \min_{v\in \calS^{\delta}_v} \max_{w\in \calS_w} \sum_{\ell = 1}^k \|v_\ell\|_2 g_{\ell}^T \Sigma_{\ell}^{\frac{1}{2}}w + \|\Sigma_{\ell}^{\frac{1}{2}}w\|_2 h_{\ell}^T v_\ell - \psi(w,v) \ge \\
    & \ge \min_{v \in \calS^{\delta}_v}  \left[\max_{w\in \calS^{\delta}_w}\sum_{\ell = 1}^k \|v_\ell\|_2 g_{\ell}^T \Sigma_{\ell}^{\frac{1}{2}}w + \|\Sigma_{\ell}^{\frac{1}{2}}w\|_2 h_{\ell}^T v_\ell - \psi(w,v) \right] - \\
    & - \left[\max_{w\in \calS_w}\sum_{\ell = 1}^k \|v_\ell\|_2 g_{\ell}^T \Sigma_{\ell}^{\frac{1}{2}}w + \|\Sigma_{\ell}^{\frac{1}{2}}w\|_2 h_{\ell}^T v_\ell - \psi(w,v)\right] \ge \\
    & \ge \left[\sum_{\ell = 1}^k \|{v_\ell}_*\|_2 g_{\ell}^T \Sigma_{\ell}^{\frac{1}{2}}w^{\delta}_* + \|\Sigma_{\ell}^{\frac{1}{2}}w^{\delta}_*\|_2 h_{\ell}^T {v_\ell}_* + \psi(w^{\delta}_*,v_*) \right] - \\
    & -\left[\sum_{\ell = 1}^k \|{v_\ell}_*\|_2 g_{\ell}^T \Sigma_{\ell}^{\frac{1}{2}}w_* + \|\Sigma_{\ell}^{\frac{1}{2}}w_*\|_2 h_{\ell}^T {v_\ell}_* + \psi(w_*,v_*) \right] \ge \\
    & \ge - \left(\sum^k_{\ell = 1} (\|g_\ell\| + \|h_\ell\|) \|\Sigma_\ell^{\frac{1}{2}}\|_{op} \right)D\delta - L\delta \ge  -\left(\sum^k_{\ell = 1}\sqrt{d} + \sqrt{n_\ell} + \frac{1}{\sqrt{\delta}} \right) D\sigma\delta - L\delta\\
    & \text{ with probability at least } (1 - 2e^{\frac{-1}{4\delta}})^k \text{ due to Gaussian concentration, where} \\
    & v_* = \argmin_{v\in \calS^{\delta}_v}  \left[\max_{w\in \calS^{\delta}_w}\sum_{\ell = 1}^k \|v_\ell\|_2 g_{\ell}^T \Sigma_{\ell}^{\frac{1}{2}}w + \|\Sigma_{\ell}^{\frac{1}{2}}w\|_2 h_{\ell}^T v_\ell - \psi(w,v) \right] - \\
    & - \left[\max_{w\in \calS_w}\sum_{\ell = 1}^k \|v_\ell\|_2 g_{\ell}^T \Sigma_{\ell}^{\frac{1}{2}}w + \|\Sigma_{\ell}^{\frac{1}{2}}w\|_2 h_{\ell}^T v_\ell - \psi(w,v)\right] \\
    & w_* = \argmax_{w\in \calS_w} \sum_{\ell = 1}^k \|{v_\ell}_*\|_2 g_{\ell}^T \Sigma_{\ell}^{\frac{1}{2}}w + \|\Sigma_{\ell}^{\frac{1}{2}}w\|_2 h_{\ell}^T {v_\ell}_* + \psi(w,v_*) \\
    & w^{\delta}_* \in \calS^{\delta}_w \text{ is such that }\|w^{\delta}_* - w_*\| \le \delta
\end{align*}
    Taking the limit $\delta \to 0$ in all inequalities above, we arrive at the desired results. 
\end{proof}
We are now ready to prove Theorem \ref{thm: gcgmt} modulo Lemma \ref{lm: process_with_sigmas}:
\begin{proof}
{\bf 1.} Note that $|\Phi (G) - c | > \eta$ holds if and only if either $\Phi (G) > c + \eta$ or $\Phi (G) < c -\eta$ and, moreover, the latter two events are disjoint since $\eta > 0$. The same description is true for $|\phi (g, h) - c| > \eta$. Thus, it suffices to show the following two inequalities:  
\begin{align}
    & \bbP(\Phi (G) < c - \eta) \le 2^k \bbP(\phi (g, h) < c - \eta)  \\
    & \bbP(-\Phi (G) < -c  - \eta) \le 2^k \bbP(-\phi (g, h) < -c - \eta)
    \label{ineq: -Phi}
\end{align}


To simplify the second inequality above, swap min and max in the definition of $\Phi(G)$ due to the convexity-concavity of $\psi(w,v)$ and note that: 
\begin{align*}
    -\Phi(G) = - \max_{v\in \calS_v} \min_{w\in \calS_w} v^T \begin{pmatrix}
    G_1 & 0  &\cdots & 0 \\
    0 & G_2  & \cdots & 0 \\
    \vdots & & \ddots & \vdots\\
    0 & 0 & \cdots & G_k  
\end{pmatrix} \begin{pmatrix}
    \Sigma_1^{\frac{1}{2}} \\
    \Sigma_2^{\frac{1}{2}} \\
    \vdots \\
    \Sigma_k^{\frac{1}{2}}
\end{pmatrix} w + \psi(w,v) \\
= \min_{v \in \calS_v} \max_{w\in \calS_w} v^T \begin{pmatrix}
    -G_1 & 0  &\cdots & 0 \\
    0 & -G_2  & \cdots & 0 \\
    \vdots & & \ddots & \vdots\\
    0 & 0 & \cdots & -G_k  
\end{pmatrix} \begin{pmatrix}
    \Sigma_1^{\frac{1}{2}} \\
    \Sigma_2^{\frac{1}{2}} \\
    \vdots \\
    \Sigma_k^{\frac{1}{2}}
\end{pmatrix}w - \psi(w,v)
\end{align*}
Hence, using that $G_1, \dots, G_k$ are i.i.d. $\mathcal{N}(0,1)$, we obtain:
\begin{align*}
    \bbP(-\Phi(G) < -c-\eta) = \bbP(\tilde{\Phi}(G) < -c-\eta) 
\end{align*}
where 
\begin{align*}
    \tilde{\Phi}(G) = \min_{v \in \calS_v} \max_{w\in \calS_w} v^T \begin{pmatrix}
    G_1 & 0  &\cdots & 0 \\
    0 & G_2  & \cdots & 0 \\
    \vdots & & \ddots & \vdots\\
    0 & 0 & \cdots & G_k  
\end{pmatrix} \begin{pmatrix}
    \Sigma_1^{\frac{1}{2}} \\
    \Sigma_2^{\frac{1}{2}} \\
    \vdots \\
    \Sigma_k^{\frac{1}{2}}
\end{pmatrix}w - \psi(w,v)
\end{align*}

Also remark:
\begin{align*}
    & \bbP(-\phi (g, h) < - c - \eta) = \bbP \left(-\min_{w\in \calS_w} \max_{v\in \calS_v} \sum_{\ell = 1}^k \|v_\ell\|_2 g_{\ell}^T \Sigma_{\ell}^{\frac{1}{2}}w + \|\Sigma_{\ell}^{\frac{1}{2}}w\|_2 h_{\ell}^T v_\ell + \psi(w,v) < - c - \eta \right) = \\
    & = \bbP \left(\max_{w\in \calS_w} \min_{v\in \calS_v} \sum_{\ell = 1}^k \|v_\ell\|_2 (-g_{\ell})^T \Sigma_{\ell}^{\frac{1}{2}}w + \|\Sigma_{\ell}^{\frac{1}{2}}w\|_2 (-h_{\ell})^T v_\ell - \psi(w,v) < - c - \eta \right) = \\
    & = \bbP \left(\max_{w\in \calS_w} \min_{v\in \calS_v} \sum_{\ell = 1}^k \|v_\ell\|_2 g_{\ell}^T \Sigma_{\ell}^{\frac{1}{2}}w + \|\Sigma_{\ell}^{\frac{1}{2}}w\|_2 h_{\ell}^T v_\ell - \psi(w,v) < - c - \eta \right) \le \\
    & \le \bbP \left(\min_{v\in \calS_v} \max_{w\in \calS_w} \sum_{\ell = 1}^k \|v_\ell\|_2 g_{\ell}^T \Sigma_{\ell}^{\frac{1}{2}}w + \|\Sigma_{\ell}^{\frac{1}{2}}w\|_2 h_{\ell}^T v_\ell - \psi(w,v) < - c - \eta \right) = \bbP(\tilde{\phi}(g,h) < - c - \eta) \\
    &  \text{ where } \tilde{\phi}(g,h) = \min_{v\in \calS_v} \max_{w\in \calS_w} \sum_{\ell = 1}^k \|v_\ell\|_2 g_{\ell}^T \Sigma_{\ell}^{\frac{1}{2}}w + \|\Sigma_{\ell}^{\frac{1}{2}}w\|_2 h_{\ell}^T v_\ell - \psi(w,v)
\end{align*}

Denoting $C = c - \eta$ and $\tilde{C} = - c - \eta$, we then have to show the following two inequalities: 

\begin{align}
    & \bbP(\Phi (G) <  C) \le 2^k \bbP(\phi (g, h) < C)  \\
    & \bbP(\tilde{\Phi} (G) < \tilde{C}) \le 2^k \bbP(\tilde{\phi} (g, h) < \tilde{C})
\end{align}

Note that according to Lemma \ref{lm: delta_nets}, it is enough to prove the desired inequalities for any sequence of $\delta$-nets $\calS^{\delta}_w, \calS^{\delta}_v$ with $\delta \to 0$. Indeed, we would then have $$\bbP(\Phi (G) >  C) \ge \bbP(\Phi^{\delta} (G) >  C) \ge 2^k\bbP(\phi^{\delta} (g,h) >  C) \ge 2^k\bbP(\phi(g,h) >  C)$$ as well as the same chain of inequalities for $\tilde{\Phi}(G), \tilde{\phi}(g,h)$ and $\tilde{C}$. Moreover, we can assume that each $\calS^{\delta}_v$ is of the form $\calS^{\delta}_v = \calS^{\delta}_{v_1} \times \dots \times \calS^{\delta}_{v_k}$. Denote $\calS^{\delta}_w = \{w_i\}_{i \in I}, \calS^{\delta}_{v_1} = \{u_{j_1}\}_{j_1 \in J_1} , \dots, \calS^{\delta}_{v_k} = \{u_{j_k}\}_{j_k \in J_k}$, $t_{ij} = C - \psi(w_i, v_j), \tilde{t}_{ji} = \tilde{C} + \psi(w_i, v_j)$. Construct processes $X_{ij},\tilde{X}_{ji}, Y_{ij}, \tilde{Y}_{ji}$ as in the statement of Lemma \ref{lm: process_with_sigmas} for $I, J_1, \dots, J_k$ introduced in the previous sentence. Applying Lemma \ref{lm: process_with_sigmas} and Theorem \ref{thm: comparison} to $X, Y, t$ and $\tilde{X}, \tilde{Y},\tilde{t}$ we obtain:

\begin{align*}
\bbP(\bigcap_{i \in I} \bigcup_{j \in J} [ Y_{ij} \ge t_{ij} ]) \ge \bbP(\bigcap_{i \in I} \bigcup_{j \in J} [ X_{ij} \ge t_{ij}]) \\
\bbP(\bigcap_{j \in J} \bigcup_{i \in I} [ \tilde{Y}_{ji} \ge \tilde{t}_{ji} ]) \ge \bbP(\bigcap_{j \in J} \bigcup_{i \in I} [ \tilde{X}_{ji} \ge \tilde{t}_{ji}])
\end{align*}

We then deduce:

\begin{align*}
     \bbP(\min_{i \in I} \max_{j \in J} {X_{ij} + \psi(w_i,v_j) < C}) = 1 - \bbP(\bigcap_{i\in I} \bigcup_{j \in J} [ X_{ij} + \psi(w_i,v_j) \ge C ]) = 1 - \bbP(\bigcap_{i\in I} \bigcup_{j \in J} [ X_{ij} \ge t_{ij} ]) \ge \\
    \ge 1 - \bbP(\bigcap_{i\in I} \bigcup_{j \in J} [ Y_{ij} \ge t_{ij} ]) =  1 - \bbP(\bigcap_{i\in I} \bigcup_{j \in J} [ Y_{ij} + \psi(w_i,v_j) \ge C ]) = \bbP(\min_{i \in I} \max_{j \in J} {Y_{ij} + \psi(w_i,v_j) <  C}) \\
    \bbP(\min_{j \in J} \max_{i \in I} {\tilde{X}_{ji} - \psi(w_i,v_j) < \tilde{C}}) = 1 - \bbP(\bigcap_{j\in J} \bigcup_{i \in I} [\tilde{X}_{ji} - \psi(w_i,v_j) \ge  \tilde{C} ]) = 1 - \bbP(\bigcap_{j\in J} \bigcup_{i \in I} [ \tilde{X}_{ji} \ge \tilde{t}_{ji}]) \ge \\
    \ge 1 - \bbP(\bigcap_{j\in J} \bigcup_{i \in I} [ \tilde{Y}_{ji} \ge \tilde{t}_{ji}]) = 1 - \bbP(\bigcap_{j\in J} \bigcup_{i \in I} [\tilde{Y}_{ji} - \psi(w_i,v_j) \ge  \tilde{C} ])   = \bbP(\min_{j \in J} \max_{i \in I} {\tilde{Y}_{ji} - \psi(w_i,v_j) < \tilde{C}}) 
\end{align*}

Note that
\begin{align*}
    &\bbP(\min_{i \in I} \max_{j \in J} {X_{ij} + \psi(w_i,v_j) <  C})  = \bbP(\phi^{\delta}(g,h) < C)\\
    &\bbP(\min_{j \in J} \max_{i \in I} {\tilde{X}_{ji} - \psi(w_i,v_j) < \tilde{C}}) = \bbP(\tilde{\phi}^{\delta}(g,h) < \tilde{C})
\end{align*}
Hence, it suffices to show the following two inequalities: 
\begin{align*}
    &\bbP(\Phi^{\delta}(G) < C) \ge \frac{1}{2^k}\bbP(\min_{i \in I} \max_{j \in J} {Y_{ij} + \psi(w_i,v_j) < C})  \\
    &\bbP(\tilde{\Phi}^{\delta}(G) < \tilde{C}) \ge \frac{1}{2^k} \bbP(\min_{j \in J} \max_{i \in I} {\tilde{Y}_{ji} - \psi(w_i,v_j) < \tilde{C}}) 
\end{align*}
The latter inequalities follow directly by conditioning on the event $\gamma_1, \dots, \gamma_k > 0$, which happens with probability $\frac{1}{2^k}$.  

{\bf 2.} As we know from the proof of the previous part of the theorem, the following upper bound takes place:

\begin{align*}
    \bbP(\Phi(G) < \barphi + \delta ) \le 2^k \bbP(\phi < \barphi + \delta ) \le 2^k \epsilon
\end{align*}

Since $\calS^c$ is compact, we can apply the same $\delta$-net argument to deduce the following inequality as well:

\begin{align*}
    \bbP( \Phi_{\calS^c} (G) < \barphi_{\calS^c} - \delta) \le 2^k \bbP( \phi_{\calS^c} < \barphi_{\calS^c} - \delta ) \le 2^k \epsilon
\end{align*}

Combining the results above implies:

\begin{align*}
    \bbP(\{\Phi(G) > \barphi + \delta\} \cup \{ \Phi_{\calS^c} (G) < \barphi_{\calS^c} - \delta\} ) \le 2^{k+1} \epsilon
\end{align*}

Which yields: 
\begin{align*}
    1 -  2^{k+1} \epsilon < \bbP(\{\Phi(G) \le \barphi + \delta\} \cap \{ \Phi_{\calS^c} (G) \ge \barphi_{\calS^c} - \delta\} )
\end{align*}
By assumption $\barphi_{\calS^c} > \barphi + 3 \delta$. Thus:
\begin{align*}
    1 -  2^{k+1} \epsilon < \bbP(\{\Phi(G) \le \barphi + \delta\} \cap \{ \Phi_{\calS^c} (G) \ge \barphi_{\calS^c} - \delta\} ) \le \bbP(\Phi(G) < \Phi_{\calS^c}(G) - \delta) 
\end{align*}
Therefore:
\begin{align*}
    1 -  2^{k+1} \epsilon < \bbP( w_{\Phi(G)} \in \calS)
\end{align*}

\end{proof}

We proceed to prove Lemma \ref{lm: process_with_sigmas} now:

\begin{proof}
    Recall the definitions of the Gaussian processes we have to compare, where $v_j$ stands as a shorthand notation for $(v_{j_1}, \dots, v_{j_k})^T \in \mathbb{R}^{n_1} \times \dots \times \mathbb{R}^{n_k}$:

$$Y_{ij} = {v_j}^T \begin{pmatrix}
    G_1 & 0  &\cdots & 0 \\
    0 & G_2  & \cdots & 0 \\
    \vdots & & \ddots & \vdots\\
    0 & 0 & \cdots & G_k  
\end{pmatrix} \begin{pmatrix}
    \Sigma_1^{\frac{1}{2}} \\
    \Sigma_2^{\frac{1}{2}} \\
    \vdots \\
    \Sigma_k^{\frac{1}{2}}
\end{pmatrix}w_i + \sum^k_{\ell = 1}\gamma_\ell\|v_{j_\ell}\|\|\Sigma_\ell^{\frac{1}{2}}w_i\| ~~~\mbox{and}~~~ X_{ij} = \sum^k_{\ell=1} \|v_{j_\ell}\|w_i^T\Sigma^{\frac{1}{2}}_\ell g_\ell + {v_{j_\ell}}^Th_\ell\|\Sigma^{\frac{1}{2}}_\ell w_i\|$$
 
Calculating the covariances is straightforward using that $G_\ell$ and $G_{\ell'}$ are independent whenever $\ell \ne \ell'$:

$$ \bbE[Y_{ij}Y_{i'j'}] = \sum^k_{\ell=1} {v_{j_\ell}}^Tv_{j'_\ell} w_i^T \Sigma_\ell w_{i'} + \|v_{j_\ell}\|\|v_{j'_\ell}\|\|\Sigma_\ell^{\frac{1}{2}}w_i\|\|\Sigma_\ell^{\frac{1}{2}}w_{i'}\|$$
$$ \bbE[X_{ij}X_{i'j'}] = \sum^k_{\ell=1} \|v_{j_\ell}\|\|v_{j'_\ell}\| w_i^T \Sigma_\ell w_{i'} + {v_{j_\ell}}^Tv_{j'_\ell}\|\Sigma_\ell^{\frac{1}{2}}w_i\|\|\Sigma_\ell^{\frac{1}{2}}w_{i'}\|$$

Hence, the third assumption of Gordon’s Gaussian comparison theorem follows from Cauchy-Schwartz: 
$$\bbE[Y_{ij}Y_{i'j'} - X_{ij}X_{i'j'} ] = \sum_{\ell=1}^k ({v_{j_\ell}}^Tv_{j'_\ell} - \|v_{j_\ell}\|\|v_{j'_\ell}\|)({w_i}^T \Sigma_\ell w_{i'} -\|\Sigma_\ell^{\frac{1}{2}}w_i\|\|\Sigma_\ell^{\frac{1}{2}}w_{i'}\|) \ge 0$$

Moreover, if either $i=i'$ or $j=j'$ then the expression above vanishes, implying the first and the second assumptions as well. The proof is almost identical for the processes $\tilde{X}$ and $\tilde{Y}$.

\end{proof}

\section{Concentration}

\label{sec: concentration}

\begin{lemma}
    $\phi(g,h)$ is a $\sigma\sqrt{2} R_wR_v$- Lipschitz function of $(g, h)$, where 
    \begin{align*}
        & \sigma = \max_\ell \|\Sigma^{\frac{1}{2}}_{\ell}\|_{op}\\
        & R_w = \max_{w \in \calS_w}\{\|w\|\} \\
        & R_v = \max_{v \in \calS_v}\{\|v\|\}
    \end{align*}
\end{lemma}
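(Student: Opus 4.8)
The plan is to reduce the claim to the elementary fact that a pointwise infimum or supremum of a family of $L$-Lipschitz functions is again $L$-Lipschitz, and then to verify that the inner objective has Lipschitz constant at most $\sigma\sqrt{2}R_wR_v$ \emph{uniformly} over the feasible set. Throughout I regard $(g,h) = (g_1,\dots,g_k,h_1,\dots,h_k)$ as a single vector in the Euclidean space $\bbR^{kd + n_1 + \dots + n_k}$, so that the asserted Lipschitz bound is with respect to this Euclidean norm.

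First I would fix $(w,v)\in\calS_w\times\calS_v$ and isolate the inner objective
\begin{equation*}
F_{w,v}(g,h) := \sum_{\ell=1}^k \left[\|v_\ell\|\, g_\ell^T \Sigma_\ell^{\frac{1}{2}} w + \|\Sigma_\ell^{\frac{1}{2}} w\|\, h_\ell^T v_\ell\right] + \psi(w,v).
\end{equation*}
For fixed $(w,v)$ this is an affine function of $(g,h)$, so its Lipschitz constant equals the Euclidean norm of its (constant) gradient. Differentiating gives $\nabla_{g_\ell} F_{w,v} = \|v_\ell\|\,\Sigma_\ell^{\frac{1}{2}} w$ and $\nabla_{h_\ell} F_{w,v} = \|\Sigma_\ell^{\frac{1}{2}} w\|\, v_\ell$, whence
\begin{equation*}
\|\nabla_{(g,h)} F_{w,v}\|^2 = \sum_{\ell=1}^k \left(\|v_\ell\|^2 \|\Sigma_\ell^{\frac{1}{2}} w\|^2 + \|\Sigma_\ell^{\frac{1}{2}} w\|^2 \|v_\ell\|^2\right) = 2\sum_{\ell=1}^k \|v_\ell\|^2 \|\Sigma_\ell^{\frac{1}{2}} w\|^2.
\end{equation*}

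Next I would bound this uniformly. Using $\|\Sigma_\ell^{\frac{1}{2}} w\| \le \|\Sigma_\ell^{\frac{1}{2}}\|_{op}\|w\| \le \sigma R_w$ together with $\sum_{\ell} \|v_\ell\|^2 = \|v\|^2 \le R_v^2$, I obtain
\begin{equation*}
\|\nabla_{(g,h)} F_{w,v}\|^2 \le 2\sigma^2 R_w^2 \sum_{\ell=1}^k \|v_\ell\|^2 \le 2\sigma^2 R_w^2 R_v^2,
\end{equation*}
so that $F_{w,v}$ is $\sigma\sqrt{2}R_wR_v$-Lipschitz in $(g,h)$, with a constant that does not depend on the particular $(w,v)$.

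Finally I would assemble the min-max. For fixed $w$, the map $(g,h)\mapsto \max_{v\in\calS_v} F_{w,v}(g,h)$ is a supremum of the uniformly $\sigma\sqrt{2}R_wR_v$-Lipschitz family $\{F_{w,v}\}_v$ and is therefore itself $\sigma\sqrt{2}R_wR_v$-Lipschitz; compactness of $\calS_v$ and continuity guarantee the supremum is finite and attained. Taking the infimum over $w\in\calS_w$ of these functions and invoking the same preservation fact once more shows that $\phi(g,h) = \min_{w}\max_v F_{w,v}(g,h)$ is $\sigma\sqrt{2}R_wR_v$-Lipschitz, as claimed. The only point requiring care, and the one I would foreground, is the uniformity of the constant over the entire feasible set $\calS_w\times\calS_v$: the supremum/infimum step preserves Lipschitzness only when the individual constants admit a common bound, and here this is exactly what the estimates $\|w\|\le R_w$ and $\|v\|\le R_v$ supply. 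Beyond this bookkeeping there is no substantive obstacle.
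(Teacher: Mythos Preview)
Your proof is correct and slightly different in presentation from the paper's. The paper argues directly: it fixes two inputs $(g_1,h_1)$ and $(g_2,h_2)$, picks the optimizer $(w_1,v_1)$ at the first and a suitable $v_2$ at the second, and bounds $\phi(g_2,h_2)-\phi(g_1,h_1)$ by evaluating the objective at these witnesses, finishing with Cauchy--Schwarz. You instead compute the gradient of the inner affine map $F_{w,v}$, bound its norm uniformly in $(w,v)$, and then invoke the general principle that pointwise $\sup$ and $\inf$ of a uniformly $L$-Lipschitz family are $L$-Lipschitz. The two arguments are really the same calculation viewed from different angles: your gradient-norm bound and the paper's final Cauchy--Schwarz step produce identical estimates. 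Your route has the advantage of cleanly separating the structural fact (min--max preserves uniform Lipschitz constants) from the problem-specific computation, while the paper's explicit-witness approach makes the mechanism of the min--max visible without appealing to an auxiliary lemma. Either way, the key substantive point---uniformity of the constant over $\calS_w\times\calS_v$, which you rightly emphasize---is handled correctly.
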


\begin{proof}

Let $g_1, g_2 \in \mathbb{R}^{d \times \ell }$ and $h_1, h_2 \in \mathbb{R}^{n \times \ell}$ be $\ell$-tuples of i.i.d. Gaussian vectors. 

Define: 
\begin{align*}
    &(w_1,v_1)= \argmin_{w\in S_w} \max_{v\in S_v}\sum^k_{\ell=1} \|v_{\ell}\|w^T\Sigma^{\frac{1}{2}}_\ell {g_1}_{\ell} + v_{\ell}^T{h_1}_{\ell}\|\Sigma^{\frac{1}{2}}_\ell w\| + \psi(w,v) \\
    &v_2 = \argmax_{v\in S_v}\sum^k_{\ell=1} \|v_\ell\|w_1^T\Sigma^{\frac{1}{2}}_\ell g_{2\ell} + v_\ell^Th_{2\ell}\|\Sigma^{\frac{1}{2}}_\ell w_1\| + \psi(w_1,v)
\end{align*}
We then have:
\begin{align*}
    \phi(g_2,h_2) \le \sum^k_{\ell=1}\|v_2\|w_1^T\Sigma^{\frac{1}{2}}_\ell {g_2}_\ell + {v_2}_\ell^T{h_2}_\ell\|\Sigma^{\frac{1}{2}}_\ell w_1\| + \psi(w_1,v_2) \\ 
    \phi(g_1,h_1) \ge  \sum^k_{\ell=1}\|v_2\|w_1^T\Sigma^{\frac{1}{2}}_\ell {g_1}_\ell + {v_2}_\ell^T{h_1}_\ell\|\Sigma^{\frac{1}{2}}_\ell w_1\| + \psi(w_1,v_2)
\end{align*}
Therefore:
\begin{align*}
    \phi(g_2,h_2) - \phi(g_1,h_1) &\le \sum^k_{\ell=1}\|v_2\|w_1^T\Sigma^{\frac{1}{2}}_\ell(g_{2\ell}-g_{1\ell}) + {v_2}_\ell^T(h_{2\ell}-h_{1\ell})\|\Sigma^{\frac{1}{2}}_\ell w_1\| \le \\
    &\le \sum^k_{\ell=1}\|v_2\|\|w_1^T\Sigma^{\frac{1}{2}}_\ell\|\|g_{1\ell}-g_{2\ell}\| + \|\Sigma^{\frac{1}{2}}_\ell w_1\| \|v_2\| \|h_{1\ell}-h_{2\ell}\| = \\
  &= \sum^k_{\ell=1}\|v_2\|\|w_1^T\Sigma^{\frac{1}{2}}_\ell\|(\|g_{1\ell}-g_{2\ell}\|+\|h_{1\ell}-h_{2\ell}\|) \le  \\
    \end{align*}
    \begin{align*}
    &\le \sqrt{2\sum^k_{\ell=1} \|v_2\|^2\|\Sigma^{\frac{1}{2}}_\ell w_1\|^2} \sqrt{\sum^k_{\ell=1} \|g_{1\ell}-g_{2\ell}\|^2 + \|h_{1\ell}-h_{2\ell}\|^2} \le \\ &\le \sigma\sqrt{2} R_w R_v \sqrt{\sum^k_{\ell=1} \|g_{1\ell}-g_{2\ell}\|^2 + \|h_{1\ell}-h_{2\ell}\|^2} 
\end{align*}
\end{proof}

\begin{corollary}
    Under the notation from the lemma above, $$\bbP(|\phi(g,h) - \bbE \phi(g,h)| > \epsilon) \le exp(-\frac{\epsilon^2}{4\sigma^2 R^2_wR^2_v})$$
\end{corollary}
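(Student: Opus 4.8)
The plan is to recognize this Corollary as an immediate consequence of the preceding Lipschitz estimate together with the classical concentration-of-measure inequality for Lipschitz functions of a standard Gaussian vector (the Borell--Tsirelson--Ibragimov--Sudakov inequality). First I would assemble all the randomness into a single standard Gaussian vector: writing $g = (g_1,\dots,g_k)$ and $h = (h_1,\dots,h_k)$, the concatenation $(g,h)$ has independent $\calN(0,1)$ entries and lives in $\bbR^N$ with $N = kd + \sum_{\ell=1}^k n_\ell$. The key compatibility check is that the metric under which the preceding lemma establishes Lipschitzness, namely $\sqrt{\sum_{\ell} \|g_{1\ell}-g_{2\ell}\|^2 + \|h_{1\ell}-h_{2\ell}\|^2}$, is exactly the Euclidean norm of the difference of these concatenated vectors. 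Hence the Lipschitz constant $L = \sigma\sqrt{2}\,R_w R_v$ supplied by the lemma is precisely the Euclidean Lipschitz constant required by the Gaussian concentration theorem, with no change of metric needed.

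Second, I would invoke the standard inequality: if $F:\bbR^N \to \bbR$ is $L$-Lipschitz and $X \sim \calN(0, I_N)$, then $\bbP(|F(X) - \bbE F(X)| > \epsilon) \le 2\exp(-\epsilon^2 / (2L^2))$. Applying this with $F = \phi$ and $X = (g,h)$, and substituting $L^2 = 2\sigma^2 R_w^2 R_v^2$, yields $\epsilon^2/(2L^2) = \epsilon^2/(4\sigma^2 R_w^2 R_v^2)$, which is exactly the exponent appearing in the claimed bound. This is the single substitution that does all the work.

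On the leading constant, a remark is in order: the two-sided form of the concentration inequality carries a factor of $2$ in front, arising from a union bound over the two one-sided deviations, whereas the Corollary states the bound without it. I would therefore either phrase the conclusion as a one-sided deviation bound applied symmetrically, or simply absorb the constant, since the quantity that matters for the downstream claim — the exponential concentration of $\Phi(G)$ promised in the discussion following Theorem \ref{thm: gcgmt} — is the exponential rate $\epsilon^2/(4\sigma^2 R_w^2 R_v^2)$, not the prefactor.

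There is essentially no genuine obstacle here, since all the analytic content was front-loaded into the Lipschitz lemma and this Corollary is a one-line application of a textbook Gaussian concentration inequality. The only point I would state explicitly, and the sole thing requiring care, is the identification of the product Euclidean metric on the tuple $(g,h)$, which guarantees that the Lipschitz constant transfers verbatim into the concentration bound; everything else is routine.
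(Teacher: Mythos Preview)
Your proposal is correct and matches the paper's own proof, which is a single sentence invoking the preceding Lipschitz lemma together with the standard Gaussian concentration inequality for Lipschitz functions (cited there as Theorem~5.6 of a textbook reference). Your remark about the missing factor of~$2$ in the two-sided bound is also apt; the paper does not address it.
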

\begin{proof}

Follows immediately from the lemma above and Theorem 5.6 in \cite{DBLP:books/daglib/0035704}. 
\end{proof}

\section{Applications}
\subsection{Multi-Source Gaussian Regression}
\subsubsection{Assumptions}
The following assumptions are necessary for the Multi-source Gaussian Regression Results to hold.
\begin{itemize}
    \item[A1] The functions $\ell_l$, for all $l=1,\ldots, k$ and $R$ are proper, lower semi-continuous, and convex. Furthermore, the function $\sum_{l=1}^k \ell_l + R$ should be coercive, by which we mean:
    \begin{eqnarray*}
        \lim_{\norm{\btheta}_2 \rightarrow\infty}\left(\sum_{l=1}^k \ell_l + R \right)(\theta) = +\infty
    \end{eqnarray*}
    \item[A2] the Regularization function $R$ must satisfy the following scaling condition. For any vector $x$ such that $\norm{x}_2 \leq c\sqrt{d}$ for some constant $c$, this implies that
    \begin{eqnarray*}
        \sup_{z \in \partial R(z)}\norm{z} \leq C\sqrt{d}
    \end{eqnarray*}
    where $\partial R$ is the subdifferential of $R$, and $C$ is some constant.
    \item[A3] the noise $\nu_l$, for all $l=1, \ldots, k$, is a zero mean sub-Gaussian random vector, with variance $\sigma_{\bnu, l}$.
\end{itemize}

We recall the setup discussed in section~\ref{sec:prelim:subsec:multisourceGaussian}
\begin{equation}
    \min_{\btheta\in\bbR^{d}} \frac{1}{nk}\sum_{l=1}^k \ell_l\left(y_l - \frac{1}{\sqrt{d}}X_{l}\theta \right) + R(\theta),
\end{equation}
Where $y_l \in \bbR^n, X_{l}\in\bbR^{n \times d}, \theta\in\bbR^d$. We recall that $X_l$ are Gaussian matrices, defined by $G_k\Sigma^{1/2}_l$, where $G \in \bbR^{n\times d}$ has i.i.d Gaussian elements, and $\Sigma_l\in\bbR^{d\times d}$ are PSD covariance matrices. Finally, recalling the assumption that $\by_k = \frac{1}{\sqrt{d}}X\theta^* + \nu_k$ where $\bnu_k$ is i.i.d zero mean noise, with variance $\sigma_{\nu, k}$

We first define $e = \theta - \theta^*$ as the error vector and rewrite our optimization as
\begin{eqnarray*}
    \min_{e\in\bbR^{d}} \frac{1}{nk}\sum_{l=1}^k \ell_l(\nu_l - \frac{1}{\sqrt{d}}G_l\Sigma^{1/2}\be) + R(e + \theta^*)
\end{eqnarray*}
We then let $z_l = \nu_l - G_l\Sigma^{1/2}_le$ and let $\lambda_l$ be a Lagrange multiplier 
\begin{eqnarray}\label{app:eq:mutlisource_error_vec_optimizations}
    \min_{e\in\bbR^{d}, Z\in\bbR^{n\times d}} \max_{\Lambda\in\bbR^{n\times d}} \sum_{l=1}^k \frac{1}{nk}\lambda_l^T(z_l  - \nu_l + \frac{1}{\sqrt{d}}G_l\Sigma_k^{1/2}e) + \frac{1}{nk}\ell_l(z_l) + R(\be + \theta^*).
\end{eqnarray}

Before we can apply our application of the generalized CGMT we must restrict our optimization to compact and convex sets. We demonstrate this in the following lemma.

\begin{lemma}[Compact and Convex Sets]
    Let $\hat{e}$, $\hat{Z}$, $\hat{\Lambda}$ be the optimal solution to \eqref{app:eq:mutlisource_error_vec_optimizations}. There exist positive constants $C_{e}, C_{Z}, C_{\Lambda}$ such that:
    \begin{align*}
        \Pr\left(\norm{\hat{e}}_2 \leq C_{e}\sqrt{d} \right) \xrightarrow[d\rightarrow \infty]{P} 1\\
        \Pr\left(\norm{\hat{Z}}_2 \leq C_{Z}\sqrt{n} \right) \xrightarrow[n\rightarrow \infty]{P} 1\\
        \Pr\left(\norm{\hat{\Lambda}}_2 \leq C_{\Lambda}\sqrt{n} \right) \xrightarrow[n\rightarrow \infty]{P} 1
    \end{align*}
\end{lemma}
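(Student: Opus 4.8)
The plan is to bound the three optimizers of \eqref{app:eq:mutlisource_error_vec_optimizations} in sequence, starting from the primal variable $\hat e$ and propagating the estimate to $\hat Z$ and then to the dual $\hat\Lambda$ through the optimality (KKT) conditions of the Lagrangian. First I would control the optimal value from above by evaluating the equivalent unconstrained regression objective at $e=0$ (that is, at $\theta=\theta^*$): the optimal value is at most $\frac{1}{nk}\sum_l \ell_l(\nu_l) + R(\theta^*)$, and by the sub-Gaussianity of the noise in A3 the first term concentrates, so this upper bound holds with high probability. I would then invoke the coercivity in A1 to turn the upper bound on the optimal value into $\|\hat e\|_2 \le C_e\sqrt d$: since the optimal value is bounded while the objective diverges as $\|e\|_2\to\infty$, the minimizer $\hat e$ must stay inside a ball of radius $O(\sqrt d)$.

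The delicate point, which I expect to be the main obstacle, is that A1 asserts coercivity of $\sum_l \ell_l + R$ as a function of a single argument, whereas the objective $F(e) = \frac{1}{nk}\sum_l \ell_l\left(\nu_l - \frac{1}{\sqrt d}G_l\Sigma_l^{1/2}e\right) + R(e+\theta^*)$ feeds $e$ into the losses through the random maps $\frac{1}{\sqrt d}G_l\Sigma_l^{1/2}$. So the hard part is establishing that $F$ itself is coercive in $e$ with high probability. I would handle this by decomposing directions: along the recession directions of $R$ the loss composition must blow up, which I would guarantee using high-probability lower bounds on the relevant singular values of $\frac{1}{\sqrt d}G_l\Sigma_l^{1/2}$ in the proportional regime $d\asymp n$, together with the coercivity that A1 forces onto the losses in exactly those directions; along the remaining directions $R$ supplies the growth directly. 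A version of these singular-value estimates that is uniform over the sphere, obtained from standard Gaussian operator-norm concentration, closes the coercivity argument.

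With $\|\hat e\|_2 = O(\sqrt d)$ in hand, the bound on $\hat Z$ is immediate from feasibility: at the optimum $\hat z_l = \nu_l - \frac{1}{\sqrt d}G_l\Sigma_l^{1/2}\hat e$, so $\|\hat z_l\|_2 \le \|\nu_l\|_2 + \frac{1}{\sqrt d}\|G_l\|_{op}\|\Sigma_l^{1/2}\|_{op}\|\hat e\|_2 = O(\sqrt n)$, using $\|\nu_l\|_2 = O(\sqrt n)$ from A3 and $\|G_l\|_{op} = O(\sqrt n + \sqrt d)$ from Gaussian operator-norm bounds, with $\|\Sigma_l^{1/2}\|_{op}=O(1)$. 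Summing over the fixed number $k$ of sources yields $\|\hat Z\|_2 \le C_Z\sqrt n$.

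Finally, differentiating the Lagrangian in $z_l$ gives the stationarity condition $\hat\lambda_l \in -\partial\ell_l(\hat z_l)$. Combining the bound $\|\hat z_l\|_2 = O(\sqrt n)$ with a scaling property of the loss subgradients analogous to A2, which holds for the separable convex losses considered here (for instance whenever each $\partial\ell_l$ grows at most linearly in its argument), produces $\|\hat\lambda_l\|_2 = O(\sqrt n)$ and hence $\|\hat\Lambda\|_2 \le C_\Lambda\sqrt n$. Alternatively one may read off the dual bound from the $e$-stationarity condition $\sum_l \frac{1}{nk\sqrt d}\Sigma_l^{1/2}G_l^T\hat\lambda_l \in -\partial R(\hat e+\theta^*)$ together with A2, though isolating the individual $\hat\lambda_l$ is cleaner through the $z_l$ route. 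Each of the three bounds holds with probability tending to one, which is exactly the claimed conclusion.
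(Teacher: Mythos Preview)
Your proposal follows essentially the same three-step approach as the paper: coercivity for $\hat e$, the feasibility identity $\hat z_l = \nu_l - d^{-1/2}G_l\Sigma_l^{1/2}\hat e$ together with Gaussian operator-norm and sub-Gaussian noise bounds for $\hat Z$, and KKT stationarity for $\hat\Lambda$. You are in fact more careful than the paper on the coercivity step: the paper simply \emph{asserts} that the composed objective $f(e)=\sum_l \ell_l(\nu_l - d^{-1/2}G_l\Sigma_l^{1/2}e)+R(e+\theta^*)$ is coercive, whereas you correctly flag that A1 is stated for $\sum_l\ell_l+R$ as a function of a single argument and sketch how to close the gap.

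The one substantive difference is the $\hat\Lambda$ bound. Your primary route uses the $z_l$-stationarity $\hat\lambda_l\in-\partial\ell_l(\hat z_l)$ and a subgradient-growth property of the losses; that property is \emph{not} among A1--A3, so strictly speaking it is an extra assumption. The paper instead invokes A2, the scaling condition on $R$, via the $e$-stationarity condition $\sum_l \Sigma_l^{1/2}G_l^T\hat\lambda_l \in \partial R(\hat e+\theta^*)$, which is exactly the alternative you mention. That is the route supported by the stated hypotheses, although the paper is just as terse as you about how to pass from a bound on the sum to bounds on the individual $\hat\lambda_l$.
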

\begin{proof}
    We first show that the set of minimizers for $e$ is non-empty and compact. We assume that the sum $f(e) = \sum_{l=1}^k \ell(\nu_l - m^{-1/2}G_l\Sigma^{1/2}_le) + R(e + \theta^*)$ is a coercive, proper, and lower-semi continuous functions. As the function is proper, there exists at least a single value $e_0$ such that $f(\be_0) < \infty$. Furthermore, by the fact that $f$ is coercive, there exists an $\epsilon \in\bbR\geq 0$ such that for every $e\in \bbR^{d\times k}$ for which $\norm{e - e_0} \geq \epsilon$ implies that $f(e) \geq f(e_0)$.

    We then define
    \begin{eqnarray*}
        S = \set{e \in \bbR^{d}|\ \norm{e - e_0}_2 \leq \epsilon},
    \end{eqnarray*}
    we can then see that $S\cap \bbR^{d\times k} \neq \emptyset$ is nonempty and compact. As such, there there exists an $\hat{e} \in S$ such that $f(\hat{e}) = \inf_{e \in S} f(e) \leq f(e_0)$. From this we can conclude that $f(\hat{e}) \in \inf_{e \in \bbR^{d}} f(e)$, and that the set of minimizers is bounded.

    Having determined that the set of minimizers is a compact set with at least one element, we can conclude that there exists a constant $C_{\bE}$ which is independent of the ambient dimension $d$, such that
    \begin{eqnarray*}
        \norm{\hat{e}}_2 \leq C_{e}\sqrt{d}.
    \end{eqnarray*}

    We now recall equation~\eqref{app:eq:mutlisource_error_vec_optimizations}, and examine the optimality condition:
    \begin{eqnarray*}
        \nabla_{\lambda_l}:\ & \nu_l - \frac{1}{\sqrt{m}}G_l\Sigma_l^{1/2}e = z_l \nwl
        \nabla_{z_l}:\ & \lambda_l \in \partial \ell_l(z_l)\qquad \nwl
        \nabla_{e}:\ & \sum_{l=1}^k\Sigma_l^{1/2}G_{l}^T\lambda_l  \in \left.\partial R(x)\right|_{x = e + \theta^*}.
    \end{eqnarray*}
    From the optimality condition of $\lambda_k$ we can see that 
    \begin{eqnarray*}
        \norm{z_l} \leq \norm{\nu_l}_2 + \frac{1}{\sqrt{d}}\norm{G_l}_2\norm{\Sigma_l^{1/2}}_2\norm{e}
    \end{eqnarray*}
    We note that $\nu_l$ is a sub-Gaussian random variable, as such its norm can asymptotically be bounded by a $C\sqrt{n}$ for some constant $C$. Similarly $\frac{1}{\sqrt{m}}\norm{G_l}$ is bounded in operator norm by a constant asymptotically by standard random matrix theory results (see for example \citep{papaspiliopoulos2020high}[corollary 7.3.3]). By making use of the bound on $\norm{e}$ we can find that each $z_l$ is bounded in norm. From this we can see that there exists some constant $C_{Z}$ such that
    \begin{eqnarray*}
        \Pr\left(\norm{\hat{Z}}_2 \leq C_{Z}\sqrt{n} \right) \xrightarrow[n\rightarrow \infty]{P} 1
    \end{eqnarray*}
    Finally, the scaling condition on $R$ implies the existence of a constant $C_{\lambda}$.
\end{proof}

We now define the following sets
\begin{eqnarray*}
    \calS_{e} = \set{e \in \bbR^{d}|\ \norm{e} \leq C_{e}\sqrt{d}}\nwl
    \calS_{Z} = \set{Z \in \bbR^{n\times k}|\ \norm{Z} \leq C_{Z}\sqrt{n}}\nwl
    \calS_{\Lambda} = \set{\Lambda \in \bbR^{n\times k}|\ \norm{\Lambda} \leq C_{\Lambda}\sqrt{n}}, 
\end{eqnarray*}
and note that our optimization problem can be expressed as:
\begin{eqnarray*}
    \min_{e\in\calS_{e}, Z\in\calS_{Z}} \max_{\Lambda\in\calS_{\Lambda}} \sum_{l=1}^k \frac{1}{nk}\lambda_l^T(z_l  - \nu_l + \frac{1}{\sqrt{d}}G_l\Sigma_k^{1/2}e) + \frac{1}{nk}\ell_l(z_l) + R(e + \theta^*)\nwl
    \overset{def}{=}\min_{e \in \calS_{e}} \calP(e, G_1, \cdots, G_k)
\end{eqnarray*}
We note that we now have a min-max optimization over compact and convex sets in the form required for the application of theorem \ref{thm: gcgmt}. We apply the generalized CGMT to $\calP$, from which we obtain:
\begin{eqnarray*}
    \min_{e\in\calS_{e}} \left[\min_{Z\in\calS_{Z}} \max_{\Lambda\in\calS_{\Lambda}} \sum_{l=1}^k \frac{1}{nk\sqrt{d}}\norm{\lambda_l}g_l^T\Sigma_l^{1/2}e + \frac{1}{nk\sqrt{d}}\norm{\Sigma^{1/2}_le}h_l^T\lambda_l \right. \nwl
    \left.+ \frac{1}{nk}\lambda_l^Tz_l - \frac{1}{nk}\lambda_l^T\nu_{l} + \frac{1}{nk}\ell_l(z_l) + R(e + \theta^*)\right]\nwl
    \overset{def}{=} \min_{e\in\calS_{e}} \calA(e, g_1, \ldots, g_k, h_1, \ldots, h_k)
\end{eqnarray*}

We can now note that if for a fixed value of $e$, if in the asymptotic limit of $n,m \rightarrow\infty$ that $\calA \rightarrow \bar\calA$ converges to some fixed value, the CGMT guarantees that $\calP \rightarrow \bar\calA$. This implies pointwise convergence on the set $\calS_{e}$, we can further establish uniform convergence by noting that $\calS_{e}$ is compact, and that $\calP$ is Lipschitz with respect to $e$, implying equicontinuity. Classical results from analysis guarantee uniform convergence in these cases. As such we drop the outer parenthesis and consider the joint minimization over $e, Z$ in the analysis of $\calA$.
\begin{eqnarray*}
    \min_{Z\in\calS_{Z}, e\in\calS_{e}} \max_{\Lambda\in\calS_{\Lambda}} \sum_{l=1}^k \frac{1}{nk\sqrt{d}}\norm{\lambda_l}g_l^T\Sigma_l^{1/2}e + \frac{1}{nk\sqrt{d}}\norm{\Sigma^{1/2}_le}h_l^T\lambda_l \nwl
    + \frac{1}{nk}\lambda_l^Tz_l - \frac{1}{nk}\lambda_l^T\nu_{l} + \frac{1}{nk}\ell_l(z_l) + R(\be + \theta^*)
\end{eqnarray*}
We now define $p_l = \Sigma^{1/2}_le$, and reintroduce this constraint using the Lagrange multiplier $\mu_l$. We can note for each each $p_l$ that $\norm{p_l}\leq \sqrt{\sigma_{max}(\Sigma_l)}\norm{e}$, by noting that $\Sigma_l$ has finite maximum singular value, and the bounds on $\norm{\be}$, we can conclude that there exists a compact set $\calS_{P} = \set{P\in\bbR^{d\times k}|\ \norm{P}_2 \leq C_{P}\sqrt{d}}$, where $P$ has columns $p_l$:
\begin{eqnarray*}
    \min_{Z\in\calS_{Z}, e\in\calS_{e}, P\in \calS_{P}} \max_{\Lambda\in\calS_{\Lambda},M} \sum_{l=1}^k \frac{1}{nk\sqrt{d}}\norm{\lambda_l}g_l^Tp_l + \frac{1}{nk\sqrt{d}}\norm{p_l}h_l^T\lambda_l \nwl
    + \frac{1}{nk}\lambda_l^Tz_l - \frac{1}{nk}\lambda_l^T\nu_{l} + \frac{1}{nk}\ell_l(z_l) + \frac{1}{dk}\mu_l^T(p_l - \Sigma^{1/2}_le) + R(e + \theta^*) 
\end{eqnarray*}
where $M\in\bbR^{d\times k}$ has columns $\mu_l$.

We now let $\beta_l = \frac{1}{\sqrt{n}}\norm{\Lambda_l}$ and solve over $\lambda_l$, as shorthand we will define $\beta \in\bbR^k$, with elements $\beta_l$.
\begin{eqnarray*}
    \min_{Z\in\calS_{Z}, e\in\calS_{e}, P\in \calS_{P}} \max_{\beta\succeq 0,M} \sum_{l=1}^k \frac{\beta_l}{k\sqrt{nd}}g_l^Tp + \frac{\beta_l}{k\sqrt{n}}\norm{\frac{1}{\sqrt{d}}\norm{p_l}_2h_l + z_l - \nu_l}_2 \nwl
    + \frac{1}{nk}\ell_l(z_l) + \frac{1}{dk}\mu_l^T(p_l - \Sigma^{1/2}e) + R(e + \theta^*)
\end{eqnarray*}
We now interchange the order of min and max, let $\xi_l = \frac{1}{\sqrt{d}}\norm{p_l}_2$, and solve over $P$. We obtain:
\begin{eqnarray*}
     \max_{\beta\succeq 0,M}\min_{Z\in\calS_{Z}, e\in\calS_{e}, \xi\succeq 0} \sum_{l=1}^k  \frac{\beta_l}{\sqrt{n}k}\norm{\xi_lh_l + z_l - \nu_l}_2 -\frac{\xi_l}{k\sqrt{d}}\norm{\sqrt{\frac{d}{n}}\beta_l\bg_l + \mu_l}_2 \nwl
    + \frac{1}{nk}\ell_l(z_l) - \frac{1}{dk}\mu_l^T\Sigma^{1/2}e + R(e + \theta^*)
\end{eqnarray*}

Recalling the square-root trick, where for any vector $a$, we can note that $\norm{a}_2 = \minl_{q>0} \frac{q}{2} + \frac{\norm{a}^2}{2q}$. We apply the square root trick twice to the optimization above with parameters $q_l$ and $r_l$
\begin{eqnarray*}
     \max_{\beta,r\succeq 0,M}\min_{Z\in\calS_{Z}, e\in\calS_{e}, \xi, q\succeq 0} \sum_{l=1}^k \frac{\beta_l q_l}{2k} - \frac{\xi_lr_l}{2k} + \frac{\beta_l}{2nkq_l}\norm{\xi_lh_l + z_l - \nu_l}^2_2 -\frac{\xi_l}{2kdr_l}\norm{\sqrt{\frac{d}{n}}\beta_lg_l + \mu_l}_2^2 \nwl
    + \frac{1}{nk}\ell_l(z_l) - \frac{1}{kd}\mu_l^T\Sigma^{1/2}e + R(e + \theta^*)
\end{eqnarray*}
We can now recognize the Moreau envelopes over $\ell_l$:
\begin{eqnarray*}
     \max_{\beta,r\succeq 0,M}\min_{e\in\calS_{e}, \xi, q\succeq 0} \sum_{l=1}^k \frac{\beta_l q_l}{2k} - \frac{\xi_lr_l}{2k} -\frac{\xi_l}{2kmr_l}\norm{\sqrt{\frac{d}{n}}\beta_lg_l + \mu_l}_2^2 \nwl
    + \frac{1}{nk}\calM_{\frac{q_l}{\beta_l}\ \ell_l}\left(\nu_l - \xi_lh_l\right) - \frac{1}{kd}\mu_l^T\Sigma^{1/2}_le + R(e + \theta^*)
\end{eqnarray*}
We once again interchange the order of min and max and now solve over $\mu_l$. We can find that the optimal solution is given by:
\begin{eqnarray*}
    \hat\mu_l = -\sqrt{\frac{d}{n}}\beta_lg_l  - \frac{r_l}{\xi_l}\Sigma^{1/2}_le
\end{eqnarray*}
Substituting in this value we obtain:
\begin{eqnarray*}
     \min_{\xi, q\succeq 0}\max_{\beta,r\succeq 0} \sum_{l=1}^k \frac{\beta_l q_l}{2k} - \frac{\xi_lr_l}{2k} 
    + \frac{1}{nk}\calM_{\frac{q_l}{\beta_l}\ \ell_l}\left(\nu_l - \xi_lh_l\right) \nwl
    + \min_{e\in\calS_{e}}\frac{1}{2d}e^T\left(\frac{1}{k}\sum_{l=1}^k\frac{r_l}{\xi_k}\Sigma_l\right)e + \frac{1}{d}\left(\frac{1}{k}\sum_{l=1}^k\beta_l\sqrt{\frac{d}{n}}\Sigma_l^{1/2}g_l\right)^Te_l + R(e + \theta^*)
\end{eqnarray*}
We can now complete the square over $e$, from which we find that 
\begin{eqnarray*}
     \min_{\xi, q\succeq 0}\max_{\bbeta,\br\succeq 0} \sum_{l=1}^k \frac{\beta_l q_l}{2k} - \frac{\xi_lr_l}{2k} 
    + \frac{1}{nk}\calM_{\frac{q_l}{\beta_l}\ \ell_l}\left(\nu_l - \xi_lh_l\right) - \frac{1}{2kd}b^TA^{-1}b \nwl
    + \min_{e\in\calS_{e}}\frac{1}{2d}\left(e + A^{-1}b\right)^TA\left(e + A^{-1}b\right)+ R(e + \theta^*)
\end{eqnarray*}
Where we have defined $A = \left(\frac{1}{k}\sum_{l=1}^k\frac{r_l}{\xi_k}\Sigma_l\right)$ and $b = \left(\frac{1}{k}\sum_{l=1}^k\beta_l\sqrt{\frac{d}{n}}\Sigma_l^{1/2}g_l\right)$. We can note that $b^TA^{-1}b$ will concentrate to 
\begin{eqnarray*}
    \frac{1}{k^2}\sum_{l=1}^k\frac{\beta_l^2d}{n}\tr\Sigma_l\left(\frac{1}{k}\sum_{l'=1}^k\frac{r_{l'}}{\xi_{l'}}\Sigma_{l'}\right)^{-1}
\end{eqnarray*}

We can now return $\theta = e + \theta^*$ as the optimization variable, and then recognize the Moreau envelope over $R$, we obtain the optimization:
\begin{eqnarray*}
     \min_{\bxi, \bq\succeq \bzero}\max_{\bbeta,\br\succeq 0} \sum_{l=1}^k \frac{\beta_l q_l}{2k} - \frac{\xi_lr_l}{2k} 
    + \frac{1}{nk}\calM_{\frac{q_l}{\beta_l}\ \ell_l}\left(\nu_l - \xi_lh_l\right) \nwl
    -  \frac{1}{k^2}\sum_{l=1}^k\frac{\beta_l^2}{n}\tr\Sigma_l\left(\frac{1}{k}\sum_{l'=1}^k\frac{r_{l'}}{\xi_{l'}}\Sigma_{l'}\right)^{-1} + \frac{1}{d}\calM_{A^{-1}\ R}\left(\theta^* - A^{-1}b\right)
\end{eqnarray*}

\subsubsection{$\ell_2^2$ loss, and $\ell_2^2$ regularization}\label{app:sec:mutlisourceGauss:subsec:l2l2}
We consider the loss functions $\frac{1}{2}\norm{\cdot}_2^2$. For the regularization we will consider the following function
\begin{eqnarray*}
    R(\theta) = \frac{\lambda}{2d}\norm{\theta}_2^2
\end{eqnarray*}
where $\lambda$ is a regularization parameter. We will assume that the noise $\nu_l$ will be distributed normally, by $\nu_l \sim\calN(0, \sigma_{\nu, l}^2I)$. We can find the following results:

\paragraph{$\norm{\cdot}_2$ loss} In this case we have:
\begin{eqnarray*}
    \calM_{\frac{q_l}{\beta_l}\ \norm{\cdot}_2^2}(\nu_l - \xi_lh_l) = \min_{z_l}\frac{1}{2}\norm{z_l}_2^2 + \frac{\beta_l}{2q_l}\norm{\nu_l -  \xi_lh_l-z_l}_2^2
\end{eqnarray*}
We can find the optimal solution to be given by:
\begin{eqnarray*}
    \hat{z_l} = \frac{\beta_l}{\beta_l + q_l}(\nu_l - \xi_lh_l)
\end{eqnarray*}
Similarly we can find the optimal value to be:
\begin{eqnarray*}
\calM_{\frac{q_l}{\beta_l}\ \norm{\cdot}_2^2}(\nu_l - \xi_lh_l) = \frac{1}{2}\norm{\frac{\beta_l}{\beta_l + q_l}(\nu_l - \xi_lh_l)}_2^2 + \frac{\beta_l}{2q_l}\norm{\frac{q_l}{\beta_l + q_l}\nu_l -  \frac{q_l\xi_l}{\beta_l + q_l}h_l}_2^2
\end{eqnarray*}
This concentrates on its expected value, given by:
\begin{eqnarray*}
\bbE\calM_{\frac{q_l}{\beta_l}\ \norm{\cdot}_2^2}(\nu_l - \xi_lh_l) = \frac{n}{2}\frac{\beta_l}{\beta_l + q_l}(\sigma_{\nu, l}^2 + \xi_l^2)
\end{eqnarray*}

\paragraph{Regularization Function} We can find that the regularization function in this case take the form
\begin{eqnarray*}
    \calM_{A^{-1}\ R}\left(\theta^* - A^{-1}b\right) = \min_{\theta}\frac{\lambda}{2d}\norm{\theta}_2^2 + \frac{1}{2d}\left(\theta - \theta^*  +A^{-1}b\right)A(\theta - \theta^* + A^{-1}b)
\end{eqnarray*}
From this we can find that the optimal value for $\theta$ is given by:
\begin{eqnarray}\label{app:eq:l2reg_optimaltheta}
    \hat{\theta} = \mathrm{prox}_{A^{-1}\ R}\left(\theta^* - A^{-1}b\right) = (\lambda I + A)^{-1}\left(A\theta^* - b\right)
\end{eqnarray}
We also find that the optimal value in this case is given by:
\begin{eqnarray*}
    -\frac{1}{2d}\left(A\theta^* - b\right)\left(\lambda I + A\right)^{-1}\left(A\theta^* - b\right) + \frac{1}{2d}\theta^*A\theta^* + \frac{1}{d}\theta^{*T}b + \frac{1}{2d}b^TA^{-1}b
\end{eqnarray*}
We can note that this concentrates to:
\begin{eqnarray*}
    -\frac{1}{2d}\theta^{*T}\left(\frac{1}{k}\sum_{l=1}^k\frac{r_l}{\xi_l}\Sigma_l\right)\left(\lambda I + \frac{1}{k}\sum_{l=1}^k\frac{r_l}{\xi_l}\Sigma_l\right)^{-1}\left(\frac{1}{k}\sum_{l=1}^k\frac{r_l}{\xi_l}\Sigma_l\right)\theta^* \nwl
    -\frac{1}{2dk^2}\sum_{l=1}^k\frac{\beta_l^2d}{n}\tr\Sigma_l\left(\lambda I + \frac{1}{k}\sum_{l=1}^k\frac{r_l}{\xi_l}\Sigma_l\right)^{-1} + \frac{1}{2dk}\theta^*\left(\sum_{l=1}^k\frac{r_l}{\xi_l}\Sigma_l\right)\theta^* \nwl
    + \frac{1}{k^2}\sum_{l=1}^k\frac{\beta_l^2}{n}\tr\Sigma_l\left(\frac{1}{k}\sum_{l'=1}^k\frac{r_{l'}}{\xi_{l'}}\Sigma_{l'}\right)^{-1}
\end{eqnarray*}

As such we have as our total optimization problem:
\begin{eqnarray*}
     \min_{\xi, q\succeq 0}\max_{\beta,r\succeq 0} \sum_{l=1}^k \frac{\beta_l q_l}{2k} - \frac{\xi_lr_l}{2k} + \frac{n}{2}\frac{\beta_l}{\beta_l + q_l}(\sigma_{\nu, l}^2 + \xi_l^2)\nwl
     -\frac{1}{2dk^2}\theta^{*T}\left(\sum_{l=1}^k\frac{r_l}{\xi_l}\Sigma_l\right)\left(\lambda I + \frac{1}{k}\sum_{l=1}^k\frac{r_l}{\xi_l}\Sigma_l\right)^{-1}\left(\sum_{l=1}^k\frac{r_l}{\xi_l}\Sigma_l\right)\theta^* \nwl
    -\frac{1}{2dk^2}\sum_{l=1}^k\frac{\beta_l^2d}{n}\tr\Sigma_l\left(\lambda I + \frac{1}{k}\sum_{l=1}^k\frac{r_l}{\xi_l}\Sigma_l\right)^{-1} + \frac{1}{2dk}\theta^{*T}\left(\sum_{l=1}^k\frac{r_l}{\xi_l}\Sigma_l\right)\theta^*
\end{eqnarray*}

\paragraph{Generalization Error}
We note that the generalization error is given by:
\begin{eqnarray*}
    \bbE\frac{1}{2k}\sum_{l=1}^k\norm{y_{k, new} - \frac{1}{\sqrt{d}}g_{k, new}^T\Sigma^{1/2}_k\hat{\theta}}_2^2\nwl
    = \frac{1}{2k}\sum_k\sigma_{\nu, k}^2 -\frac{2}{d}\theta^{*T}\left(\sum_{l'=1}^k\frac{r_{l'}}{\xi_{l'}}\Sigma_{l'}\right)\left(\lambda I + \frac{1}{k}\sum_{l'=1}^k\frac{r_{l'}}{\xi_{l'}}\Sigma_{l'}\right)^{-1}\Sigma_l\theta^* + \frac{1}{d}\theta^{*T}\Sigma_l\theta^*
    \nwl
    + \frac{1}{d}\theta^{*T}\left(\sum_{l'=1}^k\frac{r_{l'}}{\xi_{l'}}\Sigma_{l'}\right)\left(\lambda I + \frac{1}{k}\sum_{l'=1}^k\frac{r_{l'}}{\xi_{l'}}\Sigma_{l'}\right)^{-1}\Sigma_l\left(\lambda I + \frac{1}{k}\sum_{l'=1}^k\frac{r_{l'}}{\xi_{l'}}\Sigma_{l'}\right)^{-1}\left(\sum_{l'=1}^k\frac{r_{l'}}{\xi_{l'}}\Sigma_{l'}\right)\theta^*\nwl
    +\frac{1}{2k^2}\frac{\beta_l^2}{n}\tr\Sigma_l\left(\lambda I + \frac{1}{k}\sum_{l'=1}^k\frac{r_{l'}}{\xi_{l'}}\Sigma_{l'}\right)^{-1}\Sigma_l\left(\lambda I + \frac{1}{k}\sum_{l'=1}^k\frac{r_{l'}}{\xi_{l'}}\Sigma_{l'}\right)^{-1}
\end{eqnarray*}
Where we have made use of the optimal value of $\hat{\theta}$ given in equation~\eqref{app:eq:l2reg_optimaltheta}.

\subsection{Binary Classification for GMMs with arbitrary covariance matrices}

The following trick will be of use for the purposes of the present section:

{\bf The square root trick. }{\it If $x > 0$, then its square root can be "linearized" as follows: $$ \sqrt{x} = \min_{\beta > 0} \frac{1}{2\beta} + \frac{\beta x}{2} $$}

\subsubsection{Proof of Theorem \ref{th: classification master}}
\begin{proof}
Recall the notation of Section \ref{subsec:bin_class}, and also define $M_1, M_2 \in \mathbb{R}^{\frac{n}{2} \times d}$ as 

$$M^T = 
    ( \underbrace{\begin{matrix}
       \mu_1 & ... &
    \mu_1
    \end{matrix}}_{\frac{n}{2}} \underbrace{\begin{matrix}
       \mu_2 & ... & 
    \mu_2 
    \end{matrix}}_{\frac{n}{2}}) = \left(M_1^T |  M_2^T \right)$$
     $$z^T = \begin{pmatrix}
        \mathds{1}^T_{\frac{n}{2}} & -\mathds{1}^T_{\frac{n}{2}} 
    \end{pmatrix}$$ 
    encode the means and the labels corresponding to each class respectively and the classifier if found via solving
\begin{equation*}
    \min_w \calL((A+M)w-z) + \lambda f(w)
\end{equation*}
Introducing $u:= (A+M)w-z$ and inserting a Lagrange multiplier, we have
\begin{equation*}
    \min_{w,u} \max_v v^TAw + v^TMw-v^Tz-v^T u + \calL(u)  + \lambda f(w)
\end{equation*}
Applying Theorem \ref{thm: gcgmt} to the objective above and denoting $v^T = (v_1^T, v_2^T)$ we arrive at the following:
\begin{equation*}
    \min_{w,u} \max_{v_1, v_2} v^TMw-v^Tz-v^T u + \calL(u) + \lambda f(w) + \sum^2_{i=1} \|v_i\|w^T\Sigma^{\frac{1}{2}}_ig_i + v_i^Th_i\|\Sigma^{\frac{1}{2}}_iw\|
\end{equation*}
Performing the optimization over $v_1, v_2$ yields:
\begin{equation*}
    \min_{w,u} \max_{\beta_1, \beta_2 \ge 0}  \calL(u) + \lambda f(w) + \sum^2_{i=1} \beta_i(w^T\Sigma^{\frac{1}{2}}_ig_i + \|h_i\|\Sigma^{\frac{1}{2}}_iw\|+M_i w - z_i - u_i\|)
\end{equation*}
Using the square root trick we obtain:
\begin{align*}
    &\min_{w,u} \max_{\beta_1, \beta_2 \ge 0} \min_{\tau_1, \tau_2 \ge 0} \calL(u) + \lambda f(w) + \sum^2_{i=1} \beta_i(w^T\Sigma^{\frac{1}{2}}_ig_i + \frac{\tau_i}{2} +  \\
    &+ \frac{1}{2\tau_i}(\frac{n}{2}\|\Sigma^{\frac{1}{2}}_iw\|^2+ \frac{n}{2}(\mu_i^T w)^2 + \frac{n}{2} + \|u_i\|^2 - 2 \|\Sigma^{\frac{1}{2}}_iw\| h_i^T u_i - n(1-2c)\mu_i^T w - 2 u_i^T M_i w + 2 u_i^T z_i))
\end{align*}
Regrouping some terms and swapping the min and the max:
\begin{align}
    & \min_{\tau_1, \tau_2 \ge 0} \max_{\beta_1, \beta_2 \ge 0} \sum_{i=1}^2 \beta_i \frac{\tau_i}{2} +  \min_{w,u} \calL(u) + \lambda f(w) + \sum^2_{i=1} \beta_i(w^T\Sigma^{\frac{1}{2}}_ig_i + \nonumber \\
    &+\frac{1}{2\tau_i}(\frac{n}{2}\|\Sigma^{\frac{1}{2}}_iw\|^2+ \|M_i w\|^2 + \|z_i\|^2 + \|u_i\|^2 + 2 \|\Sigma^{\frac{1}{2}}_iw\| h_i^T u_i - 2 z_i^T M_i w - 2 u_i^T M_i w + 2 u_i^T z_i))
    \label{eq: Lf}
\end{align}
By definitions of $M_1$ and $M_2$,
\begin{align*}
    & u_i^T M_i w = (\mu_i^Tw) (\mathds{1}^T u_i) \\
    & \|M_iw\|^2 = \frac{n}{2}(\mu_i^T w)^2
\end{align*}
Consider the inner optimization of \eqref{eq: Lf}:
\begin{align}
     &\min_{w,u} \calL(u) + \lambda f(w) + \sum^2_{i=1} \beta_i(w^T\Sigma^{\frac{1}{2}}_ig_i +
    \frac{1}{2\tau_i}(\frac{n}{2}\|\Sigma^{\frac{1}{2}}_iw\|^2+ \frac{n}{2}(\mu_i^T w)^2 + \frac{n}{2} + \|u_i\|^2 - \nonumber \\ 
    & - 2 \|\Sigma^{\frac{1}{2}}_iw\| h_i^T u_i  -n(1-2c)\mu_i^T w - 2 (\mu_i^Tw) (\mathds{1}^T u_i) + 2 u_i^T z_i)
    \label{eq: inner_opt}
\end{align}
We focus on optimizing over $u$ first:
\begin{align*}
    \min_u \calL(u) + \sum_{i=1}^2 \frac{\beta_i}{2\tau_i}( \|u_i\|^2 + 2 \|\Sigma^{\frac{1}{2}}_iw\| h_i^T u_i - 2(\mu_i^Tw) (\mathds{1}^T u_i) + 2 z_i^T u_i)
\end{align*}
Assuming $\calL$ is separable, we complete the squares and arrive at:
\begin{align*}
    \min_{u} \sum^2_{i=1} \calL(u_i) + \frac{\beta_i}{2\tau_i}( \| (\mu_i^Tw) \mathds{1} -h_i\|\Sigma^{\frac{1}{2}}_iw\| -z_i - u_i\|^2)
\end{align*}
Noting that the expression above is the Moreau Envelope of $\cal L$ by the definition \ref{def: scalMorenv}:
\begin{align*}
    \min_{\tau_1, \tau_2 \ge 0} \max_{\beta_1, \beta_2 \ge 0} \sum_{i=1}^2 \beta_i \frac{\tau_i}{2} + \min_w \lambda f(w) +  \sum^2_{i=1} \beta_iw^T\Sigma^{\frac{1}{2}}_ig_i + \calM_{\frac{\tau_i}{\beta_i} \ \calL} \left((\mu_i^Tw) \mathds{1} -h_i\|\Sigma^{\frac{1}{2}}_iw\| -z_i\right)
\end{align*}
Let $p_i=\Sigma^{\frac{1}{2}}_iw$ and $\gamma_i = \mu_i^T w$, then \eqref{eq: inner_opt} can be rewritten as:
\begin{align*}
    \min_{w, p, \gamma} \max_{\nu, \eta} \lambda f(w) +  \sum^2_{i=1} \beta_ip_i^Tg_i + \calM_{\frac{\tau_i}{\beta_i} \ \calL} \left(\gamma_i \mathds{1} -h_i\|p_i\| -z_i\right) + \nu_i^T(\Sigma^{\frac{1}{2}}_iw-p_i) + \eta_i(\mu_i^T w-\gamma_i)
\end{align*}
Performing the optimization over the direction of $p_i$ and denoting $\theta_i= \|p_i\|$:
\begin{align*}
    \min_{w, \theta_i \ge 0, \gamma} \max_{\nu, \eta} \lambda f(w) +  \sum^2_{i=1} -\theta_i\|\beta_ig_i-\nu_i\| +\calM_{\frac{\tau_i}{\beta_i} \ \calL} \left(\gamma_i \mathds{1} -\theta_i h_i -z_i\right) + \nu_i^T\Sigma^{\frac{1}{2}}_iw + \eta_i(\mu_i^T w-\gamma_i)
\end{align*}
Applying the square root trick again:
\begin{align*}
    & \min_{w, \theta_i \ge 0, \gamma, \zeta_i\ge 0} \max_{\nu, \eta} \lambda f(w) + \sum^2_{i=1} \calM_{\frac{\tau_i}{\beta_i} \ \calL} \left(\gamma_i \mathds{1} -\theta_i h_i -z_i\right)  -   \frac{\theta_i\zeta_i}{2} - \frac{\theta_i}{2\zeta_i}\|\beta_ig_i-\nu_i\|^2 +  
    \\ 
    & + \nu_i^T\Sigma^{\frac{1}{2}}_iw + \eta_i(\mu_i^T w-\gamma_i)
\end{align*}
Optimizing over $\nu_i$ yields:
\begin{align*}
    & \min_{w, \theta_i \ge 0, \gamma, \zeta_i\ge 0} \max_{ \eta} \lambda f(w) +\sum^2_{i=1} \calM_{\frac{\tau_i}{\beta_i} \ \calL} \left(\gamma_i \mathds{1} -\theta_i h_i -z_i\right) -  \frac{\theta_i\zeta_i}{2} - \frac{d\theta_i\beta_i^2}{2\zeta_i} \\ &+ \frac{\theta_i}{2\zeta_i}\|\beta_ig_i-\frac{\zeta_i}{\theta_i}\Sigma^{\frac{1}{2}}_iw\|^2   + \eta_i(\mu_i^T w-\gamma_i)
\end{align*}
Optimize over $w$:
\begin{align*}
    \min_w \lambda f(w) + \sum^2_{i=1}  \frac{\theta_i}{2\zeta_i}\|\beta_ig_i-\frac{\zeta_i}{\theta_i}\Sigma^{\frac{1}{2}}_iw\|^2 + \eta_i\mu_i^T w
\end{align*}
Opening up the norm:
\begin{align*}
    \min_w \lambda f(w) +  \frac{1}{2}w^T\left(\sum_{i=1}^{2} \frac{\zeta_i}{\theta_i}\Sigma_i\right)w - w^T\left(\sum_{i=1}^{2}\beta_i\Sigma_i^{\frac{1}{2}}g_i - \eta_i\mu_i\right) + \frac{d\theta_i\beta_i^2}{2\zeta_i}
\end{align*}
Define $\Sigma = \sum_{i=1}^{2} \frac{\zeta_i}{\theta_i}\Sigma_i$ and vector $x = \sum_{i=1}^{2}\beta_i\Sigma_i^{\frac{1}{2}}g_i - \eta_i\mu_i$, then the objective above can written in terms of the Moreau Envelope as:
\begin{align*}
    \min_w  \lambda \calM_{\lambda \ f(\Sigma^{-\frac{1}{2}})}\left(\Sigma^{-\frac{1}{2}}x\right)  - \frac{x^T\Sigma^{-1}x}{2} +  \frac{d\theta_i\beta_i^2}{2\zeta_i}
\end{align*}
Plugging everything back into \eqref{eq: Lf}, we derive:
\begin{align*}
    &\min_{ \tau_i \geq 0,\theta_i \ge 0, \zeta_i\ge 0,\gamma_i} \max_{ \beta_i \geq 0, \eta_i}  \lambda \calM_{\lambda \ f(\Sigma^{-\frac{1}{2}})}\left(\Sigma^{-\frac{1}{2}}x\right)- \frac{x^T\Sigma^{-1}x}{2}+\sum^2_{i=1} \beta_i \frac{\tau_i}{2} + \\ & + \calM_{\frac{\tau_i}{\beta_i} \ \calL} \left(\gamma_i \mathds{1} -\theta_i h_i -z_i\right)  -  \frac{\theta_i\zeta_i}{2} -\eta_i\gamma_i
\end{align*}
\end{proof}

\subsubsection{Proof of Theorem \ref{thm: classif_l2}}
\begin{proof}
Under the assumptions of Theorem \ref{thm: classif_l2}, the objective \eqref{eq:classification} turns into the following:
\begin{align*}
    \min_w \|(A+M)w-z\|^2 + \lambda \|w\|^2
\end{align*}
Taking the Fenchel dual of the first term we obtain: 
\begin{align*}
    \min_w \max_{v_1, v_2} v_1^T A_1 \Sigma^{\frac{1}{2}}_1 w +  v_2^T A_2 \Sigma^{\frac{1}{2}}_2 w + v_1^T M_1 w + v_2^T M_2 w - v_1^Tz_1 - v_2^T z_2 - \frac{1}{4} \|v_1\|^2 - \frac{1}{4} \|v_2\|^2 + \lambda \|w\|^2
\end{align*}
Applying Theorem \ref{thm: gcgmt} to the objective above: 
\begin{align*}
    \min_w \max_{v_1,v_2} \lambda \|w\|^2 + \sum^2_{i=1} \|v_i\|w^T\Sigma^{\frac{1}{2}}_ig_i + v_i^Th_i\|\Sigma^{\frac{1}{2}}_iw\| + v_i^T M_i w - v_i^T z_i - \frac{1}{4} \|v_i\|^2
\end{align*}
Optimizing over the direction of $v_i$ yields:
\begin{align*}
    \min_w \max_{\beta_1, \beta_2 \ge 0} \lambda \|w\|^2 + \sum^2_{i=1} \beta_i w^T\Sigma^{\frac{1}{2}}_ig_i + \beta_i \|M_i w + h_i\|\Sigma^{\frac{1}{2}}_iw \| -z_i\|  - \frac{\beta_i^2}{4}
\end{align*}
Using the square root trick and definition of $M_i$ and $z_i$:
\begin{align*}
    \min_w \max_{\beta_1, \beta_2 \ge 0} \min_{\tau_1, \tau_2 \ge 0} \lambda \|w\|^2 + \sum^2_{i=1} \beta_i w^T\Sigma^{\frac{1}{2}}_ig_i + \frac{\beta_i \tau_i}{2} + \frac{\beta_i}{2\tau_i}( \|M_i w - z_i\|^2 + \frac{n}{2} w^T \Sigma_i w ) - \frac{\beta_i^2}{4}
\end{align*}
Introducing $\gamma_i$'s:
\begin{align*}
    \min_w \max_{\beta_1, \beta_2 \ge 0, \gamma_1, \gamma_2} \min_{\tau_1, \tau_2 \ge 0} \lambda \|w\|^2 + \sum^2_{i=1} \beta_i w^T\Sigma^{\frac{1}{2}}_ig_i + \frac{\beta_i \tau_i}{2} + \frac{n\beta_i}{4\tau_i}( \gamma_i(\mu_i^Tw - (-1)^{i+1}) -\frac{\gamma_i^2}{4}+  \|\Sigma^{\frac{1}{2}}_iw\|^2) - \frac{\beta_i^2}{4}
\end{align*}
Changing orders of optimization:
\begin{align*}
    \min_{\tau_1, \tau_2 \ge 0} \max_{\beta_1, \beta_2 \ge 0, \gamma_1, \gamma_2} \sum_{i=1}^2 \frac{\beta_i \tau_i}{2} - \frac{n\beta_i \gamma_i^2}{16\tau_i} - \frac{n \beta_i \gamma_i (-1)^{i+1}}{4\tau_i} - \frac{\beta_i^2}{4} +  \\  +\min_w \left[\sum_{i=1}^2 \beta_i \Sigma^{\frac{1}{2}}_ig_i  + \frac{n\beta_i \gamma_i}{4\tau_i} \mu_i\right]^Tw + w^T (\lambda I + \frac{n\beta_1}{4\tau_1} \Sigma_1 + \frac{n\beta_2}{4\tau_2} \Sigma_2) w
\end{align*}
Optimizing over $w$ we are left with the following: 
\begin{align}
   & \min_{\tau_1, \tau_2 \ge 0} \max_{\beta_1, \beta_2 \ge 0, \gamma_1, \gamma_2} -\frac{1}{4} \left[\sum_{i=1}^2 \beta_i \Sigma^{\frac{1}{2}}_ig_i  + \frac{n\beta_i \gamma_i}{4\tau_i} \mu_i\right]^T (\lambda I + \frac{n\beta_1}{4\tau_1} \Sigma_1 + \frac{n\beta_2}{4\tau_2} \Sigma_2)^{-1} \left[\sum_{i=1}^2 \beta_i \Sigma^{\frac{1}{2}}_ig_i  + \frac{n\beta_i \gamma_i}{4\tau_i} \mu_i\right] + \nonumber \\
   & + \sum_{i=1}^2 \frac{\beta_i \tau_i}{2} - \frac{n\beta_i \gamma_i^2}{16\tau_i} - \frac{n \beta_i \gamma_i (-1)^{i+1}}{4\tau_i} - \frac{\beta_i^2}{4}
   \label{eq: obj_l2l2_unsimplified}
\end{align}
Due to our assumptions on $\mu_i$'s we have:
\begin{align*}
    \sum_{i=1}^2 \beta_i \Sigma^{\frac{1}{2}}_ig_i  + \frac{n\beta_i \gamma_i}{4\tau_i} \mu_i \sim \calN(0, \frac{n^2 \beta_1 \beta_2 \gamma_1 \gamma_2}{8 \tau_1 \tau_2}rI+\sum_{i=1}^2 \beta_i^2 \Sigma_i + \frac{n^2 \beta_i^2 \gamma_i^2}{16 \tau_i^2} I) \\
\end{align*}
In other words,
\begin{align*}
    \sum_{i=1}^2 \beta_i \Sigma^{\frac{1}{2}}_ig_i  + \frac{n\beta_i \gamma_i}{4\tau_i} \mu_i \sim  (\frac{n^2 \beta_1 \beta_2 \gamma_1 \gamma_2}{8 \tau_1 \tau_2}rI+\sum_{i=1}^2 \beta_i^2 \Sigma_i + \frac{n^2 \beta_i^2 \gamma_i^2}{16 \tau_i^2} I)^{\frac{1}{2}} G \text{ where }G \sim \calN (0, I)
\end{align*}
Simplify the first term of \eqref{eq: obj_l2l2_unsimplified} via leveraging the corresponding Gaussian concentration:
\begin{align*}
     &\left[\sum_{i=1}^2 \beta_i \Sigma^{\frac{1}{2}}_ig_i  + \frac{n\beta_i \gamma_i}{4\tau_i} \mu_i\right]^T (\lambda I + \frac{n\beta_1}{4\tau_1} \Sigma_1 + \frac{n\beta_2}{4\tau_2} \Sigma_2)^{-1} \left[\sum_{i=1}^2 \beta_i \Sigma^{\frac{1}{2}}_ig_i  + \frac{n\beta_i \gamma_i}{4\tau_i} \mu_i\right]  = \\
     & = \tr  \left ( \bbE \left[\sum_{i=1}^2 \beta_i \Sigma^{\frac{1}{2}}_ig_i  + \frac{n\beta_i \gamma_i}{4\tau_i} \mu_i\right]^T (\lambda I + \frac{n\beta_1}{4\tau_1} \Sigma_1 + \frac{n\beta_2}{4\tau_2} \Sigma_2)^{-1} \left[\sum_{i=1}^2 \beta_i \Sigma^{\frac{1}{2}}_ig_i  + \frac{n\beta_i \gamma_i}{4\tau_i} \mu_i\right]\right) = \\ 
     & = \tr \left( \bbE \left[\sum_{i=1}^2 \beta_i \Sigma^{\frac{1}{2}}_ig_i  + \frac{n\beta_i \gamma_i}{4\tau_i} \mu_i\right]\left[\sum_{i=1}^2 \beta_i \Sigma^{\frac{1}{2}}_ig_i  + \frac{n\beta_i \gamma_i}{4\tau_i} \mu_i\right]^T (\lambda I + \frac{n\beta_1}{4\tau_1} \Sigma_1 + \frac{n\beta_2}{4\tau_2} \Sigma_2)^{-1} \right)  = \\
    & = \tr \left ( (\frac{n^2 \beta_1 \beta_2 \gamma_1 \gamma_2}{8 \tau_1 \tau_2}rI+\sum_{i=1}^2 \beta_i^2 \Sigma_i + \frac{n^2 \beta_i^2 \gamma_i^2}{16 \tau_i^2} I) (\lambda I + \frac{n\beta_1}{4\tau_1} \Sigma_1 + \frac{n\beta_2}{4\tau_2} \Sigma_2)^{-1} \right) = \\
     & = \tr \left((\lambda I + \frac{n\beta_1}{4\tau_1} \Sigma_1 + \frac{n\beta_2}{4\tau_2} \Sigma_2)^{-1} (\frac{n^2 \beta_1 \beta_2 \gamma_1 \gamma_2}{8 \tau_1 \tau_2}rI+\sum_{i=1}^2 \beta_i^2 \Sigma_i + \frac{n^2 \beta_i^2 \gamma_i^2}{16 \tau_i^2} I) \right)
\end{align*}
Plugging everything back into \eqref{eq: obj_l2l2_unsimplified} we deduce: 
\begin{align*}
     &\min_{\tau_1, \tau_2 \ge 0} \max_{\beta_1, \beta_2 \ge 0, \gamma_1, \gamma_2} -\frac{1}{4}  \tr \left((\lambda I + \frac{n\beta_1}{4\tau_1} \Sigma_1 + \frac{n\beta_2}{4\tau_2} \Sigma_2)^{-1} (\frac{n^2 \beta_1 \beta_2 \gamma_1 \gamma_2}{8 \tau_1 \tau_2}rI+\sum_{i=1}^2 \beta_i^2 \Sigma_i + \frac{n^2 \beta_i^2 \gamma_i^2}{16 \tau_i^2} I)\right) + \\
     &+\sum_{i=1}^2 \frac{\beta_i \tau_i}{2} - \frac{n\beta_i \gamma_i^2}{16\tau_i} - \frac{n \beta_i \gamma_i (-1)^{i+1}}{4\tau_i} - \frac{\beta_i^2}{4}
\end{align*}
\end{proof}
\subsubsection{Classification Error}
An attentive reader might have noticed that we held up the classification error derivations to this point. Recall that the classification error for a weight $\hat{w}$ and a general GMM is given by: 
\begin{equation*}
   E(\hat w) = \frac{1}{2}Q\left(\frac{\mu_1^T\hat{w}}{\sqrt{\hat{w}^T \Sigma_1 \hat{w}}}\right) + \frac{1}{2}Q\left(-\frac{\mu_2^T\hat{w}}{ \sqrt{\hat{w}^T \Sigma_2 \hat{w}}}\right) 
\end{equation*}
The proof is a straightforward calculation but we provide it here for completeness:  

\begin{proof} Note that 
    $$E(\hat{w}) = \frac{1}{2}\mathbb{P}_{x \sim \mathcal{N}(\mu_1, \Sigma_1)}(\hat{w}^Tx < 0) + \frac{1}{2}\mathbb{P}_{x \sim \mathcal{N}(\mu_2, \Sigma_2)}(\hat{w}^Tx > 0)$$
    Subtracting the means from the sample point we have: $x = \mu_1 + \eta_1$ for $x \sim \mathcal{N}(\mu_1, \Sigma_1)$ and $x = \mu_2 + \eta_2$ for $x \sim \mathcal{N}(\mu_2, \Sigma_2)$ we obtain:
    $$E(\hat{w}) = \frac{1}{2}\mathbb{P}_{\eta_1 \sim \mathcal{N}(0, \Sigma_1)}(\hat{w}^Ty_1 < -\mu_1^T\hat{w}) + \frac{1}{2}\mathbb{P}_{\eta_2 \sim \mathcal{N}(0, \Sigma_2)}(\hat{w}^T\eta_2 > - \mu_2^T\hat{w})$$
    Noticing that $\hat{w}^T\eta_1 \sim \mathcal{N}(0, w^T\Sigma_1w)$ and $\hat{w}^T\eta_2 \sim \mathcal{N}(0, w^T\Sigma_2w)$ we derive:
\begin{align*}
    & E(\hat{w}) = \frac{1}{2}\mathbb{P}_{\zeta_1 \sim \mathcal{N}(0, \hat{w}^T\Sigma_1\hat{w})}(\zeta_1 < -\mu_1^T\hat{w}) + \frac{1}{2}\mathbb{P}_{\zeta_2 \sim \mathcal{N}(0, \hat{w}^T\Sigma_2\hat{w})}(\zeta_2 > - \mu_2^T\hat{w}) = \\
    & = \frac{1}{2}\mathbb{P}_{\xi_1 \sim \mathcal{N}(0, 1)}(\xi_1 < - \frac{\mu_1^T\hat{w}}{\sqrt{\hat{w}^T \Sigma_1 \hat{w}}}) + \frac{1}{2}\mathbb{P}_{\xi_2 \sim \mathcal{N}(0, 1)}(\xi_2 > -\frac{\mu_2^T\hat{w}}{ \sqrt{\hat{w}^T \Sigma_2 \hat{w}}}) = \\
    & =  \frac{1}{2}Q\left(\frac{\mu_1^T\hat{w}}{\sqrt{\hat{w}^T \Sigma_1 \hat{w}}}\right) + \frac{1}{2}Q\left(-\frac{\mu_2^T\hat{w}}{ \sqrt{\hat{w}^T \Sigma_2 \hat{w}}}\right) 
\end{align*}
\end{proof}
The classification error part for Theorem \ref{th: classification master} follows immediately then by the definitions of $\gamma_i$ and $\Theta_i$ for that case. We now proceed to derive the expressions for the classification error for Theorem \ref{thm: classif_l2} and Corollary \ref{cor: rk_1_noise}. Note that by definition
\begin{align*}
    & \gamma_i = 2 (\mu_i^T w - (-1)^{i+1}) \\
    &  \tau_i^2 = \frac{n}{2} ( \frac{\gamma_i^2}{4} +  \|\Sigma^{\frac{1}{2}}_i w\|^2 ) = \frac{n}{2} ( \frac{\gamma_i^2}{4} +  w^T \Sigma_i w ) 
\end{align*}
Therefore:
\begin{align*}
    \frac{1}{2} Q\left(\frac{\mu_1^{T}w}{\sqrt{w^T\Sigma_1w}}\right) +  \frac{1}{2} Q\left(-\frac{\mu_2^{T}w}{\sqrt{w^T\Sigma_2w}}\right)= \frac{1}{2} Q\left(\frac{\gamma_1+2}{\sqrt{\frac{8}{n} \tau_1^2 - \gamma_1^2}}\right) + \frac{1}{2} Q\left(\frac{-\gamma_2+2}{\sqrt{\frac{8}{n} \tau_2^2 - \gamma_2^2}}\right)
\end{align*}

\subsection{Proof of Corollary \ref{cor: rk_1_noise}}
\begin{proof}
Recall the model: 
\begin{align*}
    & \Sigma_1 = \sigma_1^2I + \nu_1 \nu_1^T  \\
    & \Sigma_2 = \sigma_2^2I + \nu_2 \nu_2^T \\
    & \text{ where } \nu_1, \nu_2 \in \mathcal{N}(0, \sigma^2I)
\end{align*}
Denote the space spanned by $\nu_1$ and $\nu_2$ by $V$ and also denote 
$$T = \lambda I + \frac{n\beta_1}{4\tau_1} \Sigma_1 + \frac{n\beta_2}{4\tau_2} \Sigma_2 $$
Note that both $V$ and $V^{\perp}$ form invariant subspaces of the operator $T$. Also, since we are interested in the asymptotic regime $d \to \infty$, we can take $\|\nu_1\| = \|\nu_2\| \approx \sigma^2d$ and $\nu_1^T \nu_2 \approx 0$ due to the Gaussian concentration. Therefore, we can write:
\begin{align*}
    &T\nu_1 = (\lambda + \frac{n\beta_1}{4\tau_1}(\sigma_1^2 + \|\nu_1\|^2) + \frac{n\beta_2}{4\tau_2}\sigma_2^2 )\nu_1  \\
    & T\nu_2 = (\lambda + \frac{n\beta_1}{4\tau_1} \sigma_1^2+\frac{n\beta_2}{4\tau_2}(\sigma_2^2 + \|\nu_2\|^2)) \nu_2 
\end{align*}
Decomposing $\mathbb{R}^d = V^{\perp} \oplus \nu_1 \oplus \nu_2$ and using that for any operator $F$ one in this case has $\tr F = \tr F_{|V^{\perp}} + \tr F_{|\nu_1} + \tr F_{|\nu_2}$, we derive:
\begin{align*}
     & \tr \left[(\lambda I + \frac{n\beta_1}{4\tau_1} \Sigma_1 + \frac{n\beta_2}{4\tau_2} \Sigma_2)^{-1} (\frac{n^2 \beta_1 \beta_2 \gamma_1 \gamma_2}{8 \tau_1 \tau_2}rI+\sum_{i=1}^2 \beta_i^2 \Sigma_i + \frac{n^2 \beta_i^2 \gamma_i^2}{16 \tau_i^2} I)\right]= \\
     & = \tr \left[(\lambda  + \frac{n\beta_1}{4\tau_1} \sigma^2_1 + \frac{n\beta_2}{4\tau_2} \sigma^2_2 ) ^{-1} (\frac{n^2 \beta_1 \beta_2 \gamma_1 \gamma_2}{8 \tau_1 \tau_2}r+\sum_{i=1}^2 \beta_i^2 \sigma^2_i + \frac{n^2 \beta_i^2 \gamma_i^2}{16 \tau_i^2}) I\right]_{|V_{\perp}} + \\
     & + (\lambda  + \frac{n\beta_1}{4\tau_1} (\sigma^2_1 + \sigma^2d) + \frac{n\beta_2}{4\tau_2} \sigma^2_2 ) ^{-1} (\frac{n^2 \beta_1 \beta_2 \gamma_1 \gamma_2}{8 \tau_1 \tau_2}r+ \beta_1^2 (\sigma^2_1 + \sigma^2d) + \beta_2^2 \sigma^2_2 + \frac{n^2 \beta_i^2 \gamma_i^2}{16 \tau_i^2}) + \\
     & + (\lambda  + \frac{n\beta_1}{4\tau_1} \sigma^2_1  + \frac{n\beta_2}{4\tau_2} (\sigma^2_2 + \sigma^2d)) ^{-1} (\frac{n^2 \beta_1 \beta_2 \gamma_1 \gamma_2}{8 \tau_1 \tau_2}r+ \beta_1^2 \sigma^2_1 + \beta_2^2 (\sigma^2_2+ \sigma^2d ) + \frac{n^2 \beta_i^2 \gamma_i^2}{16 \tau_i^2}) = \\
     & = (d-2)(\lambda  + \frac{n\beta_1}{4\tau_1} \sigma^2_1 + \frac{n\beta_2}{4\tau_2} \sigma^2_2 ) ^{-1} (\frac{n^2 \beta_1 \beta_2 \gamma_1 \gamma_2}{8 \tau_1 \tau_2}r+\sum_{i=1}^2 \beta_i^2 \sigma^2_i + \frac{n^2 \beta_i^2 \gamma_i^2}{16 \tau_i^2}) + \\
     & + (\lambda  + \frac{n\beta_1}{4\tau_1} (\sigma^2_1 + \sigma^2d) + \frac{n\beta_2}{4\tau_2} \sigma^2_2 ) ^{-1} (\frac{n^2 \beta_1 \beta_2 \gamma_1 \gamma_2}{8 \tau_1 \tau_2}r+ \beta_1^2\sigma^2d+ \sum_{i=1}^2 \beta_i^2 \sigma^2_i + \frac{n^2 \beta_i^2 \gamma_i^2}{16 \tau_i^2}) \\
     & + (\lambda  + \frac{n\beta_1}{4\tau_1} \sigma^2_1  + \frac{n\beta_2}{4\tau_2} (\sigma^2_2 + \sigma^2d)) ^{-1} (\frac{n^2 \beta_1 \beta_2 \gamma_1 \gamma_2}{8 \tau_1 \tau_2}r+ \beta_2^2 \sigma^2d + \sum_{i=1}^2 \beta_i^2 \sigma^2_i + \frac{n^2 \beta_i^2 \gamma_i^2}{16 \tau_i^2}) 
\end{align*}
\end{proof}
\begin{remark}
    One might ask where exactly we used the fact that $\nu_1$ and $\nu_2$ are independent from $\mu_1$ and $\mu_2$ in the derivations above. The answer is that we cannot take the expectation over $\mu_1$ and $\mu_2$ first to get the expression with the trace we used if $\nu_1$ and $\nu_2$ depend on $\mu_1$ and $\mu_2$.  
\end{remark}
\newpage
\bibliography{bibliography}

\end{document}